\let\classAND\AND
\let\AND\relax
\let\AND\classAND
\newtheorem{theorem}{Theorem}
\newtheorem{definition}{Definition}
\newtheorem{lemma}{Lemma}
\newtheorem{corollary}{Corollary}
\newenvironment{sproof}{%
  \proof}{\endproof}
\newtheorem{remark}{Remark}
\newcommand{\Prob}{\mathbb{P}}
\newcommand{\E}{\mathbb{E}}
\def\E{\mathbb{E}}
\newcommand*\circled[1]{\tikz[baseline=(char.base)]{
            \node[shape=circle,draw,inner sep=0.7pt] (char) {#1};}}
\DeclareMathOperator{\Tr}{Tr}
\title{Nonstationary Reinforcement Learning with Linear Function Approximation}
\author{\name Huozhi Zhou \email hzhou35@illinois.edu \\
    \addr Department of Electrical and Computer Engineering \\
    University of Illinois Urbana-Champaign
    \AND
    \name Jinglin Chen \email jinglinc@illinois.edu\\
    \addr Department of Computer Science\\
    University of Illinois  Urbana-Champaign
    \AND
    \name Lav R.\ Varshney \email varshney@illinois.edu\\
    \addr Department of Electrical and Computer Engineering \\
    University of Illinois  Urbana-Champaign
    \AND
    \name Ashish Jagmohan \email ashishja@us.ibm.com\\
    \addr IBM Research
    }
\begin{document}
\maketitle
\begin{abstract}
We consider reinforcement learning (RL) in episodic Markov decision processes (MDPs) with linear function approximation under drifting environment. Specifically, both the reward and state transition functions can evolve over time but their total variations do not exceed a \textit{variation budget}. We first develop $\texttt{LSVI-UCB-Restart}$ algorithm, an optimistic modification of least-squares value iteration with periodic restart, and bound its dynamic regret when variation budgets are known. Then we propose a parameter-free algorithm \texttt{Ada-LSVI-UCB-Restart} that extends to unknown variation budgets. We also derive the first minimax dynamic regret lower bound for nonstationary linear MDPs and as a byproduct establish a minimax regret lower bound for linear MDPs unsolved by~\citet{jin2020provably}. Finally, we provide numerical experiments to demonstrate the effectiveness of our proposed algorithms. 
\end{abstract}

\section{Introduction}
\label{sec:intro}
Reinforcement learning (RL) is a core control problem in which an agent sequentially interacts with an unknown environment to maximize its cumulative reward~\citep{sutton2018reinforcement}. RL finds enormous applications in real-time bidding in advertisement auctions~\citep{cai2017real}, autonomous driving~\citep{shalev2016safe}, gaming-AI~\citep{silver2018general}, and inventory control~\citep{agrawal2019learning}, among others. Due to the large dimension of sequential decision-making problems that are of growing interest, classical RL algorithms designed for finite state space such as \texttt{tabular Q-learning}~\citep{watkins1992q} no longer yield satisfactory performance. Recent advances in RL rely on function approximators such as deep neural nets to overcome the curse of dimensionality, i.e., the value function is approximated by a function which is able to predict the value function for unseen state-action pairs given a few training samples. This function approximation technique has achieved remarkable success in various large-scale decision-making problems such as playing video games~\citep{mnih2015human}, the game of Go~\citep{silver2017mastering}, and robot control~\citep{akkaya2019solving}. Motivated by the empirical success of RL algorithms with function approximation, there is growing interest in developing RL algorithms with function approximation that are statistically efficient
~\citep{yang2019sample,cai2019provably,jin2020provably,modi2020sample,wang2020provably,wei2020learning,neu2020online, jiang2017contextual, wang2020provably, jin2021bellman, du2021bilinear}. The focus of this line of work is to develop statistically efficient algorithms with function approximation for RL in terms of either \textit{regret} or \textit{sample complexity}.  Such efficiency is especially crucial in data-sparse applications such as medical trials~\citep{zhao2009reinforcement}.

However, all of the aforementioned empirical and theoretical works on RL with function approximation assume the environment is stationary, which is insufficient to model problems with time-varying dynamics. For example, consider online advertising. The instantaneous reward is the payoff when viewers are redirected to an advertiser, and the state is defined as the the details of the advertisement and user contexts. If the target users' preferences are time-varying, time-invariant reward and transition function are unable to capture the dynamics. In general nonstationary random processes naturally occur in many settings and are able to characterize larger classes of problems of interest~\citep{cover1989gaussian}. Can one design a theoretically sound algorithm for large-scale nonstationary MDPs? In general it is impossible to design algorithm to achieve sublinear regret for MDPs with non-oblivious adversarial reward and transition functions in the worst case~\citep{yu2009markov}. Then what is the maximum \textit{nonstationarity} a learner can tolerate to adapt to the time-varying dynamics of an MDP with potentially infinite number of states? This paper addresses these two questions. 

We consider the setting of episodic RL with nonstationary reward and transition functions. To measure the performance of an algorithm, we use the notion of \textit{dynamic regret}, the performance difference between an algorithm and the set of policies optimal for individual episodes in hindsight. For nonstationary RL, dynamic regret is a stronger and more appropriate notion of performance measure than static regret, but is also more challenging for algorithm design and analysis. To incorporate function approximation, we focus on a subclass of MDPs in which the reward and transition dynamics are linear in a known feature map~\citep{melo2007q}, termed \textit{linear MDP}. For any linear MDP, the value function of any policy is linear in the known feature map since the Bellman equation is linear in reward and transition dynamics~\citep{jin2020provably}. Since the optimal policy is greedy with respect to the optimal value function, linear function approximation suffices to learn the optimal policy. For nonstationary linear MDPs, we show that one can design a near-optimal statistically-efficient algorithm to achieve sublinear dynamic regret as long as the total variation of reward and transition dynamics is sublinear. Let $T$ be the total number of time steps, $B$ be the total variation of reward and transition function throughout the entire time horizion, $d$ be the ambient dimension of the features, and $H$ be the planning horizon. 

The contribution of our work is summarized as follows.
\begin{itemize}[leftmargin=*, itemsep=0pt]
    \item We prove a $\Omega(B^{1/3}d^{2/3}H^{1/3}T^{2/3})$ minimax regret lower bound for nonstationary linear MDP, which shows that it is impossible for any algorithm to achieve sublinear regret on any nonstationary linear MDP with total variation linear in $T$. As a byproduct, we also derive the minimax regret lower bound for stationary linear MDP on the order of $\Omega(d\sqrt{HT})$, which is unsolved in~\citet{jin2020provably}.
    \item We develop the \texttt{LSVI-UCB-Restart} algorithm and analyze the dynamic regret bound for both cases that local variations are known or unknown, assuming the total variations are known. We define local variations (Eq. \eqref{eqn:local_var}) as the change in the environment between two consecutive epochs instead of the total changes over the entire time horizon. When local variations are known, \texttt{LSVI-UCB-Restart} achieves $\tilde{O}(B^{1/3}d^{4/3}H^{4/3}T^{2/3})$ dynamic regret, which matches the lower bound in $B$ and $T$, up to polylogarithmic factors. When local variations are unknown, \texttt{LSVI-UCB-Restart} achieves $\tilde{O}(B^{1/4}d^{5/4}H^{5/4}T^{3/4})$ dynamic regret. 
    \item We propose a parameter-free algorithm called \texttt{Ada-LSVI-UCB-Restart}, an adaptive version of \texttt{LSVI-UCB-Restart}, and prove that it can achieve $\tilde{O}(B^{1/4}d^{5/4}H^{5/4}T^{3/4})$ dynamic regret without knowing the total variations.
    \item We conduct numerical experiments on synthetic nonstationary linear MDPs to demonstrate the effectiveness of our proposed algorithms. 
\end{itemize}

\subsection{Related Works}

\paragraph{Nonstationary bandits} Bandit problems can be viewed as a special case of MDP problems with unit planning horizon. It is the simplest model that captures the exploration-exploitation tradeoff, a unique feature of sequential decision-making problems. There are several ways to define nonstationarity in the bandit literature. The first one is piecewise-stationary~\citep{garivier2011upper}, which assumes the expected rewards of arms change in a piecewise manner, i.e., stay fixed for a time period and abruptly change at unknown time steps. The second one is to quantify the total variations of expected rewards of arms~\citep{besbes2014stochastic}. The general strategy to adapt to nonstationarity 
for bandit problems is the forgetting principle: run the algorithm designed for stationary bandits either on a sliding window or in small epochs. This seemingly simple strategy is successful in developing near-optimal algorithms for many variants of nonstationary bandits, such as cascading bandits~\citep{wang2019aware}, combinatorial semi-bandits~\citep{zhou2020near} and linear contextual bandits~\citep{cheung2019learning, zhao2020simple, russac2019weighted}. Other nonstationary bandit models include the nonstationary rested bandit, where the reward of each arm changes only when that arm is pulled~\citep{cortes2017discrepancy}, and online learning with expert advice~\citep{mohri2017online,mohri2017onlinea}, where the qualities of experts are time-varying. However, reinforcement learning is much more intricate than bandits. Note that na\"ively adapting existing nonstationary bandit algorithms to nonstationary RL leads to regret bounds with exponential dependence on the planing horizon $H$.

\paragraph{RL with function approximation}
Motivated by empirical success of deep RL, there is a recent line of work analyzing the theoretical performance of RL algorithms with function approximation \citep{yang2019sample,cai2019provably,jin2020provably,modi2020sample,ayoub2020model,wang2020provably,zhou2021provably,wei2020learning,neu2020online,huang2021towards,modi2021model,jiang2017contextual,agarwal2020flambe, dong2020root,jin2021bellman,du2021bilinear,foster2021statistical,chen2022statistical}. Recent work also studies the instance-dependent sample complexity bound for RL with function approximation, which adapts to the complexity of the specific MDP instance~\citep{foster2020instance,dong2022asymptotic}. All of these works assume that the learner is interacting with a stationary environment. 
In sharp contrast, this paper considers learning in a nonstationary environment. As we will show later, if we do not properly adapt to the nonstationarity, linear regret is incurred. 

\paragraph{Nonstationary RL}
The last relevant line of work is on dynamic regret analysis of nonstationary MDPs mostly without function approximation~\citep{auer2009near,ortner2020variational,cheung2019learning,fei2020dynamic,cheung2019non}. The work of~\citet{auer2009near} considers the setting in which the MDP is piecewise-stationary and allowed to change in $l$ times for the reward and transition functions. They show that \texttt{UCRL2} with restart achieves $\tilde{O}(l^{1/3}T^{2/3})$ dynamic regret, where $T$ is the time horizon. Later works~\citep{ortner2020variational,cheung2019non,fei2020dynamic} generalize the nonstationary setting to allow reward and transition functions vary for any number of time steps, as long as the total variation is bounded. Specifically, the work of~\citep{ortner2020variational} proves that \texttt{UCRL} with restart achieves $\tilde{O}((B_r+B_p)^{1/3}T^{2/3})$ dynamic regret (when the variation in each epoch is known), where $B_r$ and $B_p$ denote the total variation of reward and transition functions over all time steps. \citet{cheung2019non} proposes an algorithm based on \texttt{UCRL2} by combining sliding windows and a confidence widening technique. Their algorithm has slightly worse dynamic regret bound $\tilde{O}((B_r+B_p)^{1/4}T^{3/4})$ without knowing the local variations. Further, \cite{fei2020dynamic} develops an algorithm which directly optimizes the policy and enjoys near-optimal regret in the low-variation regime. A different model of nonstationary MDP is proposed by~\citet{lykouris2019corruption}, which smoothly interpolates between stationary and adversarial environments, by assuming that most episodes are stationary except for a small number of adversarial episodes. Note that \citet{lykouris2019corruption} considers linear function approximation, but their nonstationarity assumption is different from ours. In this paper, we assume the variation budget for reward and transition function is bounded, which is similar to the settings in~\citet{ortner2020variational, cheung2019non, mao2020near}. Concurrently to our work, \citet{touati2020efficient} propose an algorithm combining weighted least-squares value iteration and the optimistic principle, achieving the same $\tilde{O}(B^{1/4}d^{5/4}H^{5/4}T^{3/4})$ regret as we do with  knowledge of the total variation $B$. They do not have a dynamic regret bound when the knowledge of local variations is available. Their proposed algorithm uses exponential weights to smoothly forget data that are far in the past. By contrast, our algorithm periodically restarts the \texttt{LSVI-UCB} algorithm from scratch to handle the non-stationarity and is much more computationally efficient. Another concurrent work by~\citet{wei2021non} follows a substantially different approach to achieve the optimal $T^{2/3}$ regret. The key idea of their algorithm is to run multiple base algorithms for stationary instances with different duration simultaneously, under a carefully designed random schedule. Compared with them, our algorithm has a slightly worse rate, but a much better computational complexity, since we only require to maintain one instance of the base algorithm. Both of these two concurrent works do not have empirical results, and we are also the first one to conduct numerical experiments on online exploration for non-stationary MDPs (Section~\ref{sec:exp}). Other related and concurrent works investigate online exploration in different classes of non-stationary MDPs, including linear kernal MDP~\citep{zhong2021optimistic}, constrained tabular MDP~\citep{ding2022provably}, and stochastic shorted path problem~\citep{chen2022near}. 

The rest of the paper is organized as follows. 
Section~\ref{sec:pf} presents our problem definition. Section~\ref{sec:reg_lb} establishes the minimax regret lower bound for nonstationary linear MDPs. Section~\ref{sec:alg1} and Section~\ref{sec:alg2} present our algorithms \texttt{LSVI-UCB-Restart}, \texttt{Ada-LSVI-UCB-Restart} and their dynamic regret bounds. Section~\ref{sec:exp} shows our experiment results. Section~\ref{sec:conclusion} concludes the paper and discusses some future directions. All detailed proofs can be found in Appendices. 

\paragraph{Notation} We use $\langle\cdot, \cdot\rangle$ to denote inner products in Euclidean space, $\left\lVert \bm{v} \right\rVert_2$ to denote the $L_2$ norm of vector $\bm{v}$, and $\left\lVert\bm{v}\right\rVert_{\Lambda}$ to denote the norm induced by a positive definite matrix $A$ for vector $\bm{v}$, i.e., $\left\lVert\bm{v}\right\rVert_{\Lambda}=\sqrt{\bm{v}^\top\Lambda\bm{v}}$. For an integer $N$, we denote the set of positive integers $\{1, 2,\ldots, N\}$ as $[N]$.

\section{Preliminaries}
\label{sec:pf}
We consider the setting of a nonstationary finite-horizon episodic Markov decision process (MDP), specified by a tuple $(\mathcal{S}, \mathcal{A}, H, K, \mathbb{P}=\{\mathbb{P}_{h}^{k}\}_{h\in[H], k\in[K]}, r=\{r_h^k\}_{h\in[H], k\in[K]})$,  where the set $\mathcal{S}$ is the collection of states, $\mathcal{A}$ is the collection of actions, $H$ is the length of each episode, $K$ is the total number of episodes, and $\mathbb{P}$ and $r$ are the transition kernel and deterministic reward functions respectively. Moreover, $\mathbb{P}_h^k(\cdot|s,a)$ denotes the transition kernel over the next states if the action $a$ is taken for state $s$ at step $h$ in the $k$-th episode, and $r_h^k:\mathcal{S}\times\mathcal{A}\rightarrow[0, 1]$ is the deterministic reward function at step $h$ in the $k$-th episode. Note that we are considering a nonstationary setting, thus we assume the transition kernel $\Prob$ and reward function $r$ may change in different episodes. We will explicitly quantify the nonstationarity later. 

The learning protocol proceeds as follows. In episode $k$, the initial state $s_1^k$ is chosen by an adversary. Then at each step $h\in[H]$, the agent observes the current state $s_h\in\mathcal{S}$, takes an action $a_h\in\mathcal{A}$, receives an instantaneous reward $r_h^k(s_h, a_h)$, then transitions to the next state $s_{h+1}$ according to the distribution $\mathbb{P}_h^k(\cdot|s_h, a_h)$. This process terminates at step $H$ and then the next episode begins. The agent interacts with the environment for $K$ episodes, which yields $T=KH$ time steps in total. 

A policy $\pi$ is a collection of functions $\pi_h:\mathcal{S}\rightarrow\mathcal{A}, \forall h\in[H]$. We define the value function at step $h$ in the $k$-th episode for policy $\pi$ as the expected value of the cumulative rewards received under policy $\pi$ when starting from an arbitrary state $s$: 
\begin{align*}
    V^{\pi}_{h, k}(s)=\E_{\pi}\left[\sum_{h'=h}^{H}r_{h'}^k(s_{h^\prime}, a_{h^\prime})|s_h=s\right], \forall s\in\mathcal{S}, h\in[H],k\in[K].
\end{align*}
Note that the value function also depends on the episode $k$ since the Markov decision process is nonstationary. Similarly, we can also define the action-value function (a.k.a. $Q$ function) for policy $\pi$ at step $h$ in the $k$-th episode, which gives the expected cumulative reward starting from an arbitrary state-action pair:
\begin{align*}
    Q^{\pi}_{h, k}(s, a)=r_h^k(s, a)+\E_{\pi}\left[\sum_{h'=h+1}^{H}r_{h'}^k(s_{h^\prime}, a_{h^\prime})|s_h=s, a_h=a\right], \forall (s, a)\in\mathcal{S}\times\mathcal{A}, h\in[H],k\in[K].
\end{align*}
We define the optimal value function and optimal action-value function for step $h$ in $k$-th episode as $V^*_{h, k}(s)=\sup_\pi V^{\pi}_{h, k}(s)$ and $Q^*_{h, k}(s, a)=\sup_{\pi}Q^{\pi}_{h, k}(s, a)$ respectively, which always exist~\citep{puterman2014markov}. For simplicity, we denote $\E_{s'\sim\Prob_h^k(\cdot|s, a)}[V_{h+1}(s')]=[\Prob_h^k V_{h+1}](s, a)$. Using this notation, we can write down the Bellman equation for any policy $\pi$, 
\begin{align*}
    Q_{h, k}^{\pi}(s, a)=(r_{h}^{k}+\Prob_h^kV_{h+1,k}^\pi)(s, a), V_{h, k}^{\pi}(s)=Q_{h, k}^{\pi}(s, \pi_h(s)), V_{H+1, k}^{\pi}(s)=0, \forall (s,a, k)\in\mathcal{S}\times\mathcal{A}\times[K].
\end{align*}
Similarly, the Bellman optimality equation is 
\begin{align*}
    Q_{h, k}^{*}(s, a)=(r_{h}^{k}+\Prob_h^kV_{h+1,k}^*)(s, a), V_{h, k}^{*}(s)=\max_{a\in\mathcal{A}}Q_{h, k}^{*}(s, a), V_{H+1, k}^{*}(s)=0, \forall (s,a, k)\in\mathcal{S}\times\mathcal{A}\times[K].
\end{align*}
This implies that the optimal policy $\pi^*$ is greedy with respect to the optimal action-value function $Q^{*}(\cdot, \cdot)$. Thus in order to learn the optimal policy it suffices to estimate the optimal action-value function.

Recall that the agent is learning the optimal policy via interactions with the environment, despite the uncertainty of $r$ and $\mathbb{P}$. In the $k$-th episode, the adversary chooses the initial state $s_1^k$, then the agent decides its policy $\pi^k$ for this episode based on historical interactions. To measure the convergence to optimality, we consider an equivalent objective of minimizing the \textit{dynamic regret}~\citep{cheung2019non},
\begin{align}
\label{eqn:dyn_reg}
    \text{Dyn-Reg}(K)=\sum_{k=1}^{K}\left[V^*_{1,k}(s_1^k)-V^{\pi^k}_{1, k}(s_1^k)\right].
\end{align}

\subsection{Linear Markov Decision Process}
We consider a special class of MDPs called \textit{linear Markov decision process}~\citep{melo2007q, bradtke1996linear,jin2020provably}, which assumes both transition function $\Prob$ and reward function $r$ are linear in a known feature map $\bm{\phi}(\cdot,\cdot)$. The formal definition is as follows.
\begin{definition}
\label{def:linear_mdp}
(Linear MDP). The MDP $(\mathcal{S}, \mathcal{A}, H, K, \mathbb{P}, r)$ is a linear MDP with the feature map $\bm\phi:\mathcal{S}\times\mathcal{A}\rightarrow\mathbb{R}^d$, if for any $(h, k)\in[H]\times[K]$, there exist $d$ unknown measures $\bm{\mu}_{h, k}=(\bm{\mu}_{h, k}^1, \ldots, \bm{\mu}_{h,k}^d)^\top$ on $\mathcal{S}$ and a vector $\bm{\theta}_{h,k}\in\mathbb{R}^d$ such that 
\begin{align*}
    \mathbb{P}_h^k(s'|s, a)=\bm\phi(s, a)^\top\bm{\mu}_{h,k}(s'), \quad r_h^k(s, a)=\bm\phi(s, a)^\top\bm{\theta}_{h,k}.
\end{align*}
Without loss of generality, we assume $\left\lVert\bm\phi(s, a)\right\rVert_2\leq 1$ for all $(s, a)\in\mathcal{S}\times\mathcal{A}$, and $\max\{\left\lVert \bm\mu_{h, k}(\mathcal{S})\right\rVert_2, \left\lVert\bm \theta_{h, k}\right\rVert_2\}\leq\sqrt{d}$ for all $(h,k)\in[H]\times[K]$.
\end{definition}
Note that the transition function $\mathbb{P}$ and reward function $r$ are determined by the unknown measures $\{\bm \mu_{h, k}\}_{h\in[H], k\in[K]}$ and latent vectors $\{\bm \theta_{h, k}\}_{h\in[H], k\in[K]}$. The quantities $\bm \mu_{h, k}$ and $\bm \theta_{h, k}$ vary across time in general, which leads to change in transition function $\mathbb{P}$ and reward function $r$. Following~\citet{besbes2014stochastic, cheung2019learning, cheung2019non}, we quantify the total variation on $\bm\mu$ and $\bm\theta$ in terms of their respective variation budget $B_{\bm\theta}$ and $B_{\bm\mu}$, and define the total variation budget $B$ as the summation of these two variation budgets:
\begin{align*}
    B_{\bm{\theta}}=\sum_{k=2}^{K}\sum_{h=1}^{H}\left\lVert\bm{\theta}_{h, k}-\bm{\theta}_{h, k-1}\right\rVert_2, \quad
    B_{\bm{\mu}}=\sum_{k=2}^{K}\sum_{h=1}^{H}\left\lVert \bm{\mu}_{h, k}(\mathcal{S})-\bm{\mu}_{h, k-1}(\mathcal{S})\right\rVert_2, \quad B=B_{\bm{\theta}}+B_{\bm{\mu}}\mbox{,}
\end{align*}
where $\bm{\mu}_{h,k}(\mathcal{S})$ is the concatenation of $\bm{\mu}_{h, k}(s)$ for all states.
\begin{remark}
One may find the definition of $B_{\bm\mu}$ to be restrictive, since $B_{\bm\mu}$ sums over all possible states and the number of states can be infinite in linear MDPs. However, as indicated by Theorem~\ref{thm:reg_lb_b_mu}, we cannot remove the dependency of $B_{\bm\mu}$ in the worst case. It is possible to define a nonstationarity measure that does not depend on all states if we impose additional constraints on linear MDPs. For example, we can add some reachability constraints on the state space to avoid the dependency on all states to some extent. But such reaching probability in general would also change in the nonstationary environment; therefore, it is hard to define and capture this. Future work may define a new environment change measure.  
\end{remark}

\section{Minimax Regret Lower Bound}
\label{sec:reg_lb}
In this section, we derive minimax regret lower bounds for nonstationary linear MDPs in both inhomogeneous and homogeneous settings, which quantify the fundamental difficulty when measured by the dynamic regret in nonstationary linear MDPs. More specifically, we consider inhomogeneous setting in this paper, where the transition function $P_h^k$ (as introduced in Section~\ref{sec:intro}) can be different for different $h$. In contrast, for the homogeneous setting, the transition function $P_h^k$ will be the same within an episode, i.e., for any $k$, $P_h^k\equiv P^k$ for any $h=\{1,\ldots,H\}$. All of the detailed proofs for this section are in Appendix~\ref{sec:proof_reg_lb}.

For the homogeneous setting, the minimax dynamic regret lower bound is the following. 
\begin{theorem}
\label{thm:reg_lb_b_mu}
For any algorithm, the dynamic regret is at least $\Omega(B^{1/3}d^{2/3}H^{1/3}T^{2/3})$ for one nonstationary homogeneous linear MDP instance, if $d\geq 4$, $T\geq 64(d-3)^2H$. 
\end{theorem}
\begin{sproof}
The construction of the lower bound instance is based on the construction used in Theorem~\ref{thm:reg_lb_stationary} in Appendix~\ref{sec:proof_reg_lb}. Nature divides the whole time horizon into $\lceil\frac{K}{N}\rceil$ intervals of equal length $N$ episodes (the last episode may be shorter). For each interval, nature initiates a new stationary linear MDP parametrized by $\bm{v}$, which is drawn from a set $\{\pm\sqrt{d-3}/{\sqrt{N}}\}^{d-3}$. Note that nature chooses the parameters for the linear MDP for each interval only depending on the learner's policy, and the worst-case regret for each interval is at least $\Omega(d\sqrt{H^2N})$. Since there are at least $\lceil\frac{K}{N}\rceil-1$ intervals, the total regret is at least $\Omega(d\sqrt{H^2K^2}N^{-1/2})$. By checking the total variation budget $B$, we can obtain the lower bound for $N$, which is $\Omega(B^{-2/3}d^{2/3}K^{2/3})$. Then we can obtain the desired regret lower bound. For the detailed proof, please refer to Appendix~\ref{sec:proof_reg_lb}. 
\end{sproof}

For the inhomogeneous setting, the minimax dynamic regret lower bound is the following.
\begin{theorem}
\label{thm:reg_lb_het}
For any algorithm, the dynamic regret is at least $\Omega(B^{1/3}d^{5/6}HT^{2/3})$ for one nonstationary inhomogenous linear MDP instance, if $d\geq 4$, $H\geq 3$, $T\geq (d-1)^2H^2/2$. 
\end{theorem}
\begin{sproof}
The proof idea is similar to that of Theorem~\ref{thm:reg_lb_b_mu}. The only difference is that within each piecewise-stationary segment, we use the hard instance constructed by~\cite{zhou2021provably, hu2022nearly} for inhomogenous linear MDPs. Optimizing the length of each piecewise-stationary segment $N$ and the variation magnitude between consecutive segments (subject to the constraints of the total variation budget) leads to our lower bound.
\end{sproof}

Comparing these lower bounds, we can see that learning in inhomogeneous settings is indeed harder since there is more freedom in the transition function.

\section{\texttt{LSVI-UCB-Restart} Algorithm}
\label{sec:alg1}
In this section, we describe our proposed algorithm \texttt{LSVI-UCB-Restart}, and discuss how to tune the hyper-parameters for cases when local variation is known or unknown. For both cases, we present their respective regret bounds. Detailed proofs are deferred to Appendix~\ref{sec:proof_alg1}. Note that our algorithms are all designed for inhomogeneous setting.
\subsection{Algorithm Description}
Our proposed algorithm \texttt{LSVI-UCB-Restart} has two key ingredients: least-squares value iteration with upper confidence bound to properly handle the exploration-exploitation trade-off~\citep{jin2020provably}, and restart strategy to adapt to the unknown nonstationarity. Our algorithm is summarized in Algorithm~\ref{alg:lsvi_ucb_restart}. From a high-level point of view, our algorithm runs in epochs. At each epoch, we first estimate the action-value function by solving a regularized least-squares problem from historical data, then construct the upper confidence bound for the action-value function, and update the policy greedily w.r.t.\ action-value function plus the upper confidence bound. Finally, we periodically restart our algorithm to adapt to the nonstationary nature of the environment. 

Next, we delve into more details of the algorithm design. Note that in a linear MDP, for any policy $\pi$, the $Q$-function is linear in the feature embedding $\phi(\cdot, \cdot)$. As a preliminary, we briefly introduce least-squares value iteration~\citep{bradtke1996linear,osband2016generalization}, which is the key tool to estimate $\bm{w}$, the latent vector used to form the optimal action-value function, $Q^*_{h,k}(\cdot, \cdot)=\langle\bm{\phi}(\cdot, \cdot), \bm{w}\rangle$. Least-squares value iteration is a natural extension of classical value iteration algorithm ~\citep{sutton2018reinforcement}, which finds the optimal action-value function by recursively applying Bellman optimality equation,
\begin{align*}
    Q_{h,k}^*(s, a)=[r_{h,k}+\Prob_h^k\max_{a'\in\mathcal{A}}Q_{h+1, k-1}^*(\cdot, a')](s, a).
\end{align*}
In practice, the transition function $\Prob$ is unknown, and the state space might be so large that it is impossible for the learner to fully explore all states. If we parametrize the action-value function in a linear form as $\langle \bm\phi(\cdot, \cdot), \bm{w}\rangle$, it is natural to solve a regularized least-squares problems using collected data inspired by classical value iteration. Specifically, the update formula of $\bm{w}_h^k$ in Algorithm~\ref{alg:lsvi_ucb_restart} (line 8) is the analytic solution of the following regularized least-squares problem:
\begin{align*}
    \bm{w}_h^k=\arg\min_{\bm{w}}\sum_{l=\tau}^{k-1}[r_{h, l}(s_h^l, a_h^l)+\max_{a\in\mathcal{A}}Q_{h+1}^{k-1}(s_{h+1}^l, a)
    -\langle\bm{\phi}(s_h^l,a_h^l),\bm{w}\rangle]^2+\left\lVert\bm{w}\right\rVert_2.
\end{align*}
One might be skeptical since simply applying least-squares method to solve $\bm w$ does not take the distribution drift in $\Prob$ and $r$ into account and hence, may lead to non-trivial estimation error. However, we show that the estimation error can gracefully adapt to the nonstationarity, and it suffices to restart the estimation periodically to achieve good dynamic regret.

In addition to least-squares value iteration, the inner loop of  Algorithm~\ref{alg:lsvi_ucb_restart} also adds an additional quadratic bonus term $\beta\left\lVert\bm\phi(\cdot, \cdot)\right\rVert_{(\Lambda^k_h)^{-1}}$ (line 9) to encourage exploration, where $\beta$ is a scalar and $\Lambda^k_h$ is the Gram matrix of the regularized least-squares problem. Intuitively, $1/\left\lVert\bm\phi\right\rVert_{(\Lambda^k_h)^{-1}}^{-1}$ is the effective sample number of the agent observed so far in the direction of $\bm{\phi}$, thus the quadratic bonus term can quantify the uncertainty of estimation. We will show later if we tune $\beta_k$ properly, then our action-value function estimate $Q_h^k$ can be an optimistic upper bound or an approximately optimistic upper bound of the optimal action-value function, so we can adapt the principle of optimism in the face of uncertainty~\citep{auer2002finite} to explore. 

Finally, we use epoch restart strategy to adapt to the drifting environment, which achieves near-optimal dynamic regret notwithstanding its simplicity. Specifically, we restart the estimation of $\bm{w}$ after $\frac{W}{H}$ episodes, all illustrated in the outer loop of Algorithm~\ref{alg:lsvi_ucb_restart}. Note that in general epoch size $W$ can vary for different epochs, but we find that a fixed length is sufficient to achieve near-optimal performance. 

To sum up, our algorithm follows the algorithmic principles of \texttt{LSVI-UCB}~\citep{jin2020provably}. The key difference and challenge is to set the confidence parameter properly and use periodic restart strategy to handle nonstationarity.
\begin{algorithm}[htb]
    \caption{\texttt{LSVI-UCB-Restart} Algorithm}
    \label{alg:lsvi_ucb_restart}
    \begin{algorithmic}[1]
    \REQUIRE time horizon $T$, epoch size $W$
    \STATE Set epoch counter $j=1$.
    \WHILE{$j\leq\lceil\frac{T}{W}\rceil$}
        \STATE set $\tau=(j-1)\frac{W}{H}$
        \FORALL{$k=\tau, \tau+1, \ldots, \min(\tau+\frac{W}{H}-1, K)$}
            \STATE Receive the initial state $s^k_1$.
            \FORALL{step $h=H,\ldots, 1$}
            \STATE $\Lambda_h^{k}\leftarrow\sum_{l=\tau}^{k-1}\bm\phi(s_h^l, a_h^l)\bm\phi(s_h^l, a_h^l)^\top+I$
            \STATE $\bm{w}_h^k\leftarrow(\Lambda_h^{k})^{-1}\sum_{l=\tau}^{k-1}\bm\phi(s_h^l, a_h^l)[r_{h, l}(s_h^l, a_h^l)+\max_{a}Q_{h+1}^{k-1}(s^l_{h+1}, a)]$
            \STATE $Q_{h}^{k}(\cdot, \cdot)\leftarrow\min\{(\bm{w}_h^k)^\top\bm\phi(\cdot, \cdot)+\beta_k\left\lVert\bm\phi(\cdot, \cdot)\right\rVert_{(\Lambda^k_h)^{-1}}, H\}$
            \ENDFOR
            \FORALL{step $h=1,\ldots, H$}
            \STATE take action $a_h^k\leftarrow\arg\max_{a}Q_h^{k}(s_h^k, a)$, and observe $s_{h+1}^k$
            \ENDFOR
        \ENDFOR
        \STATE set $j=j+1$
    \ENDWHILE
    \end{algorithmic}
\end{algorithm}
\subsection{Regret Analysis}
\label{sec:reg_a}
Now we derive the dynamic regret bounds for \texttt{LSVI-UCB-Restart}, first introducing additional notation for local variations. We let
\begin{align}
\label{eqn:local_var}
B_{\bm{\theta}, \mathcal{E}}=\sum_{k\in\mathcal{E}}\sum_{h=1}^{H}\left\lVert\bm{\theta}_{h, k}-\bm{\theta}_{h, k-1}\right\rVert_2 \text{ and } B_{\bm{\mu}, \mathcal{E}}=\sum_{k\in\mathcal{E}}\sum_{h=1}^{H}\left\lVert\bm{\mu}_{h, k}(\mathcal{S})-\bm{\mu}_{h, k-1}(\mathcal{S})\right\rVert_{2}
\end{align}
be the local variation for $\bm{\theta}$ and $\bm{\mu}$ in epoch $\mathcal{E}$. To derive the dynamic regret lower bounds, we need the following lemma to control the fluctuation of least-squares value iteration. 
\begin{lemma}
\label{lemma:error_prop}
(Modified from~\citet{jin2020provably}) Denote $\tau$ to be the first episode in the epoch which contains episode $k$. There exists an absolute constant $C$ such that the following event $E$, 
\begin{align*}
    \left\lVert\sum_{l=\tau}^{k-1}\bm\phi_h^l[V_{h+1}^{k}(s^l_{h+1})-\mathbb{P}_h^lV_{h+1}^{k}(s_h^l, a_h^l)]\right\rVert_{(\Lambda_h^k)^{-1}} \leq CdH\sqrt{\log[2(c_{\beta}+1)dW/p]}, \quad\forall (k, h)\in\mathcal{E}\times[H].
\end{align*}
happens with probability at least $1-p/2$.
\end{lemma}
Next we proceed to derive the dynamic regret bounds for two cases: (1) local variations are known, and (2) local variations are unknown. 
\subsubsection{Known Local Variations} 
For the case of known local variations, under event $E$ defined in Lemma~\ref{lemma:error_prop}, the estimation error of least-squares estimation scales with the quadratic bonus $\left\lVert\bm\phi(\cdot, \cdot)\right\rVert_{(\Lambda^k_h)^{-1}}$ uniformly for any policy $\pi$, which is detailed in the following lemma. 
\begin{lemma}
\label{lemma:ls_error_2}
Under event $E$ defined in Lemma~\ref{lemma:error_prop}, we have for any policy $\pi$, $\forall s, a, h, k\in\mathcal{S}\times\mathcal{A}\times[H]\times\mathcal{E}$, 
\begin{align*}
    |\langle\bm\phi(s, a), \bm{w}_h^k\rangle-Q^{\pi}_{h, k}(s, a)-\mathbb{P}_h^k(V^k_{h+1}-V^{\pi}_{h+1, k})(s, a)| \leq\beta_k\left\lVert\bm\phi(s, a)\right\rVert_{(\Lambda^k_h)^{-1}},
\end{align*}
where $\beta_k=C_0dH\sqrt{\log(2dW/p)}+B_{\bm{\theta}, \mathcal{E}}\sqrt{d(k-\tau)}+B_{\bm{\mu}, \mathcal{E}}H\sqrt{d(k-\tau)}$ and $\tau$ is the first episode in the current epoch. 
\end{lemma}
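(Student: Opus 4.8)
The plan is to adapt the error-propagation argument of \cite{jin2020provably} (their Lemma~B.4) to the drifting setting, where the delicate point is that the transitions and rewards observed in episodes $l\in\{\tau,\dots,k-1\}$ of the current epoch $\mathcal{E}$ were generated by the time-varying parameters $\bm\theta_{h,l},\bm\mu_{h,l}$, not by $\bm\theta_{h,k},\bm\mu_{h,k}$. First I would reduce the claim to a bound on $\left\lVert\bm{w}_h^k-\bm{w}'_{h,k}\right\rVert_{\Lambda_h^k}$, where $\bm{w}'_{h,k}:=\bm\theta_{h,k}+\int V_{h+1}^k(s')\,d\bm\mu_{h,k}(s')$. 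By the linear-MDP structure of episode $k$, $\langle\bm\phi(s,a),\bm{w}'_{h,k}\rangle=r_{h,k}(s,a)+\Prob_h^kV_{h+1}^k(s,a)$, while the Bellman equation for $\pi$ gives $Q^\pi_{h,k}(s,a)=r_{h,k}(s,a)+\Prob_h^kV^\pi_{h+1,k}(s,a)$; subtracting, the quantity inside the absolute value in the lemma equals exactly $\langle\bm\phi(s,a),\bm{w}_h^k-\bm{w}'_{h,k}\rangle$, and Cauchy--Schwarz in the $(\Lambda_h^k)^{-1}$ inner product bounds it by $\left\lVert\bm\phi(s,a)\right\rVert_{(\Lambda_h^k)^{-1}}\left\lVert\bm{w}_h^k-\bm{w}'_{h,k}\right\rVert_{\Lambda_h^k}$. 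Thus it suffices to prove $\left\lVert\bm{w}_h^k-\bm{w}'_{h,k}\right\rVert_{\Lambda_h^k}\le\beta_k$, a bound that is itself policy-independent.

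Next I would plug the closed form of $\bm{w}_h^k$ from line~8 of Alg.~\ref{alg:lsvi_ucb_restart} into $\bm{w}_h^k-\bm{w}'_{h,k}$, using the per-episode identities $r_{h,l}(s_h^l,a_h^l)=\langle\bm\phi_h^l,\bm\theta_{h,l}\rangle$ and $V_{h+1}^k(s_{h+1}^l)=\langle\bm\phi_h^l,\int V_{h+1}^k\,d\bm\mu_{h,l}\rangle+\epsilon_h^l$, with $\epsilon_h^l:=V_{h+1}^k(s_{h+1}^l)-\Prob_h^lV_{h+1}^k(s_h^l,a_h^l)$, together with $\Lambda_h^k-I=\sum_{l=\tau}^{k-1}\bm\phi_h^l(\bm\phi_h^l)^\top$. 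Writing $\bm{w}^l:=\bm\theta_{h,l}+\int V_{h+1}^k\,d\bm\mu_{h,l}$, collecting terms yields
\[
\bm{w}_h^k-\bm{w}'_{h,k}=\underbrace{-(\Lambda_h^k)^{-1}\bm{w}'_{h,k}}_{(\mathrm{I})}+\underbrace{(\Lambda_h^k)^{-1}\textstyle\sum_{l=\tau}^{k-1}\bm\phi_h^l(\bm\phi_h^l)^\top(\bm{w}^l-\bm{w}'_{h,k})}_{(\mathrm{II})}+\underbrace{(\Lambda_h^k)^{-1}\textstyle\sum_{l=\tau}^{k-1}\bm\phi_h^l\epsilon_h^l}_{(\mathrm{III})}.
\]
Term $(\mathrm{I})$ has $\Lambda_h^k$-norm at most $\left\lVert\bm{w}'_{h,k}\right\rVert_2\le(H+1)\sqrt d$ since $\Lambda_h^k\succeq I$ and by the normalization in Def.~\ref{def:linear_mdp}; term $(\mathrm{III})$ has $\Lambda_h^k$-norm equal to $\left\lVert\sum_{l=\tau}^{k-1}\bm\phi_h^l\epsilon_h^l\right\rVert_{(\Lambda_h^k)^{-1}}$, which is precisely the quantity controlled by $CdH\sqrt{\log[2(c_\beta+1)dW/p]}$ on the event $E$ of Lemma~\ref{lemma:error_prop}. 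Both are absorbed into the term $C_0dH\sqrt{\log(2dW/p)}$ of $\beta_k$ for a suitable absolute constant $C_0$.

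Term $(\mathrm{II})$ is the bias created by nonstationarity and is the step I expect to be the main obstacle; I would treat it by the triangle inequality, $\left\lVert(\mathrm{II})\right\rVert_{\Lambda_h^k}\le\sum_{l=\tau}^{k-1}\lvert(\bm\phi_h^l)^\top(\bm{w}^l-\bm{w}'_{h,k})\rvert\,\left\lVert\bm\phi_h^l\right\rVert_{(\Lambda_h^k)^{-1}}$. Since $\left\lVert\bm\phi_h^l\right\rVert_2\le1$ and $\bm{w}^l-\bm{w}'_{h,k}=(\bm\theta_{h,l}-\bm\theta_{h,k})+\int V_{h+1}^k\,d(\bm\mu_{h,l}-\bm\mu_{h,k})$, telescoping \emph{within the current epoch} gives $\left\lVert\bm\theta_{h,l}-\bm\theta_{h,k}\right\rVert_2\le\sum_{j\in\mathcal{E}}\left\lVert\bm\theta_{h,j}-\bm\theta_{h,j-1}\right\rVert_2$ and, using $\left\lVert V_{h+1}^k\right\rVert_\infty\le H$ together with the definition of the metric on the $\R^d$-valued measures, $\left\lVert\int V_{h+1}^k\,d(\bm\mu_{h,l}-\bm\mu_{h,k})\right\rVert_2\le H\sum_{j\in\mathcal{E}}\left\lVert\bm\mu_{h,j}-\bm\mu_{h,j-1}\right\rVert_{\mathrm F}$; hence $\lvert(\bm\phi_h^l)^\top(\bm{w}^l-\bm{w}'_{h,k})\rvert\le B_{\bm\theta,\mathcal{E}}+HB_{\bm\mu,\mathcal{E}}$ uniformly in $l$. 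Finally, Cauchy--Schwarz and the standard potential/trace bound give $\sum_{l=\tau}^{k-1}\left\lVert\bm\phi_h^l\right\rVert_{(\Lambda_h^k)^{-1}}\le\sqrt{(k-\tau)\sum_{l=\tau}^{k-1}\left\lVert\bm\phi_h^l\right\rVert_{(\Lambda_h^k)^{-1}}^2}=\sqrt{(k-\tau)\,\Tr\!\big(I-(\Lambda_h^k)^{-1}\big)}\le\sqrt{d(k-\tau)}$, so $\left\lVert(\mathrm{II})\right\rVert_{\Lambda_h^k}\le B_{\bm\theta,\mathcal{E}}\sqrt{d(k-\tau)}+B_{\bm\mu,\mathcal{E}}H\sqrt{d(k-\tau)}$, which is exactly the remaining two terms of $\beta_k$.

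Summing the three bounds gives $\left\lVert\bm{w}_h^k-\bm{w}'_{h,k}\right\rVert_{\Lambda_h^k}\le\beta_k$, and since the event $E$ of Lemma~\ref{lemma:error_prop} already holds uniformly over $(k,h)\in\mathcal{E}\times[H]$, the displayed inequality follows simultaneously for all such $(s,a,h,k)$ and all $\pi$. Beyond the handling of term $(\mathrm{II})$, the only points needing care are the correct estimation of the $\R^d$-valued integrals $\int V\,d\bm\mu$ in the $\ell_2$ and Frobenius norms, and making sure the telescoping stays inside a single epoch so that the \emph{local} budgets $B_{\bm\theta,\mathcal{E}},B_{\bm\mu,\mathcal{E}}$ appear rather than the global ones.
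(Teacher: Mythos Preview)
Your proof is correct and follows essentially the same approach as the paper's. The only organizational difference is that you introduce the policy-independent target $\bm{w}'_{h,k}=\bm\theta_{h,k}+\int V_{h+1}^k\,d\bm\mu_{h,k}$ upfront, which cleanly collapses the paper's five-term decomposition of $\bm{w}_h^k-\bm{w}^\pi_{h,k}$ (plus the further split of their term~\circled{4} into \circled{6} and \circled{7}) into your three terms $(\mathrm{I}),(\mathrm{II}),(\mathrm{III})$: specifically, their $\circled{1}+\circled{7}$ is your $(\mathrm{I})$, their $\circled{2}$ is your $(\mathrm{III})$, and their $\circled{3}+\circled{5}$ is your $(\mathrm{II})$, with the remaining $\circled{6}$ being exactly the $\mathbb{P}_h^k(V^k_{h+1}-V^\pi_{h+1,k})$ term that moves to the left-hand side. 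The bounds you invoke---$\left\lVert\bm{w}'_{h,k}\right\rVert_2\le(H+1)\sqrt d$, Lemma~\ref{lemma:error_prop} for the martingale, and the trace identity $\sum_l\left\lVert\bm\phi_h^l\right\rVert_{(\Lambda_h^k)^{-1}}^2\le d$ (the paper's Lemma~\ref{lemma:norm_bound}) combined with Cauchy--Schwarz for the nonstationarity bias---are precisely those used in the paper.
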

The proof of Lemma~\ref{lemma:ls_error_2} is included in Appendix~\ref{sec:proof_alg1}. Based on Lemma~\ref{lemma:ls_error_2}, we can show that if we set $\beta_k$ properly with knowledge of local variations $B_{\bm\theta, \mathcal{E}}$ and $B_{\bm\mu, \mathcal{E}}$, the action-value function estimate maintained in Algorithm~\ref{alg:lsvi_ucb_restart} is an upper bound of the optimal action-value function under event $E$. 
\begin{lemma}
\label{lemma:upper_bound}
Under event $E$ defined in Lemma~\ref{lemma:error_prop}, for episode $k$, if we set $\beta_k=cdH\sqrt{\log(2dW/p)}+B_{\bm{\theta}, \mathcal{E}}\sqrt{d(k-\tau)}+B_{\bm{\mu},\mathcal{E}}H\sqrt{d(k-\tau)}$, we have
\begin{align*}
    Q^k_h(s, a)\geq Q^*_{h,k}, \quad\forall(s, a, h, k)\in\mathcal{S}\times\mathcal{A}\times[H]\times\mathcal{E}.
\end{align*}
\end{lemma}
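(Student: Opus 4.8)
The plan is to prove the optimism claim in Lemma~\ref{lemma:upper_bound} by backward induction on the step index $h$, running from $h=H+1$ down to $h=1$, while holding the episode $k$ (and hence the epoch $\mathcal{E}$) fixed. The base case is $h=H+1$, where by convention $Q^k_{H+1}\equiv 0 \equiv Q^*_{H+1,k}$, so the inequality holds trivially. For the inductive step, I would assume $Q^k_{h+1}(s,a)\geq Q^*_{h+1,k}(s,a)$ for all $(s,a)$, which immediately gives $V^k_{h+1}(s)=\max_a Q^k_{h+1}(s,a)\geq \max_a Q^*_{h+1,k}(s,a)=V^*_{h+1,k}(s)$ pointwise, and then show the inequality propagates to step $h$.

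The key tool is Lemma~\ref{lemma:ls_error_2} specialized to the optimal policy $\pi^*_k$ of the episode-$k$ MDP: it yields
\begin{align*}
|\langle\bm\phi(s,a),\bm{w}_h^k\rangle - Q^*_{h,k}(s,a) - \mathbb{P}_h^k(V^k_{h+1}-V^*_{h+1,k})(s,a)| \leq \beta_k\left\lVert\bm\phi(s,a)\right\rVert_{(\Lambda^k_h)^{-1}},
\end{align*}
valid under event $E$ with the stated choice of $\beta_k$. Rearranging the left side and using the inductive hypothesis $\mathbb{P}_h^k(V^k_{h+1}-V^*_{h+1,k})(s,a)\geq 0$ (since the transition operator is monotone and $V^k_{h+1}\geq V^*_{h+1,k}$ pointwise), I get
\begin{align*}
\langle\bm\phi(s,a),\bm{w}_h^k\rangle + \beta_k\left\lVert\bm\phi(s,a)\right\rVert_{(\Lambda^k_h)^{-1}} \geq Q^*_{h,k}(s,a) + \mathbb{P}_h^k(V^k_{h+1}-V^*_{h+1,k})(s,a) \geq Q^*_{h,k}(s,a).
\end{align*}
Finally, by definition $Q^k_h(s,a)=\min\{\langle\bm\phi(s,a),\bm{w}_h^k\rangle + \beta_k\left\lVert\bm\phi(s,a)\right\rVert_{(\Lambda^k_h)^{-1}},\, H\}$, so it remains to observe that $Q^*_{h,k}(s,a)\leq H$ always (rewards lie in $[0,1]$ and there are at most $H$ remaining steps); hence the truncation at $H$ does not destroy the bound, and $Q^k_h(s,a)\geq Q^*_{h,k}(s,a)$ as desired. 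One should double-check that the $\beta_k$ in Lemma~\ref{lemma:upper_bound} (with constant $c$) is at least the $\beta_k$ appearing in Lemma~\ref{lemma:ls_error_2} (with constant $C_0$), which holds by choosing $c\geq C_0$.

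I expect the main subtlety — rather than a genuine obstacle — to be the bookkeeping around the interaction between the truncation at $H$ and the chain of inequalities: one must be careful that $V^k_{h+1}$ (which is itself a truncated quantity) still dominates $V^*_{h+1,k}$ pointwise so that the monotonicity of $\mathbb{P}_h^k$ can be invoked; this is where the inductive hypothesis at level $h+1$ is essential, and it is the reason the induction is run backward. A secondary point is that Lemma~\ref{lemma:ls_error_2} is stated for an arbitrary policy $\pi$, so instantiating it at $\pi=\pi^*_k$ (the optimal non-stationary policy for episode $k$) is legitimate, but one should note that $Q^{\pi^*_k}_{h,k}=Q^*_{h,k}$ and $V^{\pi^*_k}_{h+1,k}=V^*_{h+1,k}$ only for this particular optimal policy — no uniformity over policies is needed here, only the single instantiation. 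With these points handled, the proof is a short, clean backward induction.
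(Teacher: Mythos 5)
Your proposal is correct and follows essentially the same argument as the paper: backward induction on $h$, instantiating Lemma~\ref{lemma:ls_error_2} at the optimal policy and using the inductive hypothesis to drop the nonnegative term $\mathbb{P}_h^k(V^k_{h+1}-V^*_{h+1,k})(s,a)$ before observing that truncation at $H$ is harmless since $Q^*_{h,k}\leq H$. The only cosmetic difference is that the paper starts the induction at $h=H$ (where the correction term vanishes because the episode terminates) rather than at the trivial convention $Q^k_{H+1}\equiv 0$, and your explicit remarks on the truncation and the constant $c\geq C_0$ simply make precise what the paper leaves implicit.
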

\begin{sproof}
For the last step $H$ in each episode, the results hold due to Lemma~\ref{lemma:ls_error_2} since after step $H$ there is no reward and the episode terminates. We then prove $Q_h^k$ is indeed the upper bound for the optimal action-value function $Q^*_{h, k}$ for remaining $h\in[H-1]$ by induction. Please see Appendix~\ref{sec:proof_alg1} for details. 
\end{sproof}
After showing the action-value function estimate is the optimistic upper bound of the optimal action-value function, we can derive the dynamic regret bound within one epoch via recursive regret decomposition. The dynamic regret within one epoch for Algorithm~\ref{alg:lsvi_ucb_restart} with the knowledge of $B_{\bm\theta, \mathcal{E}}$ and $B_{\bm\mu, \mathcal{E}}$ is as follows, and the proof is deferred to Appendix~\ref{sec:proof_alg1}. 
\begin{theorem}
\label{theorem:reg_epoch1}
For each epoch $\mathcal{E}$ with epoch size $W$, set $\beta$ in the $k$-th episode as $\beta_k=cdH\sqrt{\log(2dW/p)}+B_{\bm{\theta}, \mathcal{E}}\sqrt{d(k-\tau)}+B_{\bm{\mu},\mathcal{E}}H\sqrt{d(k-\tau)}$, where $c$ is an absolute constant and $p\in(0,1)$. Then the dynamic regret within that epoch is $\tilde{O}(H^{3/2}d^{3/2}W^{1/2}+B_{\bm{\theta}, \mathcal{E}}dW+B_{\bm{\mu}, \mathcal{E}}dHW)$ with probability at least $1-p$.
\end{theorem}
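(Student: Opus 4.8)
The plan is to condition on the good event $E$ of Lemma~\ref{lemma:error_prop} (probability at least $1-p/2$) and reduce the epoch dynamic regret to a recursive decomposition of per-step estimation errors, exactly in the spirit of the stationary \texttt{LSVI-UCB} analysis but with $\beta_k$ now inflated enough to absorb the drift. On $E$, Lemma~\ref{lemma:upper_bound} gives optimism, $Q_h^k\ge Q^*_{h,k}$, so writing $\delta_h^k:=V_h^k(s_h^k)-V_{h,k}^{\pi_k}(s_h^k)$ we get
\begin{align*}
\mathrm{Regret}(\mathcal{E})=\sum_{k\in\mathcal{E}}\big(V^*_{1,k}(s_1^k)-V^{\pi_k}_{1,k}(s_1^k)\big)\le\sum_{k\in\mathcal{E}}\delta_1^k.
\end{align*}
Since $\pi_k$ is greedy w.r.t.\ $Q_h^k$, we have $\delta_h^k=Q_h^k(s_h^k,a_h^k)-Q^{\pi_k}_{h,k}(s_h^k,a_h^k)$; bounding $Q_h^k$ by its unclipped version $\langle\bm\phi,\bm w_h^k\rangle+\beta_k\lVert\bm\phi\rVert_{(\Lambda_h^k)^{-1}}$ and applying Lemma~\ref{lemma:ls_error_2} with the executed policy $\pi=\pi_k$ yields, on $E$,
\begin{align*}
\delta_h^k\le\Prob_h^k\big(V_{h+1}^k-V_{h+1,k}^{\pi_k}\big)(s_h^k,a_h^k)+2\beta_k\big\lVert\bm\phi(s_h^k,a_h^k)\big\rVert_{(\Lambda_h^k)^{-1}}.
\end{align*}

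Next I would introduce the martingale differences $\xi_h^k:=\Prob_h^k(V_{h+1}^k-V_{h+1,k}^{\pi_k})(s_h^k,a_h^k)-(V_{h+1}^k-V_{h+1,k}^{\pi_k})(s_{h+1}^k)$ so that $\delta_h^k\le\delta_{h+1}^k+\xi_h^k+2\beta_k\lVert\bm\phi(s_h^k,a_h^k)\rVert_{(\Lambda_h^k)^{-1}}$, and telescope over $h=1,\dots,H$ (with $\delta_{H+1}^k=0$) and sum over $k\in\mathcal{E}$ to obtain
\begin{align*}
\mathrm{Regret}(\mathcal{E})\le\sum_{k\in\mathcal{E}}\sum_{h=1}^{H}\xi_h^k\;+\;2\sum_{k\in\mathcal{E}}\beta_k\sum_{h=1}^{H}\big\lVert\bm\phi(s_h^k,a_h^k)\big\rVert_{(\Lambda_h^k)^{-1}}.
\end{align*}
The first sum has $W$ terms, each bounded by $2H$ (value functions lie in $[0,H]$), so Azuma--Hoeffding gives $\tilde O(H\sqrt{W})$ with probability at least $1-p/2$. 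For the second sum, $k-\tau\le W/H$ forces $\beta_k\le\beta_{\max}:=\tilde O\big(dH+B_{\bm\theta,\mathcal{E}}\sqrt{dW/H}+B_{\bm\mu,\mathcal{E}}H\sqrt{dW/H}\big)$; for each fixed $h$, Cauchy--Schwarz together with the standard elliptic potential argument (using $\Lambda_h^\tau=I$ at the epoch start and $\lVert\bm\phi\rVert_2\le1$) gives $\sum_{k\in\mathcal{E}}\lVert\bm\phi(s_h^k,a_h^k)\rVert_{(\Lambda_h^k)^{-1}}\le\sqrt{(W/H)\cdot 2d\log(1+W/(Hd))}=\tilde O(\sqrt{dW/H})$, and summing over $h\in[H]$ gives $\tilde O(\sqrt{dWH})$.

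Multiplying, the bonus contribution is $\tilde O\big(\beta_{\max}\sqrt{dWH}\big)=\tilde O\big(d^{3/2}H^{3/2}W^{1/2}+B_{\bm\theta,\mathcal{E}}dW+B_{\bm\mu,\mathcal{E}}dHW\big)$, which dominates the $\tilde O(H\sqrt{W})$ martingale term; a union bound over the two failure events yields the claim with probability at least $1-p$. The main obstacle is not any individual estimate but keeping the drift terms aligned through the recursion: Lemma~\ref{lemma:ls_error_2} must be invoked for the executed policy $\pi_k$ rather than a fixed comparator, the truncation at $H$ must be arranged so it only helps the optimism direction (via $Q_h^k\le$ its unclipped form), and the elliptic potential bound must be run \emph{within the epoch} with the Gram matrix reset to $I$, so that the logarithmic factors stay in terms of $W$ and the drift part of $\beta_k$ collapses into exactly the $B_{\bm\theta,\mathcal{E}}dW$ and $B_{\bm\mu,\mathcal{E}}dHW$ terms.
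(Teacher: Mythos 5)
Your proposal is correct and follows essentially the same route as the paper's proof: optimism from Lemma~\ref{lemma:upper_bound}, the recursive decomposition $\delta_h^k\le\delta_{h+1}^k+\zeta_{h+1}^k+2\beta_k\lVert\bm\phi_h^k\rVert_{(\Lambda_h^k)^{-1}}$ (the paper's Lemma~\ref{lemma:recurive2}, which it likewise obtains by invoking Lemma~\ref{lemma:ls_error_2} for the executed policy), Azuma--Hoeffding for the martingale term, and Cauchy--Schwarz plus the elliptic potential bound within the epoch for the bonus term. The only cosmetic difference is that you bound $\beta_k$ uniformly by $\beta_{\max}$ before summing, while the paper applies Cauchy--Schwarz to each of the three pieces of $\beta_k$ separately; both give the same $\tilde{O}(H^{3/2}d^{3/2}W^{1/2}+B_{\bm{\theta},\mathcal{E}}dW+B_{\bm{\mu},\mathcal{E}}dHW)$ bound.
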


By summing over all epochs and applying the union bound, we can obtain the dynamic regret upper bound for \texttt{LSVI-UCB-Restart} for the whole time horizon. 
\begin{theorem}
\label{theorem:reg_bound1}
If we set $\beta_k=cdH\sqrt{\log(2dT/p)}+B_{\bm{\theta}, \mathcal{E}}\sqrt{d(k-\tau)}+B_{\bm{\mu},\mathcal{E}}H\sqrt{d(k-\tau)}$, the dynamic regret of \texttt{LSVI-UCB-Restart} is $\tilde{O}(H^{3/2}d^{3/2}TW^{-1/2}+B_{\bm{\theta}}dW+B_{\bm{\mu}}dHW)$, with probability at least $1-p$.
\end{theorem}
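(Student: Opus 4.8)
The plan is to obtain Theorem~\ref{theorem:reg_bound1} from Theorem~\ref{theorem:reg_epoch1} by a union bound over epochs, so the only real work is bookkeeping. First I would fix the epoch structure: by the outer loop of Alg.~\ref{alg:lsvi_ucb_restart} the $K=T/H$ episodes are partitioned into $m=\lceil T/W\rceil$ consecutive epochs $\mathcal{E}_1,\dots,\mathcal{E}_m$, each of length $W/H$ episodes except possibly the last. Since the estimation of $\bm{w}$ is restarted at the beginning of each epoch, the data used inside $\mathcal{E}_i$ depends only on episodes in $\mathcal{E}_i$, so the analysis leading to Theorem~\ref{theorem:reg_epoch1} applies verbatim within each $\mathcal{E}_i$, with $W$ replaced by the length of $\mathcal{E}_i$ and with the epoch-local budgets $B_{\bm{\theta},\mathcal{E}_i}$, $B_{\bm{\mu},\mathcal{E}_i}$.

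Next I would redo the concentration step globally. To make the good event $E$ of Lemma~\ref{lemma:error_prop}, and hence the conclusion of Theorem~\ref{theorem:reg_epoch1}, hold \emph{simultaneously} in all $m$ epochs, I allocate failure probability $p/m$ per epoch. Tracing this through Lemma~\ref{lemma:error_prop} and Lemma~\ref{lemma:ls_error_2}, the only change is that $\log(2dW/p)$ becomes $\log(2dWm/p)$, and since $W\le T$ implies $Wm\le T+W\le 2T$, this is $O(\log(2dT/p))$; this is exactly why $\beta_k$ in the statement carries $\log(2dT/p)$ rather than $\log(2dW/p)$. A union bound over the $m$ epochs then gives: with probability at least $1-p$, for every $i$ the regret incurred in $\mathcal{E}_i$ is $\tilde{O}(H^{3/2}d^{3/2}W^{1/2}+B_{\bm{\theta},\mathcal{E}_i}dW+B_{\bm{\mu},\mathcal{E}_i}dHW)$.

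Finally I would sum over epochs. The first term gives $m\cdot\tilde{O}(H^{3/2}d^{3/2}W^{1/2})=\tilde{O}(H^{3/2}d^{3/2}TW^{-1/2})$ using $m\le T/W+1$. For the variation terms, note that each consecutive difference $\|\bm{\theta}_{h,k}-\bm{\theta}_{h,k-1}\|_2$ (resp. $\|\bm{\mu}_{h,k}-\bm{\mu}_{h,k-1}\|_{\text{F}}$) is counted in the budget of exactly one epoch, namely the one containing episode $k$, even at epoch boundaries; hence $\sum_{i=1}^m B_{\bm{\theta},\mathcal{E}_i}\le B_{\bm{\theta}}$ and $\sum_{i=1}^m B_{\bm{\mu},\mathcal{E}_i}\le B_{\bm{\mu}}$, so these terms contribute $\tilde{O}(B_{\bm{\theta}}dW+B_{\bm{\mu}}dHW)$. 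Adding the three pieces yields the claimed bound $\tilde{O}(H^{3/2}d^{3/2}TW^{-1/2}+B_{\bm{\theta}}dW+B_{\bm{\mu}}dHW)$ with probability at least $1-p$.

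Since all the substantive content (optimism via Lemma~\ref{lemma:upper_bound} and the per-epoch recursive regret decomposition) is already packaged in Theorem~\ref{theorem:reg_epoch1}, I do not expect a genuine obstacle here; the points requiring care are the logarithmic accounting in the union bound (making sure the per-epoch failure probability $p/m$ is absorbed into $\log(2dT/p)$) and the no-double-counting argument that the epoch-local variation budgets sum to at most the global budgets $B_{\bm{\theta}}$ and $B_{\bm{\mu}}$.
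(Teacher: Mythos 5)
Your proposal matches the paper's own proof: it applies Theorem~\ref{theorem:reg_epoch1} to each of the $\lceil T/W\rceil$ epochs with failure probability $p/\lceil T/W\rceil$, takes a union bound, and sums the per-epoch bounds using that the epoch-local budgets $B_{\bm{\theta},\mathcal{E}_i}$, $B_{\bm{\mu},\mathcal{E}_i}$ add up to at most $B_{\bm{\theta}}$, $B_{\bm{\mu}}$. Your extra care about absorbing the union-bound factor into $\log(2dT/p)$ and about no double counting at epoch boundaries is exactly the implicit bookkeeping the paper leaves unstated, so the argument is correct and essentially identical.
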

By properly tuning the epoch size $W$, we can obtain a tight dynamic regret upper bound.
\begin{corollary}
\label{co:known_local}
Let $W=\lceil B^{-2/3}T^{2/3}d^{1/3}H^{-2/3}\rceil H$, and $\beta_k=cdH\sqrt{\log(2dW/p)}+B_{\bm{\theta}, \mathcal{E}}\sqrt{d(k-\tau)}+B_{\bm{\mu},\mathcal{E}}H\sqrt{d(k-\tau)}$ for each epoch. \texttt{LSVI-UCB-Restart} achieves $\tilde{O}(B^{1/3}d^{4/3}H^{4/3}T^{2/3})$ dynamic regret, with probability at least $1-p$.
\end{corollary}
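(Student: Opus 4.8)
The plan is to read off Corollary~\ref{co:known_local} from Theorem~\ref{theorem:reg_bound1} by optimizing the epoch length. Theorem~\ref{theorem:reg_bound1} gives, for any admissible $W$, a dynamic regret of $\tilde{O}\big(H^{3/2}d^{3/2}TW^{-1/2} + B_{\bm\theta}dW + B_{\bm\mu}dHW\big)$ with probability $1-p$. First I would pass from the local/global quantities $B_{\bm\theta}, B_{\bm\mu}$ to the single budget $B$: since $B_{\bm\theta}\le B$ and $B_{\bm\mu}\le B$ and $H\ge 1$, the last two terms are at most $2BdHW$ up to constants, reducing the task to minimizing $g(W)=H^{3/2}d^{3/2}TW^{-1/2}+BdHW$ over integer multiples $W$ of $H$ with $W\ge H$.

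Next comes the standard balancing step: equating the two summands of $g$ gives $W^{3/2}=H^{1/2}d^{1/2}T/B$, i.e.\ the unconstrained optimizer $W^\star=d^{1/3}H^{1/3}T^{2/3}B^{-2/3}$, which is exactly $\lceil B^{-2/3}T^{2/3}d^{1/3}H^{-2/3}\rceil H$ up to the ceiling (recall $W$ counts steps and $W/H$ counts episodes per epoch). Plugging $W=W^\star$ into either summand yields $g(W^\star)\asymp d^{4/3}H^{4/3}T^{2/3}B^{1/3}$, and I would verify termwise that with the displayed $W$ each of $H^{3/2}d^{3/2}TW^{-1/2}$, $B_{\bm\theta}dW$, and $B_{\bm\mu}dHW$ is $\tilde{O}(B^{1/3}d^{4/3}H^{4/3}T^{2/3})$. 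The prescribed $\beta_k$ is the per-epoch choice of Theorem~\ref{theorem:reg_epoch1}; combining the per-epoch bounds over the $\lceil T/W\rceil$ epochs by a union bound (rescaling $p$, which only affects the suppressed logarithmic factors) recovers the $1-p$ guarantee.

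The only delicate points --- and the closest thing to an obstacle --- are the rounding and the high-variation boundary. When $W^\star\ge H$, the ceiling inflates $W$ by at most a constant factor, so the termwise estimates are untouched. When $W^\star<H$, equivalently $B\gtrsim d^{1/2}T/H$, the formula forces single-episode epochs $W=H$; here I would instead invoke the trivial bound that the dynamic regret never exceeds $T$, and check that in this regime $T\le\tilde{O}(B^{1/3}d^{4/3}H^{4/3}T^{2/3})$ indeed holds, because $B\gtrsim d^{1/2}T/H$ forces $Bd^4H^4\gtrsim T$. Assembling the two regimes gives the claimed bound for all parameter ranges.
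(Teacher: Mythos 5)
Your proposal is correct and follows essentially the same route as the paper: Corollary~\ref{co:known_local} is obtained simply by substituting the stated $W$ into the bound of Theorem~\ref{theorem:reg_bound1} (with $B_{\bm\theta},B_{\bm\mu}\le B$) and checking each term is $\tilde{O}(B^{1/3}d^{4/3}H^{4/3}T^{2/3})$. Your additional care about the ceiling and the high-variation regime $W^\star<H$ (where the trivial bound $\mathrm{Dyn\text{-}Reg}\le T$ already suffices) is a harmless refinement the paper leaves implicit.
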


\begin{remark}
Corollary~\ref{co:known_local} shows that if local variations are known, we can achieve near-optimal dependency on the the total variation $B_{\bm\theta}, B_{\bm\mu}$ and time horizon $T$ compared to the lower bound provided in Theorem~\ref{thm:reg_lb_b_mu}. However, the dependency on $d$ and $H$ is worse. The dependency on $d$ is unlikely to improve unless there is an improvement to LSVI-UCB.
\end{remark}
\begin{remark}
The definition of total variation $B$ is related to the misspecification error defined by~\citet{jin2020provably}. One can apply the Cauchy-Schwarz inequality to show that our total variation bound implies that misspecification in Eq. (4) of Jin et al.\ is also bounded (but not vice versa). However, the regret analysis in the misspecified linear MDP of~\citet{jin2020provably} is restricted to static regret, so we cannot directly borrow their analysis for the misspecified setting \citep{jin2020provably} to handle our dynamic regret (as defined in Eq. \eqref{eqn:dyn_reg}).
\end{remark}
\begin{remark}
For the case when the environment changes abruptly $L$ times, our algorithm enjoys an $\tilde{O}(L^{1/3}T^{2/3})$ dynamic regret bound, which is sub-optimal compared to~\citet{wei2021non}. The reason is that periodic restart is not a suitable strategy to handle abrupt changes since the passive nature indicates that we cannot guarantee detecting the abrupt environment change within a reasonably short delay. \citet{wei2021non} overcome this issue by running two tests on top of multiple base instances with different scales to detect the environmental change. Similar ideas have also been used in the piecewise-stationary bandit literature~\citep{besson2019generalized}, where a change detection subroutine is run to detect the environmental change, so the regret incurred by the environmental drift can be better controlled.
\end{remark}
\subsubsection{Unknown Local Variation}
If the local variations are unknown, the proof is similar to the case of known local variations. We only highlight the differences compared to the previous case. The key difference is that without knowledge of local variations $B_{\bm\theta}$ and $B_{\bm\mu}$, we set the hyper-parameter $\beta=cdH\sqrt{\log(2dW/p)}$. As a result, the action-value function estimate $Q_h^k$ maintained in Algorithm~\ref{alg:lsvi_ucb_restart} is no longer the optimistic upper bound of the optimal action-value function, but only approximately, up to some error, proportional to the local variation. The rigorous statement is as follows.
\begin{lemma}
\label{lemma:approx_upper_bound}
Under event $E$ defined in Lemma~\ref{lemma:error_prop}, if we set $\beta=cdH\sqrt{\log(2dW/p)}$, we have
\begin{align*}
    &\forall(s, a, h, k)\in\mathcal{S}\times\mathcal{A}\times[H]\times\mathcal{E},\\
    &Q^k_h(s, a)\geq Q^*_{h,k}-(H-h+1)(B_{\bm{\theta}, \mathcal{E}}\sqrt{d(k-\tau)} +B_{\bm{\mu}, \mathcal{E}}H\sqrt{d(k-\tau)}).
\end{align*}
\end{lemma}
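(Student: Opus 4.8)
The plan is to mimic the backward induction that establishes the exact-optimism result Lemma~\ref{lemma:upper_bound}, but this time to track the optimism \emph{defect} that arises from using the smaller coefficient $\beta = cdH\sqrt{\log(2dW/p)}$ in place of the variation-inflated coefficient $\beta_k$ of Lemma~\ref{lemma:ls_error_2}. Write $\Delta_k := B_{\bm\theta,\mathcal{E}}\sqrt{d(k-\tau)} + B_{\bm\mu,\mathcal{E}}H\sqrt{d(k-\tau)}$ for the per-episode variation term, and take the absolute constant $c$ at least as large as the constant $C_0$ in Lemma~\ref{lemma:ls_error_2}, so that $\beta_k - \beta \le \Delta_k$. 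Two elementary facts are used repeatedly: first, $\Lambda_h^k \succeq I$ together with the normalization $\lVert\bm\phi(\cdot,\cdot)\rVert_2 \le 1$ gives $\lVert\bm\phi(s,a)\rVert_{(\Lambda_h^k)^{-1}} \le 1$; second, $\mathbb{P}_h^k$ is a (sub-)stochastic transition operator, hence monotone, so a pointwise lower bound on a value function passes through $\mathbb{P}_h^k$.

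First I would fix an episode $k$ in epoch $\mathcal{E}$ and an arbitrary pair $(s,a)$, condition on event $E$, and instantiate Lemma~\ref{lemma:ls_error_2} with $\pi$ equal to the episode-$k$ optimal policy $\pi^*_k$, so that $Q^{\pi}_{h,k} = Q^*_{h,k}$ and $V^{\pi}_{h+1,k} = V^*_{h+1,k}$. This yields the one-sided bound
\begin{align*}
\langle\bm\phi(s,a), \bm{w}_h^k\rangle \;\geq\; Q^*_{h,k}(s,a) + \mathbb{P}_h^k\bigl(V^k_{h+1} - V^*_{h+1,k}\bigr)(s,a) - \beta_k\lVert\bm\phi(s,a)\rVert_{(\Lambda_h^k)^{-1}}.
\end{align*}
For the base case $h = H$ one has $V^k_{H+1} \equiv V^*_{H+1,k} \equiv 0$, so the $\mathbb{P}$-term vanishes; combining the display with the clipping $Q^k_H = \min\{\langle\bm\phi,\bm{w}^k_H\rangle + \beta\lVert\bm\phi\rVert_{(\Lambda^k_H)^{-1}}, H\}$ and splitting on whether the clip is active (if so, $Q^k_H = H \ge Q^*_{H,k}$ trivially) gives $Q^k_H(s,a) \ge Q^*_{H,k}(s,a) - (\beta_k - \beta)\lVert\bm\phi\rVert_{(\Lambda^k_H)^{-1}} \ge Q^*_{H,k}(s,a) - \Delta_k$, which is the claimed inequality with $H - h + 1 = 1$.

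Then I would run the induction downward. Assume $Q^k_{h+1}(\cdot,\cdot) \ge Q^*_{h+1,k}(\cdot,\cdot) - (H-h)\Delta_k$ for step $h+1$. Taking $\max$ over actions propagates this to $V^k_{h+1}(\cdot) \ge V^*_{h+1,k}(\cdot) - (H-h)\Delta_k$ pointwise, and monotonicity of $\mathbb{P}_h^k$ gives $\mathbb{P}_h^k(V^k_{h+1} - V^*_{h+1,k})(s,a) \ge -(H-h)\Delta_k$. Plugging into the one-sided bound, adding the bonus $\beta\lVert\bm\phi\rVert_{(\Lambda^k_h)^{-1}}$ from the definition of $Q^k_h$, using $\beta_k - \beta \le \Delta_k$ and $\lVert\bm\phi\rVert_{(\Lambda^k_h)^{-1}} \le 1$, and again handling the clip-at-$H$ case separately, one gets $Q^k_h(s,a) \ge Q^*_{h,k}(s,a) - (H-h)\Delta_k - \Delta_k = Q^*_{h,k}(s,a) - (H-h+1)\Delta_k$, completing the step. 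No union bound is needed beyond that already built into event $E$, which holds uniformly over $\mathcal{E}\times[H]$, so the whole argument is deterministic on $E$.

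I do not expect a genuine obstacle: the argument is a routine perturbation of the exact-optimism proof. The points that need care are (i) choosing $c \ge C_0$ so that the exploration bonus actually dominates the purely statistical part of $\beta_k$, leaving a clean residual of at most $\Delta_k$; (ii) the bookkeeping of the linear-in-$(H-h)$ accumulation of $\Delta_k$, which must be paired correctly with the clip cases so the bound survives truncation at $H$; and (iii) using the fact that Lemma~\ref{lemma:ls_error_2} holds for \emph{every} policy, which is precisely what licenses the choice $\pi = \pi^*_k$ and keeps $V^*_{h+1,k}$ as the comparator throughout the recursion.
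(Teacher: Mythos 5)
Your proof is correct and follows essentially the same route as the paper: instantiate the per-policy least-squares error bound at the optimal policy and run backward induction, with the optimism defect growing by one variation term $\Delta_k$ per step and the clip at $H$ handled trivially. The only cosmetic difference is that you fold the reduction from the $\beta_k\lVert\bm\phi\rVert_{(\Lambda_h^k)^{-1}}$ bound of Lemma~\ref{lemma:ls_error_2} to an additive $\Delta_k$ (via $\lVert\bm\phi\rVert_{(\Lambda_h^k)^{-1}}\le 1$ and $c\ge C_0$) directly into the induction, whereas the paper isolates this as the separate Lemma~\ref{lemma:ls_error}; your bookkeeping of the $(H-h+1)$ accumulation is in fact cleaner than the paper's, which contains a harmless typo in the stated induction hypothesis.
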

\begin{sproof}
For the case when local variations are unknown, the least -squares estimation error have some additional terms that are linear in the local variations $B_{\bm\theta, \mathcal{E}}$ and $B_{\bm\mu, \mathcal{E}}$ (See Lemma~\ref{lemma:ls_error} in Appendix~\ref{sec:proof_alg1}). Then we can prove that $Q_h^k$ is an approximate upper bound of $Q^*_{h,k}$ via induction. For details, please see Appendix~\ref{sec:proof_alg1}.
\end{sproof}
By applying a similar proof technique as Theorem~\ref{theorem:reg_epoch1}, we can derive the dynamic regret within one epoch when local variations are unknown. 
\begin{theorem}
\label{thm:reg_epoch}
For each epoch $\mathcal{E}$ with epoch size $W$, if we set $\beta_k=cdH\sqrt{\log(2dW/p)}$, where $c$ is an absolute constant and $p\in(0,1)$, then the dynamic regret within that epoch is $\tilde{O}(\sqrt{d^3H^3W}+B_{\bm{\theta}, \mathcal{E}}\sqrt{d/H}W^{3/2}+B_{\bm{\mu},\mathcal{E}}\sqrt{dH}W^{3/2})$ with probability at least $1-p$, where $B_{\bm{\theta}, \mathcal{E}}$ and $B_{\bm{\mu}, \mathcal{E}}$ are the total variation within that epoch.
\end{theorem}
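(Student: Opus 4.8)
The plan is to reproduce the regret decomposition behind Theorem~\ref{theorem:reg_epoch1}, carrying along the extra slack produced by having only \emph{approximate} optimism. Fix the epoch $\mathcal{E}$, let $\tau$ be its first episode, $m=|\mathcal{E}|\le W/H$ the number of episodes in it, and $\pi_k$ the greedy policy played in episode $k$. I would condition throughout on the event $E$ of Lemma~\ref{lemma:error_prop}, which holds with probability at least $1-p/2$. Since the action played in episode $k$ at step $h$ is greedy w.r.t.\ $Q_h^k$, one has $V_h^k(s_h^k)=Q_h^k(s_h^k,a_h^k)$ and $V_{h,k}^{\pi_k}(s_h^k)=Q_{h,k}^{\pi_k}(s_h^k,a_h^k)$, so the per-episode dynamic regret splits as
\begin{align*}
V_{1,k}^*(s_1^k)-V_{1,k}^{\pi_k}(s_1^k)=\big(V_{1,k}^*(s_1^k)-V_1^k(s_1^k)\big)+\big(V_1^k(s_1^k)-V_{1,k}^{\pi_k}(s_1^k)\big)=:(\mathrm{I})_k+(\mathrm{II})_k .
\end{align*}

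For $(\mathrm{I})_k$, Lemma~\ref{lemma:approx_upper_bound} with $h=1$ gives $(\mathrm{I})_k\le H\big(B_{\bm\theta,\mathcal{E}}\sqrt{d(k-\tau)}+B_{\bm\mu,\mathcal{E}}H\sqrt{d(k-\tau)}\big)$; summing over $k\in\mathcal{E}$, using $\sum_{j=0}^{m-1}\sqrt{j}=O(m^{3/2})$ and $m\le W/H$, bounds $\sum_{k\in\mathcal{E}}(\mathrm{I})_k$ by $\tilde{O}\big(B_{\bm\theta,\mathcal{E}}\sqrt{d/H}\,W^{3/2}+B_{\bm\mu,\mathcal{E}}\sqrt{dH}\,W^{3/2}\big)$, which already produces the two variation-dependent terms in the claim. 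For $(\mathrm{II})_k$, I would set $\delta_h^k:=V_h^k(s_h^k)-V_{h,k}^{\pi_k}(s_h^k)=Q_h^k(s_h^k,a_h^k)-Q_{h,k}^{\pi_k}(s_h^k,a_h^k)$. Bounding $Q_h^k(s_h^k,a_h^k)\le\langle\bm\phi_h^k,\bm w_h^k\rangle+\beta\|\bm\phi_h^k\|_{(\Lambda_h^k)^{-1}}$ and invoking the least-squares error bound valid for the fixed choice $\beta=cdH\sqrt{\log(2dW/p)}$ (Lemma~\ref{lemma:ls_error} in the appendix, the unknown-variation analogue of Lemma~\ref{lemma:ls_error_2}, where the estimation error gains the extra additive term $\rho_h^k:=B_{\bm\theta,\mathcal{E}}\sqrt{d(k-\tau)}+B_{\bm\mu,\mathcal{E}}H\sqrt{d(k-\tau)}$), one obtains the one-step recursion
\begin{align*}
\delta_h^k\le\delta_{h+1}^k+\xi_{h+1}^k+2\beta\|\bm\phi_h^k\|_{(\Lambda_h^k)^{-1}}+\rho_h^k ,
\end{align*}
where $\xi_{h+1}^k:=\mathbb{P}_h^k\big(V_{h+1}^k-V_{h+1,k}^{\pi_k}\big)(s_h^k,a_h^k)-\big(V_{h+1}^k(s_{h+1}^k)-V_{h+1,k}^{\pi_k}(s_{h+1}^k)\big)$ is a martingale difference bounded by $2H$.

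Unrolling from $h=1$ to $H$ with $\delta_{H+1}^k=0$ and summing over $k\in\mathcal{E}$ gives $\sum_{k\in\mathcal{E}}(\mathrm{II})_k\le\sum_{k\in\mathcal{E}}\sum_{h=1}^{H}\xi_{h+1}^k+2\beta\sum_{k\in\mathcal{E}}\sum_{h=1}^{H}\|\bm\phi_h^k\|_{(\Lambda_h^k)^{-1}}+\sum_{k\in\mathcal{E}}\sum_{h=1}^{H}\rho_h^k$. The martingale sum is $\tilde{O}(H\sqrt{W})$ by Azuma--Hoeffding (with probability at least $1-p/2$), which is dominated by the next term. For the bonus sum, Cauchy--Schwarz and the elliptical potential lemma give $\sum_{k\in\mathcal{E}}\|\bm\phi_h^k\|_{(\Lambda_h^k)^{-1}}\le\sqrt{m}\,\big(\sum_{k\in\mathcal{E}}\|\bm\phi_h^k\|_{(\Lambda_h^k)^{-1}}^2\big)^{1/2}=\tilde{O}(\sqrt{md})$ for each $h$, so summing over $h\in[H]$ and multiplying by $2\beta=\tilde{O}(dH)$ yields $\tilde{O}\big(dH\cdot H\sqrt{md}\big)=\tilde{O}(H^{3/2}d^{3/2}W^{1/2})=\tilde{O}(\sqrt{d^3H^3W})$. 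Finally $\sum_{k\in\mathcal{E}}\sum_{h=1}^{H}\rho_h^k$ has the same form as the bound for $\sum_{k\in\mathcal{E}}(\mathrm{I})_k$, hence is again $\tilde{O}\big(B_{\bm\theta,\mathcal{E}}\sqrt{d/H}\,W^{3/2}+B_{\bm\mu,\mathcal{E}}\sqrt{dH}\,W^{3/2}\big)$. Adding the bounds for $\sum_{k\in\mathcal{E}}(\mathrm{I})_k$ and $\sum_{k\in\mathcal{E}}(\mathrm{II})_k$ and taking a union bound over the two failure events gives the claimed dynamic regret with probability at least $1-p$.

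The step I expect to be the main obstacle is the drift-aware least-squares bound (Lemma~\ref{lemma:ls_error}): one must decompose $\langle\bm\phi,\bm w_h^k\rangle-Q_{h,k}^{\pi_k}-\mathbb{P}_h^k(V_{h+1}^k-V_{h+1,k}^{\pi_k})$ into the self-normalized noise term absorbed by event $E$, the usual ridge bias $O(\|\bm\phi\|_{(\Lambda_h^k)^{-1}})$, and a third bias caused by fitting data generated under the stale parameters $\bm\theta_{h,l},\bm\mu_{h,l}$ for $l\in\{\tau,\dots,k-1\}$; showing that this third piece telescopes into $B_{\bm\theta,\mathcal{E}}\sqrt{d(k-\tau)}+B_{\bm\mu,\mathcal{E}}H\sqrt{d(k-\tau)}$ — via $\|(\Lambda_h^k)^{-1}\sum_{l}\bm\phi_h^l(\bm\phi_h^l)^\top\|\le1$, the triangle inequality along the chain of parameters, and Cauchy--Schwarz over the $(k-\tau)$ summands — is the delicate part. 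A secondary subtlety is the bookkeeping that keeps the approximate-optimism slack in $(\mathrm{I})_k$ and the accumulated $\rho_h^k$ in the recursion of $(\mathrm{II})_k$ at the \emph{same} order, so that they merely add rather than compound and the variation-dependent terms stay at $W^{3/2}$.
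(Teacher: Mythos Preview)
Your proposal is correct and follows essentially the same route as the paper. Your explicit split into $(\mathrm{I})_k+(\mathrm{II})_k$ is exactly the paper's move of applying Lemma~\ref{lemma:approx_upper_bound} to bound $V^*_{1,k}\le V_1^k+\text{slack}$ and then handling $\delta_1^k=V_1^k-V^{\pi_k}_{1,k}$ via the recursion of Lemma~\ref{lemma:recurive_formula}; the martingale, elliptical-potential, and $\sum_k\sqrt{k-\tau}=O(m^{3/2})$ computations are identical, and your identification of Lemma~\ref{lemma:ls_error} as the crux (with the Cauchy--Schwarz/telescoping argument) matches how the paper derives it from Lemma~\ref{lemma:ls_error_2}.
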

By summing regret over epochs and applying a union bound over all epochs, we obtain the dynamic regret of \texttt{LSVI-UCB-Restart} for the whole time horizon. 
\begin{theorem}
\label{thm:reg_bound2}
If we set $\beta=cdH\sqrt{\log(2dT/p)}$, then the dynamic regret of \texttt{LSVI-UCB-Restart} is $\tilde{O}(d^{3/2}H^{3/2}TW^{-1/2}+B_{\bm{\theta}}d^{1/2}H^{-1/2}W^{3/2}+B_{\bm{\mu}}d^{1/2}H^{1/2}W^{3/2})$, with probability at least $1-p$.
\end{theorem}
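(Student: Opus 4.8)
The plan is to obtain the whole-horizon bound by summing the per-epoch dynamic regret of Theorem~\ref{thm:reg_epoch} over the $\lceil T/W\rceil$ epochs and controlling the accumulated failure probability with a union bound. First I would instantiate Theorem~\ref{thm:reg_epoch} inside each epoch $\mathcal{E}$ with a per-epoch failure probability $p' := p/\lceil T/W\rceil$ in place of $p$. Since $W\lceil T/W\rceil \le T+W \le 2T$, we have $\log(2dW/p') = \log\!\big(2dW\lceil T/W\rceil/p\big)\le \log(4dT/p) = O(\log(dT/p))$, so the choice $\beta = cdH\sqrt{\log(2dT/p)}$ in the statement is (up to an absolute constant absorbed into $c$) a valid confidence width for every epoch simultaneously; this is precisely why $T$ replaces $W$ inside the logarithm. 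Both sources of randomness used in the proof of Theorem~\ref{thm:reg_epoch} — the event $E$ of Lemma~\ref{lemma:error_prop} and the Azuma--Hoeffding concentration of the martingale-difference terms in the recursive regret decomposition — then hold within each epoch with the required probability, and a union bound over the $\lceil T/W\rceil$ epochs shows that all per-epoch regret bounds hold simultaneously with probability at least $1-p$.

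On this good event I would add up the per-epoch bounds. The statistical term $\tilde O(\sqrt{d^3H^3W})$ occurs once per epoch and hence contributes $\lceil T/W\rceil\cdot\tilde O(\sqrt{d^3H^3W}) = \tilde O(d^{3/2}H^{3/2}TW^{-1/2})$, after noting that $\sqrt{d^3H^3W}=d^{3/2}H^{3/2}W^{1/2}\le d^{3/2}H^{3/2}TW^{-1/2}$ whenever $W\le T$. For the variation terms I would use that the per-epoch local variations telescope to the global variation: summing $B_{\bm{\theta},\mathcal{E}} = \sum_{k\in\mathcal{E}}\sum_{h=1}^H\|\bm{\theta}_{h,k}-\bm{\theta}_{h,k-1}\|_2$ over all epochs runs the index $k$ through $[K]$, so $\sum_{\mathcal{E}}B_{\bm{\theta},\mathcal{E}} = B_{\bm{\theta}}$, and likewise $\sum_{\mathcal{E}}B_{\bm{\mu},\mathcal{E}} = B_{\bm{\mu}}$. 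Since the multipliers $\sqrt{d/H}\,W^{3/2}$ and $\sqrt{dH}\,W^{3/2}$ in front of $B_{\bm{\theta},\mathcal{E}}$ and $B_{\bm{\mu},\mathcal{E}}$ do not depend on the epoch, these terms aggregate to $B_{\bm{\theta}}d^{1/2}H^{-1/2}W^{3/2}$ and $B_{\bm{\mu}}d^{1/2}H^{1/2}W^{3/2}$, respectively. Combining the three contributions yields the claimed $\tilde O\!\big(d^{3/2}H^{3/2}TW^{-1/2}+B_{\bm{\theta}}d^{1/2}H^{-1/2}W^{3/2}+B_{\bm{\mu}}d^{1/2}H^{1/2}W^{3/2}\big)$ dynamic regret.

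The step that needs the most care — though it is bookkeeping rather than a genuine difficulty — is the coupling between the union bound and the confidence width: one must check that inflating the logarithmic factor from $\log(W/p)$ to $\log(T/p)$ suffices to pay for intersecting $\lceil T/W\rceil$ epoch-level events without perturbing the polynomial dependence on $d,H,W,T$. A secondary point worth stating explicitly is that the dynamic regret is measured against the per-episode optimal policies of each episode, so the global dynamic regret is \emph{exactly} the sum of the epoch-level dynamic regrets; the epoch-level decomposition behind Theorem~\ref{thm:reg_epoch} already absorbs the fact that $Q_h^k$ is only an approximate optimistic upper bound (Lemma~\ref{lemma:approx_upper_bound}), and no extra cross-epoch comparison term arises.
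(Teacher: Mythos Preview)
Your proposal is correct and follows essentially the same approach as the paper: instantiate Theorem~\ref{thm:reg_epoch} in each of the $\lceil T/W\rceil$ epochs with failure probability $p/\lceil T/W\rceil$, apply a union bound, and sum the per-epoch bounds using $\sum_{\mathcal{E}} B_{\bm\theta,\mathcal{E}} = B_{\bm\theta}$ and $\sum_{\mathcal{E}} B_{\bm\mu,\mathcal{E}} = B_{\bm\mu}$. Your explicit check that replacing $\log(W/p)$ by $\log(T/p)$ absorbs the union-bound inflation is more detailed than what the paper writes, but the argument is the same.
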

By properly tuning the epoch size $W$, we can obtain a tight regret bound for the case of unknown local variations as follows.
\begin{corollary}
Let $W=\lceil B^{-1/2}T^{1/2}d^{1/2}H^{-1/2}\rceil H$ and $\beta_k=cdH\sqrt{\log(2dW/p)}$. Then \texttt{LSVI-UCB-Restart} achieves $\tilde{O}(B^{1/4}d^{5/4}H^{5/4}T^{3/4})$ dynamic regret, with probability at least $1-p$.
\end{corollary}
\begin{remark}
Our algorithm has a slightly worse regret bound compared with \citet{wei2021non}. However, our algorithm has a much better better computational complexity, since we only require to maintain one instance of the base algorithm. We are also the first one to conduct numerical experiments on online exploration for non-stationary MDPs (Section~\ref{sec:exp}). How to achieve the $\tilde{O}(T^{2/3})$ dynamic regret bound without prior knowledge and with only one base instance is still an open problem. 
\end{remark}

\section{\texttt{Ada-LSVI-UCB-Restart}: a Parameter-free Algorithm}
\label{sec:alg2}
In practice, the total variations $B_{\bm{\theta}}$ and $B_{\bm\mu}$ are unknown. To mitigate this issue, we present a parameter-free algorithm \texttt{Ada-LSVI-UCB-Restart} and its dynamic regret bound. 
\subsection{Algorithm Description}
Inspired by bandit-over-bandit mechanism~\citep{cheung2019learning}, we develop a new parameter-free algorithm. The key idea is to use \texttt{LSVI-UCB-Restart} as a subroutine (set $\beta=cdH\sqrt{\log(2dT/p)}$ since we assume total variations are unknown), and periodically update the epoch size based on the historical data under the time-varying $\mathbb{P}$ and $r$ (potentially adversarial). More specifically, \texttt{Ada-LSVI-UCB-Restart}  (Algorithm~\ref{alg:ada_lsvi_ucb_restart}) divides the whole time horizon into $\lceil\frac{T}{HM}\rceil$ blocks of equal length $M$ episodes (the length of the last block can be smaller than $M$ episodes), and specifies a set $J_
W$ from which epoch size is drawn. For each block $i\in[\lceil\frac{T}{HM}\rceil]$, \texttt{Ada-LSVI-UCB} runs a master algorithm to select the epoch size $W_i$ and runs \texttt{LSVI-UCB-Restart} with $W_i$ for the current block. After the end of this block, the total reward of this block is fed back to the master algorithm, and the posteriors of the parameters are updated accordingly.

For the detailed master algorithm, we select \texttt{EXP3-P}~\citep{bubeck2012regret} since it is able to deal with non-oblivious adversary. Now we present the details of \texttt{Ada-LSVI-UCB-Restart}. We set the length of each block $M$ and the feasible set of epoch size $J_W$ as follows:
\begin{align*}
    M=\lceil 5T^{1/2}d^{1/2}H^{-1/2}\rceil, J_W=\{H, 2H, 4H,\ldots, MH\}.
\end{align*}
The intuition of designing the feasible set for epoch size $J_W$ is to guarantee it can well-approximate the optimal epoch size with the knowledge of total variations while on the other hand make it as small as possible, so the learner do not lose much by adaptively selecting the epoch size from $J_W$. This intuition is more clear when we derive the dynamic regret bound of \texttt{Ada-LSVI-UCB-Restart}. Denoting $|J_W|=\Delta$, the master algorithm \texttt{EXP3-P} treats each element of $J_W$ as an arm and updates the probabilities of selecting each feasible epoch size based on the reward collected in the past. It begins by initializing
\begin{align}
\label{eqn:ada_init}
    \alpha=0.95\sqrt{\frac{\ln\Delta}{\Delta\lceil T/MH\rceil}},\ \beta=\sqrt{\frac{\ln\Delta}{\Delta\lceil T/MH\rceil}},\ \gamma=1.05\sqrt{\frac{\ln\Delta}{\Delta\lceil T/MH\rceil}},\ q_{l, 1}=0,\ l\in[\Delta],
\end{align}
where $\alpha,\beta,\gamma$ are parameters used in \texttt{EXP3-P} and $q_{l,1}, l\in[\Delta]$ are the initialization of the estimated total reward of running different epoch lengths. At the beginning of the block $i$, the agent first sees the initial state $s^{(i-1)H}_1$, and updates the probability of selecting different epoch lengths for block $i$ as
\begin{align}
\label{eqn:prob_update}
    u_{l, i}=(1-\gamma)\frac{\exp(\alpha q_{l, i})}{\sum_{l\in[\Delta]}\exp(\alpha q_{l, i})}+\frac{\gamma}{\Delta}.
\end{align}
Then the master algorithm samples $l_i\in[\Delta]$ according to the updated distribution $\{u_{l, i}\}_{i\in[\Delta]}$; the epoch size $W_i$ for the block $i$ is chosen as $l_i$-th element in $J_W$, $\lfloor M^{l_i/\lfloor\ln M\rfloor}\rfloor H$. After selecting the epoch size $W_i$, \texttt{Ada-LSVI-UCB} runs a new copy of \texttt{LSVI-UCB-Restart}  with that epoch size. By the end of each block, \texttt{Ada-LSVI-UCB-Restart} observes the total reward of the current block, denoted as $R_i(W_i, s_{1}^{(i-1)H})$, then the algorithm updates the estimated total reward of running different epoch sizes (divide $R_i(W_i, s_1^{(i-1)H})$ by $MH$ to normalize):
\begin{align}
\label{eqn:reward_update}
    q_{l, i+1}=q_{l, i}+\tfrac{\beta+\mathds{1}\{l=l_i\}R_i(W_i, s_1^{(i-1)H})/MH}{u_{l, i}}.
\end{align}
\begin{algorithm}[htb]
    \caption{\texttt{ADA-LSVI-UCB-Restart} Algorithm}
    \label{alg:ada_lsvi_ucb_restart}
    \begin{algorithmic}[1]
    \REQUIRE time horizon $T$, block length $M$, feasible set of epoch size $J_w$
    \STATE Initialize $\alpha$, $\beta$, $\gamma$ and $\{q_{l, 1}\}_{l\in[\Delta]}$ according to Eq.~\ref{eqn:ada_init}.
    \FORALL{$i=1, 2, \ldots, \lceil T/HM\rceil$}
    \STATE Receive the initial state $s_1^{(i-1)H}$
    \STATE Update the epoch size selection distribution $\{u_{l, i}\}_{l\in[\Delta]}$ according to Eq.~\ref{eqn:prob_update}
    \STATE Sample $l_i\in[\Delta]$ from the updated distribution $\{u_{l, i}\}_{l\in[\Delta]}$, then set the epoch size for block $i$ as $W_i=\lfloor M^{l_i/\lfloor\ln M\rfloor}\rfloor H $.
    \FORALL{$t=(i-1)MH+1, \ldots, \min(iMH, T)$}
        \STATE Run \texttt{LSVI-UCB-Restart} algorithm with epoch size $W_i$
    \ENDFOR
    \STATE After observing the total reward for block $i$, $R_i(W_i, s_1^{(i-1)H})$, update the estimated total reward of running different epoch sizes $\{q_{l, i+1}\}_{l\in[\Delta]}$ according to Eq.~\ref{eqn:reward_update}
    \ENDFOR
    \end{algorithmic}
\end{algorithm}
\subsection{Regret Analysis}
Now we present the dynamic regret bound achieved by \texttt{Ada-LSVI-UCB-Restart}.
\begin{theorem}
\label{thm:reg_ada_lsvi}
The dynamic regret of \texttt{Ada-LSVI-UCB-Restart} is $\tilde{O}(B^{1/4}d^{5/4}H^{5/4}T^{3/4})$.
\end{theorem}
\begin{sproof}
To analyze the dynamic regret of \texttt{Ada-LSVI-UCB-Restart}, we decompose the dynamic regret into two terms. The first term is regret incurred by always selecting the best $W^{\dagger}$ from $J_W$, and the second term is regret incurred by adaptively selecting the epoch size from $J_W$ via \texttt{EXP3-P} rather than always selecting $W^{\dagger}$. For the second term, we reduce to an adversarial bandit problem and directly use the regret bound of \texttt{EXP3-P}~\citep{bubeck2012regret,auer2002nonstochastic}. For the first term, we show that $W^{\dagger}$ can well-approximate the optimal epoch size with the knowledge of $B_{\bm{\mu}}$ and $B_{\bm{\theta}}$, up to constant factors. Thus we can use Theorem~\ref{thm:reg_bound2} to bound the first term. For details, see Appendix C.
\end{sproof}
\begin{remark}
Using the master algorithm to select the window size is reminiscent of model selection approaches to online RL~\citep{agarwal2017corralling,pacchiano2020model,lee2021online,abbasi2020regret}. Typically model selection approaches achieve worse rates compared to the best base algorithm. However, in our setting we can discretize the feasible set for epoch size $J_W$ at a proper granularity, so we can control the additional regret incurred by the master algorithm with respect to the best epoch size in $J_W$ to be reasonably small. That is, we have more freedom to choose the base algorithms compared to some problem settings in the model selection literature. As a result, the regret bound does not lose much compared to the case where we have knowledge of total variation $B$.
\end{remark}

\section{Experiments}
\label{sec:exp}
In this section, we perform empirical experiments on synthetic datasets to illustrate the effectiveness of $\texttt{LSVI-UCB-Restart}$ and $\texttt{Ada-LSVI-UCB-Restart}$. We compare the cumulative rewards of the proposed algorithms with five baseline algorithms: \texttt{Epsilon-Greedy}~\citep{watkins1989learning}, \texttt{Random-Exploration}, \texttt{LSVI-UCB}~\citep{jin2020provably}, \texttt{OPT-WLSVI}~\citep{touati2020efficient}, and \texttt{MASTER}~\citep{wei2021non}. As discussed before, we are the first one to perform numerical experiments on online exploration for non-stationary MDPs and demonstrate the effectiveness of proposed algorithms.

The agent takes actions uniformly in \texttt{Random-Exploration}. In \texttt{Epsilon-Greedy}, instead of adding a bonus term as in \texttt{LSVI-UCB}, the agent takes the greedy action according to the current estimate of $Q$ function with probability $1-\epsilon$, and takes the action uniformly at random with probability $\epsilon$, where we set $\epsilon=0.05$. For \texttt{LSVI-UCB} and \texttt{LSVI-UCB-Restart}, we set $\beta=0.001cdH\sqrt{\log(200dT)}$. In addition, for \texttt{LSVI-UCB-Restart} we test the performance of two cases: (1) known global variation, where we set $W=\lceil B^{-1/2}T^{1/2}d^{1/2}H^{-1/2}\rceil H$; (2) unknown global variation (denoted \texttt{LSVI-UCB-Unknown}), where we set $W=\lceil T^{1/2}d^{1/2}H^{-1/2}\rceil H$ (the dynamic regret bound is $\tilde{O}(Bd^{5/4}H^{5/4}T^{3/4})$ for this case). For \texttt{ADA-LSVI-UCB-Restart}, we set the length of each block $M=\lceil 0.2T^{1/2}d^{1/2}H^{1/2}\rceil$. Note that the tuning of hyperparameters is different from our theoretical derivations by some constant factors. The reason is that the worst-case analysis is pessimistic and we ignore the constant factor in the derivation. 

\paragraph{Settings} We consider an MDP with $S=15$ states, $A=7$ actions, $H=10$, $d=10$, and $T=20000$. In the \textit{abruptly-changing} environment, the linear MDP changes abruptly every 100 episodes to another linear MDP with different transition function and reward function. The changes happen periodically by cycling through different linear MDPs; we have 5 different linear MDPs in total. In the gradually-changing environment, we consider the same set of 5 linear MDPs, $\{M_0, M_1,$ $\ldots, M_4\}$. The environment changes smoothly from $M_i$ to $M_{(i+1)\mod 5}$ over every 100 episodes. To be more specific, at episode $100i$ ($i$ is a non-negative integer), the MDP model is $M_{i\mod 5}$ parameterized by latent vectors $\{\bm{\theta}_{h}^{i\mod 5}\}_{h=1}^{H}$ and $\{\bm{\mu}_{h}^{i\mod 5}(\mathcal{S})\}_{h=1}^{H}$. The latent vectors of the MDP from episode $100i$ to $100(i+1)$ are the linear interpolations of those of $M_{i\mod 5}$ and $M_{(i+1)\mod 5}$, i.e., at episode $k$ ($100i\leq k\leq 100(i+1)$), $\bm{\theta}_{h, k}=(1-\frac{k-100i}{100})\bm{\theta}_h^{i\mod 5}+\frac{k-100i}{100}\bm{\theta}_h^{(i+1)\mod 5}, \forall h\in[H]$, $\bm{\mu}_{h, k}(\mathcal{S})=(1-\frac{k-100i}{100})\bm{\mu}_h^{i\mod 5}(\mathcal{S})+\frac{k-100i}{100}\bm{\mu}_h^{(i+1)\mod 5}(\mathcal{S}),$ $\forall h\in[H]$.
 To make the environment challenging for exploration, our construction falls into the category of combination lock \citep{koenig1993complexity}. For each of these 5 linear MDPs, there is only one good (and different) chain that contains a huge reward at the end, but 0 reward for the rest of the chain. Further, any sub-optimal action has small positive rewards that would attract the agent to depart from the optimal route. Therefore, the agent must perform ``deep exploration'' \citep{osband2019deep} to obtain near-optimal policy. The details of the constructions are in Appendix~\ref{apx:experioment}. Here we report the cumulative rewards and the running time of all algorithms averaged over 10 trials. 

From Figure~\ref{fig:cumulativereward20}, we see \texttt{LSVI-UCB-Restart} with the knowledge of global variation drastically outperforms all other methods designed for stationary environments , in both abruptly-changing and gradually-changing environments, since it restarts the estimation of the $Q$ function with knowledge of the total variations. \texttt{Ada-LSVI-UCB-Restart} also outperforms the baselines because it also takes the nonstationarity into account by periodically updating the epoch size for restart. In addition, \texttt{Ada-LSVI-UCB-Restart} has a huge gain compared to \texttt{LSVI-UCB-Unknown}, which agrees with our theoretical analysis. This suggests that \texttt{Ada-LSVI-UCB-Restart} works well when the knowledge of global variation is unavailable. Our proposed algorithms not only perform systemic exploration, but also adapt to the environment change. 

Compared to \texttt{OPT-WLSVI} and \texttt{MASTER}, our proposed algorithms achieve comparable empirical performance. More specifically, \texttt{MASTER} outperforms our proposed algorithm which agrees with its dynamic regret upper bound. However, the variance of \texttt{MASTER} is larger due to the random scheduling of multiple base algorithms. Our algorithm outperforms \texttt{OPT-WLSVI} in the abrupt change setting, but has worse performance in the gradual change setting, which agrees with the empirical findings in the nonstationary contextual bandit literature~\citep{zhao2020simple}. The main advantage of our algorithm compared to \texttt{OPT-WLSVI} and \texttt{MASTER} is its computational efficiency, as demonstrated by Figure~\ref{fig:cumulativereward20}.  The reason is that our algorithm only requires the most recent data to estimate the $Q$-function, while the other two require the entire history. \texttt{MASTER} additionally requires maintaining multiple base instances at different scales, which further increases the computational burden.

\begin{figure}[htb]
    \centering
    \includegraphics[width=0.49\textwidth]{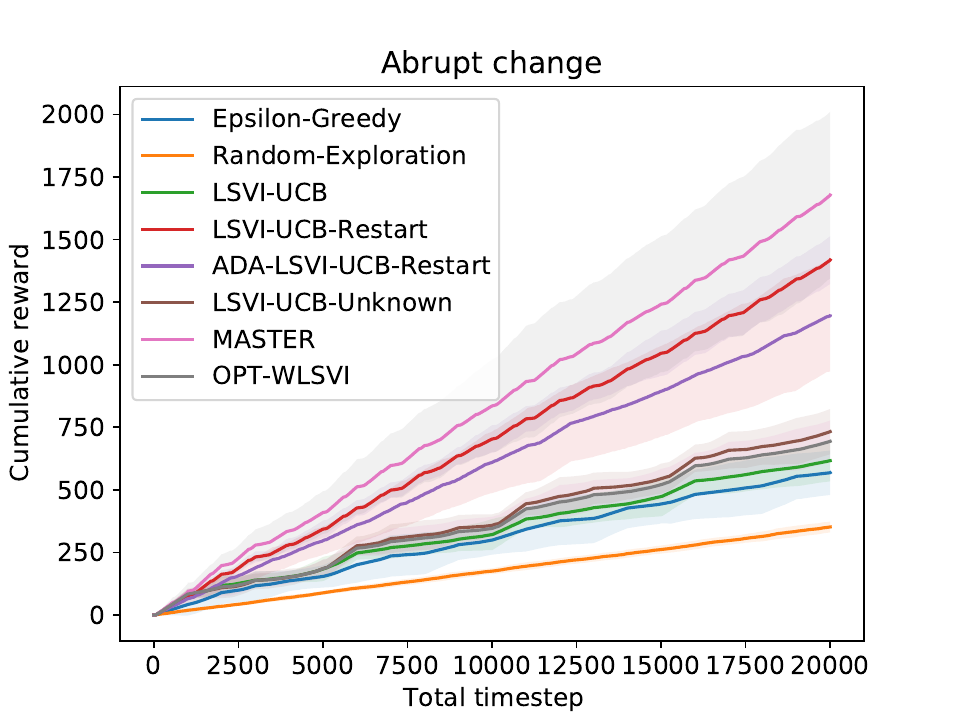}\
    \includegraphics[width=0.49\textwidth]{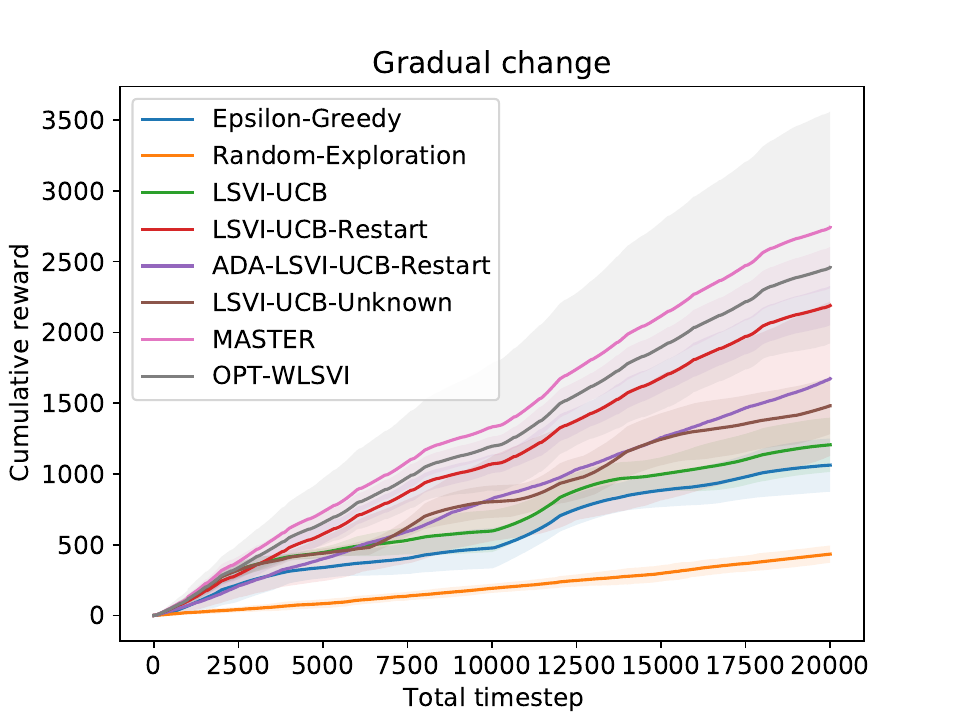}
    \caption{Comparisons of different methods on cumulative reward under two different environments. The results are averaged over 10 trials and the error bars show the standard deviations. The environment changes abruptly in the left subfigure, whereas the environment changes gradually in the right subfigure.}
    \label{fig:cumulativereward20}
\end{figure}

From Figure~\ref{fig:cumulativereward20}, we find that the restart strategy works better under abrupt changes than under gradual changes, since the gap between our algorithms and the baseline algorithms designed for stationary environments is larger in this setting. The reason is that the algorithms designed to explore in stationary MDPs are generally insensitive to abrupt change in the environment. For example, UCB-type exploration does not have incentive to take  actions other than the one with the largest upper confidence bound of $Q$-value, and if it has collected sufficient number of samples, it very likely never explores the new optimal action thereby taking the former optimal action forever. On the other hand, in gradually-changing environment, \texttt{LSVI-UCB} and \texttt{Epsilon-Greedy} can perform well in the beginning when the drift of environment is small. However, when the change of environment is greater, they no longer yield satisfactory performance since their $Q$ function estimate is quite off. This also explains why \texttt{LSVI-UCB} and \texttt{Epsilon-Greedy} outperform \texttt{ADA-LSVI-UCB} at the beginning in the gradually-changing environment, as shown in Figure~\ref{fig:cumulativereward20}.

Figure~\ref{fig:runtime20} shows that the running times of \texttt{LSVI-UCB-Restart} and \texttt{Ada-LSVI-UCB-Restart} are roughly the same. They are much less compared with \texttt{MASTER}, \texttt{OPT-WLSVI}, \texttt{LSVI-UCB}, \texttt{Epsilon-Greedy}. This is because \texttt{LSVI-UCB-Restart} and \texttt{Ada-LSVI-UCB-Restart} can automatically restart according to the variation of the environment and thus have much smaller computational burden since it does not need to use the entire history to compute the current policy at each time step. The running time of \texttt{LSVI-UCB-Unknown} is larger than \texttt{LSVI-UCB-restart} since the epoch larger is larger due to the lack of the knowlege of total variation $B$, but it still does not use the entire history to compute its policy. Although \texttt{Random-Exploration} takes the least time, it cannot find the near-optimal policy. This result further demonstrates that our algorithms are not only sample-efficient, but also computationally tractable.
\begin{figure}[htb]
    \centering
    \includegraphics[width=0.7\textwidth]{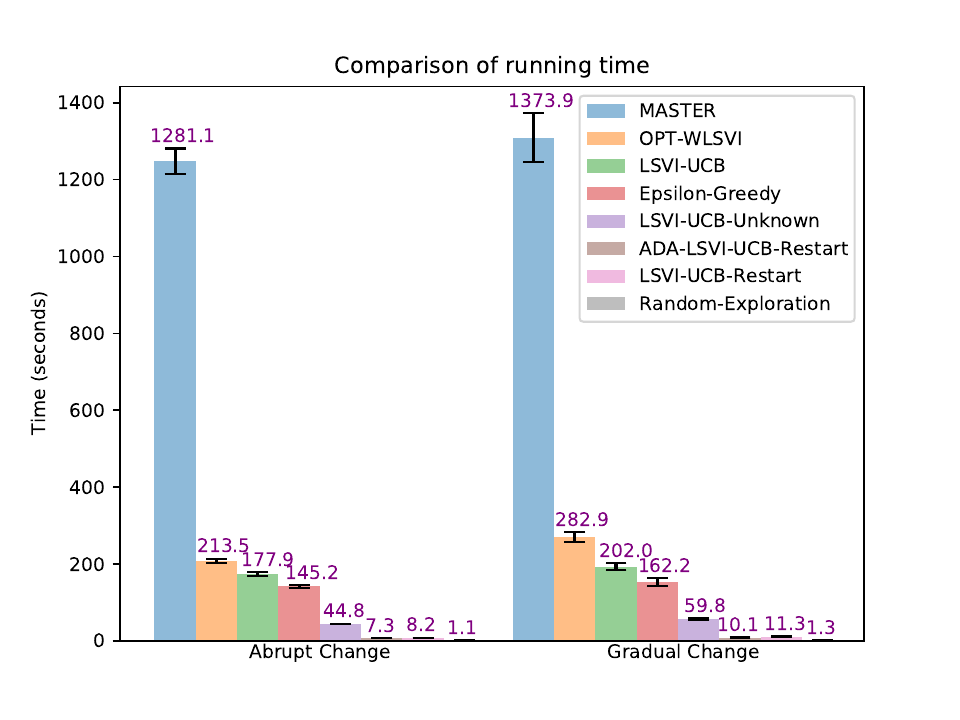}
    \caption{Comparisons of different methods on running time for two different environments. The results are averaged over 10 trials and the error bars show the standard deviations. See Appendix \ref{sec:hardware} for details on hardware.} 
    \label{fig:runtime20}
\end{figure}

\section{Conclusion and Future Work}
\label{sec:conclusion}
In this paper, we studied nonstationary RL with time-varying reward and transition functions. We focused on the class of nonstationary linear MDPs such that linear function approximation is sufficient to realize any value function. We first incorporated the epoch start strategy into \texttt{LSVI-UCB} algorithm~\citep{jin2020provably} to propose the \texttt{LSVI-UCB-Restart} algorithm with low dynamic regret when the total variations are known. We then designed a parameter-free algorithm \texttt{Ada-LSVI-UCB-Restart} that enjoys a slightly worse dynamic regret bound without knowing the total variations. We derived a minimax regret lower bound for nonstationary linear MDPs to demonstrate that our proposed algorithms are near-optimal. Specifically, when the local variations are known, \texttt{LSVI-UCB-Restart} is near order-optimal except for the dependency on feature dimension $d$, planning horizon $H$, and some poly-logarithmic factors. Numerical experiments demonstrates the effectiveness of our algorithms. 

A number of future directions are of interest. An immediate step is to investigate whether the dependence on the dimension $d$ and planning horizon $H$ in our bounds can be improved, and whether the minimax regret lower bound can also be improved. It would also be interesting to investigate the setting of nonstationary RL under general function approximation~\citep{wang2020provably, du2021bilinear, jin2021bellman}, which is closer to modern RL algorithms in practice. Recall that our algorithm is more computationally efficient than other works. Another closely related and interesting direction is to study the low-switching cost \citep{gao2021provably} or deployment efficient \citep{huang2021towards} algorithm in the nonstationary RL setting. Finally, our algorithm is based on the Optimism in Face of Uncertainty. There is another broad category of algorithms called Thompson Sampling (TS) \citep{agrawal2017optimistic,russo2019worst,agrawal2021improved,xiong2021randomized,ishfaq2021randomized,dann2021provably,zhang2022feel}. It would be an interesting avenue to see whether empirically appealing TS algorithms are also suitable in nonstationary RL settings.

\bibliographystyle{tmlr}
\bibliography{refs.bib}

\clearpage
\appendix

\section{Proofs in Section~\ref{sec:reg_lb}}
\label{sec:proof_reg_lb}
To prove the minimax dynamic regret lower bound for the homogeneous setting, we first prove the minimax regret bound for linear MDP with homogeneous transition function. 
\begin{theorem}
\label{thm:reg_lb_stationary}
For any algorithm, if $d\geq 4$ and $T\geq 64(d-3)^2H$, then there exists at least one stationary linear MDP instance that incurs regret at least $\Omega(d\sqrt{HT})$.
\end{theorem}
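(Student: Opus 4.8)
The plan is to reduce the lower bound for stationary linear MDPs to a collection of independent stochastic bandit problems embedded in the feature map, and then invoke a standard information-theoretic argument (Le Cam's two-point method or Assouad's lemma, applied over roughly $d$ coordinates and over the $H$ stages). First I would construct a family of hard linear MDP instances. The idea is to use the ambient dimension to encode $d-O(1)$ independent ``directions,'' each behaving like a two-armed bandit with a small reward gap $\Delta$, and to replicate this structure across all $H$ stages so that errors compound additively over the horizon. Concretely, one designs a feature map $\phi(s,a)\in\R^d$ and measures $\mu_h(\cdot)$ such that the transition kernel and reward are linear in $\phi$, the state space has a single ``decision'' state per stage with $2^{d-3}$ (or $\Theta(d)$, depending on the precise construction) available actions, and the unknown parameter selects which action is optimal in each coordinate/stage. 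The extra constant offset in $d$ (the hypothesis $d\ge 4$) accounts for the coordinates needed to represent the constant feature and normalization so that the MDP genuinely satisfies the linear MDP assumption with bounded norms.

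The key steps, in order, are: (1) Fix the construction and verify it is a valid stationary linear MDP, i.e., $\|\phi(s,a)\|\le 1$, $\|\mu_h\|$ and $\|\theta_h\|$ are bounded by $O(\sqrt d)$, and the reward lies in $[0,1]$; this is where the condition $d\ge 4$ enters. (2) Show that the per-episode suboptimality of any policy is lower bounded by $\Delta$ times the number of stage/coordinate pairs on which the policy picks a suboptimal action, so that total regret over $K=T/H$ episodes is $\gtrsim \Delta \sum_{\text{stages, coords}} (\text{mistakes})$. (3) Apply an Assouad-type argument: for each of the $\Theta(d)$ coordinates and each of the $H$ stages, distinguishing the two hypotheses requires $\Omega(1/\Delta^2)$ visits, and by the chain rule / divergence decomposition the KL divergence between the induced trajectory distributions over $K$ episodes is $O(K H \Delta^2)$ — here the multiplicative $H$ comes from the $H$ stages each contributing fresh information, but only $O(KH)$ total samples are available. (4) Optimize: balancing the regret lower bound $\Delta K H d$ against the constraint $K H \Delta^2 \lesssim$ (information needed to identify $\Theta(dH)$ bits) yields the optimal choice $\Delta \asymp \sqrt{d/(KH)} = \sqrt{dH/T}$, which plugged back gives regret $\Omega(d\sqrt{HT})$. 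The technical threshold $T \ge 64(d-3)^2 H$ is exactly the condition that makes $\Delta \le 1$ (or $\le 1/2$), so that the rewards stay in $[0,1]$ and the bandit gaps are well-defined.

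The main obstacle I expect is the simultaneous handling of the horizon $H$ and the dimension $d$: one must ensure that mistakes at different stages genuinely contribute additively to regret (requiring the construction to ``lock in'' a bad state once a wrong action is taken, e.g., an absorbing low-reward state, or a layered chain where each stage's action is scored independently), while at the same time the learner's information about stage-$h$ parameters is limited to roughly $K$ effective samples per stage — not $KH$ — because reaching stage $h$ with an informative feature is itself contingent on earlier choices. Balancing these two effects to get the right exponents on both $d$ and $H$ is the delicate part; a clean way around it is to make every stage reachable with a state-action pair whose feature is informative about that stage's parameter regardless of history (a ``reset-like'' structure within the episode), decoupling the $H$ stages into $H$ parallel bandit instances, at the cost of being slightly lossy but still matching the claimed $\sqrt{H}$ dependence. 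The remaining steps — the divergence computations and the final optimization over $\Delta$ — are routine once the construction and the regret-decomposition are in place.
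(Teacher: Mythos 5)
Your high-level strategy (an ensemble of hard instances embedded in the feature coordinates, an Assouad/two-point argument per coordinate, then balancing gap against information) is the same family of argument the paper uses, but as written your quantitative bookkeeping does not produce $\Omega(d\sqrt{HT})$. Two concrete problems. First, the KL accounting in step (3) is wrong for an Assouad-type comparison: when you compare two instances that differ in a \emph{single} (stage, coordinate) bit, only the observations informative about that one stage and coordinate contribute, so the divergence over $K$ episodes is of order $K\epsilon^2$ (with $\epsilon$ the per-coordinate gap seen through the $1/\sqrt{d}$-scaled actions), not $O(KH\Delta^2)$ with a ``multiplicative $H$ from fresh information at each stage.'' Charging $H$ times more information per episode while simultaneously claiming the per-episode regret scales with $H$ double-counts the horizon and destroys the $\sqrt{H}$ factor. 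Second, the final balance in step (4) is internally inconsistent: with $K=T/H$ one has $\sqrt{d/(KH)}=\sqrt{d/T}\neq\sqrt{dH/T}$, and plugging either value of $\Delta$ into your stated regret expression $\Delta KHd$ gives $d^{3/2}\sqrt{T}$ or $d^{3/2}\sqrt{HT}$, not the claimed $d\sqrt{HT}$; the quantities ``per-coordinate parameter entry,'' ``per-coordinate per-episode regret'' (which carries an extra $1/\sqrt{d}$), and ``parameter norm'' are being conflated. The layered ``$H$ parallel bandits, each played $K$ times'' architecture can be made to work ($H\cdot d\sqrt{K}=d\sqrt{HT}$), but only after these two items are fixed, and you would also need to route the stochasticity through transitions (or argue stochastic rewards are admissible in the linear MDP model) and verify the normalization of $\bm\phi,\bm\mu_h,\bm\theta_h$.

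For comparison, the paper avoids the per-stage decoupling issue you flag entirely: its hard instances have a single decision at step $1$ of each episode, with two absorbing states, reward $1$ per step in the good one, and $\Prob(s_1|s_0,a)=\tfrac14+\langle\bm a,\bm v\rangle$ with $\bm v\in\{\pm\sqrt{(d-3)H}/\sqrt{T}\}^{d-3}$. The $\sqrt{H}$ then comes from reward amplification over the horizon (a wrong sign in one coordinate costs about $2\sqrt{H/T}\,(H-1)$ per episode) rather than from $H$ independent decision points, the per-episode KL for a one-coordinate flip is $O(H/T)$ so the total over $T/H$ episodes is a constant, and Bretagnolle--Huber plus averaging over the $2^{d-3}$ instances gives $\Omega(d\sqrt{HT})$. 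The condition $T\geq 64(d-3)^2H$ is what keeps the transition probabilities valid (the perturbation $(d-3)\sqrt{H/T}\leq 1/8$), and $d\geq 4$ reserves three coordinates for encoding the three-state structure --- close in spirit to, but not quite, your reading of these hypotheses.
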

The key step of this proof is to construct the hard-to-learn MDP instances. Inspired by the lower bound construction for stochastic contextual bandits~\citep{dani2008stochastic,lattimore2020bandit}, we construct an ensemble of hard-to-learn 3-state linear MDPs, which is illustrated in Figure~\ref{fig:lb1}. This construction can be viewed as a generalization of the lower bound construction for linear contextual bandits~\citep{dani2008stochastic, lattimore2020bandit}. The intuition is that the reward distributions under optimal and suboptimal policies for these instances are close: thus it is statistically hard for any learner to identify the optimal policy. 

\begin{figure}[htb]
    \centering
    \includegraphics[trim=100 520 100 100, clip, width=0.8\textwidth]{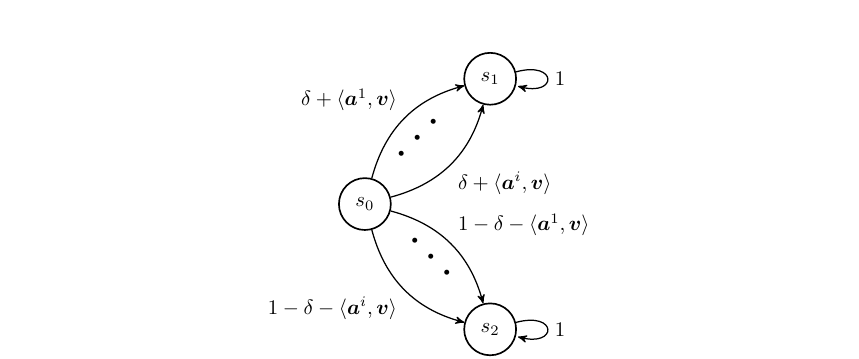}
    
    \caption{Graphical illustration of the hard-to-learn linear MDP instances with deterministic reward.}
    \label{fig:lb1}
\end{figure}

Each linear MDP instance in this ensemble has three states $s_0, s_1, s_2$ ($s_1$ and $s_2$ are absorbing states), and it is characterized by a unique $(d-3)$-dimensional vector $\{\pm \sqrt{(d-3)H}/\sqrt{T}\}^{d-3}$. Specifically, the vector $\bm v$ defines the transition function of the corresponding MDP, as illustrated in Figure~\ref{fig:lb1}. Each action $a$ of this MDP instance is encoded by a $(d-3)$ dimensional vector $\bm{a}\in\left\{\pm 1/\sqrt{d-3}\right\}^{d-3}$. The reward functions for the three states are fixed regardless of the actions, specifically, $r(s_0, a)=r(s_2, a)=0, r(s_1, a)=1, \forall a\in\mathcal{A}$.  For each episode, the agent starts at $s_0$, and ends at step $H$. The transition functions of the linear MDP parametrized by $\bm{v}$ are defined as follows, 
\begin{align*}
    \Prob(s_1|s_0, a)=\delta+\langle \bm{a}, \bm{v}\rangle,\ \Prob(s_2|s_0, a)=1-\delta-\langle \bm{a}, \bm{v}\rangle,\
    \Prob(s_1|s_1, a)=1, \ \Prob(s_2|s_2, a)=1,
\end{align*}
where $\delta=\frac{1}{4}$. Notice that the optimal policy for the MDP instance parametrized by $\bm{v}$ is taking the action that maximizes the probability to reach $s_1$, which is equivalent to taking the action such that its corresponding vector $\bm a$ satisfies $\text{sgn}(\bm a_i)=\text{sgn}(\bm v_i), \forall i\in[d-3]$. Furthermore, it can be verified that the above MDP instance is indeed a linear MDP, by setting:
\begin{align*}
    &\bm\phi(s_0, a)=(0, 1, \delta, \bm{a}), \bm\phi(s_1, a)=(1, 0, 0, \vec{0}), \bm\phi(s_2, a)=(0, 1, 0, \vec{0})\\
    &\bm{\mu}(s_0)=(0, 0, 0, \vec{0}), \bm{\mu}(s_1)=(1, 0, 1,\bm{v}), \bm{\mu}(s_2)=(0,1, -1, -\bm{v}),\bm \theta=(1, 0, 0, 0).
\end{align*}

\begin{remark}
Note that the above parameters violate the normalization assumption in Definition~\ref{def:linear_mdp}, but it is straightforward to normalize them. We ignore the additional rescaling to clarify the presentation.
\end{remark}
After constructing the ensemble of hard instances, we can derive the minimax regret lower bound for stationary linear MDP for the homogeneous setting. 

\begin{proof}[Proof of Theorem~\ref{thm:reg_lb_stationary}]
Let $\Prob^{\pi}_{t, \bm{v}}$ (assume $t$ is a multiple of $H$) be the probability distribution of\\ $\{a_1^{1}, \sum_{h=1}^{H}r_h^1, a_1^2, \sum_{h=1}^{H}r_h^2, \ldots, a_1^{t/H}, \sum_{h=1}^{H}r_h^{t/H}\}$ of running algorithm $\pi$ on linear MDP parametrized by $\bm{v}$. First note that by the Markov property of $\pi$, we can decompose $D_{KL}(\Prob^{\pi}_{t, \bm{v}}||\Prob^{\pi}_{t, \bm{v}'})$ as
\begin{align*}
    \sum_{l=1}^{t/H}\mathbb{E}D_{KL}\left(\Prob\left(\sum_{h=1}^{H}r_h^{l}|a_1^l, \bm{v}\right)||\Prob\left(\sum_{h=1}^{H}r_h^{l}|a_1^l, \bm{v}'\right)\right).
\end{align*}
Recall that due to our hard cases construction, the first step in every episode determines the distribution of the total reward of that episode, thus
\begin{align}
\label{eqn:kl}
    &D_{KL}\left(\Prob\left(\sum_{h=1}^{H}r_h^{l}|a_1^l, \bm{v}\right)||\Prob\left(\sum_{h=1}^{H}r_h^{l}|a_1^l, \bm{v}'\right)\right)\nonumber\\
    =&\left(\delta+\langle\bm{a}_1^l, \bm{v}\rangle\right)\log\frac{\delta+\langle\bm{a}_1^l, \bm{v}\rangle}{\delta+\langle\bm{a}_1^l, \bm{v}'\rangle}+\left(1-\delta-\langle\bm{a}_1^l, \bm{v}\rangle\right)\log\frac{1-\delta-\langle\bm{a}_1^l,\bm{v}\rangle}{1-\delta-\langle\bm{a}_1^l, \bm{v}'\rangle}
\end{align}
We bound the KL divergence in~\eqref{eqn:kl} applying the following lemma.
\begin{lemma}
\label{lemma:ber_kl_bound}
~\citep{auer2009near} If $0\leq\delta'\leq 1/2$ and $\epsilon'\leq 1-2\delta^\prime$, then
\begin{align*}
    \delta'\log\frac{\delta'}{\delta'+\epsilon'}+(1-\delta')\log\frac{1-\delta'}{1-\delta'-\epsilon'}\leq\frac{2(\epsilon')^2}{\delta'}.
\end{align*}
\end{lemma}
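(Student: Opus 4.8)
The plan is to recognize the left-hand side as the Kullback--Leibler divergence $D_{KL}(\mathrm{Ber}(\delta')\|\mathrm{Ber}(\delta'+\epsilon'))$ between two Bernoulli laws, bound it by the (always larger) $\chi^2$-divergence, and then lower-bound the denominator that appears, using the hypotheses $\delta'\le\tfrac12$ and (the regime in which the lemma is invoked after~\eqref{eqn:kl}) $0\le\epsilon'\le 1-2\delta'$.

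First I would record the elementary estimate $D_{KL}(P\|Q)\le\chi^2(P\|Q)$: applying $\log t\le t-1$ with $t=q_i/p_i$ to each summand of $D_{KL}(P\|Q)=\sum_i p_i\log(p_i/q_i)$ and using $\sum_i(p_i-q_i)=0$ gives $D_{KL}(P\|Q)\le\sum_i(p_i-q_i)^2/q_i=\chi^2(P\|Q)$. Specializing to $P=\mathrm{Ber}(\delta')$, $Q=\mathrm{Ber}(\delta'+\epsilon')$ and using that the two ``failure'' masses $\delta'+\epsilon'$ and $1-\delta'-\epsilon'$ sum to $1$,
\[
D_{KL}\bigl(\mathrm{Ber}(\delta')\,\big\|\,\mathrm{Ber}(\delta'+\epsilon')\bigr)\ \le\ \frac{(\epsilon')^2}{\delta'+\epsilon'}+\frac{(\epsilon')^2}{1-\delta'-\epsilon'}\ =\ \frac{(\epsilon')^2}{(\delta'+\epsilon')(1-\delta'-\epsilon')}.
\]
It then remains to show $(\delta'+\epsilon')(1-\delta'-\epsilon')\ge\delta'/2$. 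Setting $u:=\delta'+\epsilon'$, the constraints $0\le\epsilon'\le 1-2\delta'$ give $\delta'\le u\le 1-\delta'$; since $u\mapsto u(1-u)$ is concave it is minimized on this interval at an endpoint, where it equals $\delta'(1-\delta')$, and $\delta'\le\tfrac12$ gives $\delta'(1-\delta')\ge\delta'/2$. Combining with the display yields $D_{KL}\le(\epsilon')^2/(\delta'/2)=2(\epsilon')^2/\delta'$, which is the claim.

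The arithmetic is short, so I do not expect a real obstacle; the one point that genuinely needs care is the \emph{regime of validity}. For $\epsilon'<0$ the factor $\delta'+\epsilon'$ can be pushed close to $0$ while the right-hand side stays bounded, so the inequality fails there and one must use $0\le\epsilon'$ — precisely the situation in the application, where $\epsilon'$ is the nonnegative gap between the two Bernoulli transition probabilities. If one prefers to avoid the $\chi^2$ detour, the same intermediate estimate $(\epsilon')^2/\bigl((\delta'+\epsilon')(1-\delta'-\epsilon')\bigr)$ can be obtained directly from $\log(1+x)\ge x/(1+x)$: apply it to $\delta'\log\tfrac{\delta'}{\delta'+\epsilon'}=-\delta'\log(1+\tfrac{\epsilon'}{\delta'})$ and to $(1-\delta')\log\tfrac{1-\delta'}{1-\delta'-\epsilon'}=-(1-\delta')\log(1-\tfrac{\epsilon'}{1-\delta'})$, sum the two bounds, and simplify (the two linear-in-$\epsilon'$ pieces cancel); the denominator step is then identical.
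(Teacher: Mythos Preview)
The paper does not prove this lemma; it is quoted verbatim from~\cite{auer2009near}. Your argument is essentially correct and self-contained, with two small points worth flagging.

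First, a typographical slip: to obtain $D_{KL}(P\|Q)\le\chi^2(P\|Q)$ from $\log t\le t-1$ you must take $t=p_i/q_i$, not $t=q_i/p_i$. Indeed $\log(p_i/q_i)\le p_i/q_i-1$ gives $p_i\log(p_i/q_i)\le p_i(p_i-q_i)/q_i$, and summing together with $\sum_i(p_i-q_i)=0$ yields the $\chi^2$ bound; with $t=q_i/p_i$ one only recovers $D_{KL}\ge0$. The rest of your chain --- the closed-form $\chi^2$ for two Bernoullis and the concavity argument giving $(\delta'+\epsilon')(1-\delta'-\epsilon')\ge\delta'(1-\delta')\ge\delta'/2$ --- is clean and correct.

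Second, your diagnosis that the stated hypotheses are insufficient is right: for $\epsilon'$ close to $-\delta'$ the left side diverges while the right side stays bounded, so the inequality genuinely needs $\epsilon'\ge0$ (or some comparable lower bound) to hold as written. Your parenthetical that ``$\epsilon'$ is the nonnegative gap'' in the application is not quite accurate, however. In the paper $\epsilon'=\langle\bm v-\bm v',\bm a_1^l\rangle$, and since the sign of the relevant coordinate of $\bm a_1^l$ is not controlled, $\epsilon'$ can have either sign. This does not damage the lower-bound argument: both Bernoulli parameters $\delta'$ and $\delta'+\epsilon'$ lie in $[\delta-\Delta,\delta+\Delta]$ with $\Delta\le\delta/2$, so your intermediate $\chi^2$ estimate
\[
D_{KL}\ \le\ \frac{(\epsilon')^2}{(\delta'+\epsilon')(1-\delta'-\epsilon')}\ \le\ \frac{2(\epsilon')^2}{\delta-\Delta}
\]
already delivers exactly what the paper uses in the next line, without ever invoking the sign of $\epsilon'$.
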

To apply Lemma~\ref{lemma:ber_kl_bound}, we let $\langle\bm{a}_1^l, \bm{v}\rangle+\delta=\delta'$, $\langle\bm{v}-\bm{v}', \bm{a}_1^l\rangle=\epsilon'$. Thus we must ensure the following inequalities hold for any $\bm{a}, \bm{v}, \bm{v}'$:
\begin{align*}
    \langle\bm{a}, \bm{v}\rangle+\delta&\leq\frac{(d-3)\sqrt{H}}{\sqrt{T}}+\delta\leq1/2\\
    \langle\bm{v}-\bm{v}',\bm{a}\rangle&\leq\frac{2(d-3)\sqrt{H}}{\sqrt{T}}\leq 1-2\left(\frac{(d-3)\sqrt{H}}{\sqrt{T}}+\delta\right)\leq 1-2\delta'.
\end{align*}
To guarantee the above inequalities hold, we can set $\delta=\frac{1}{4}$ and let $\frac{(d-3)\sqrt{H}}{\sqrt{T}}\leq\frac{1}{8}$. Now we get back to bounding Eq.~\ref{eqn:kl}. Let $\Delta=\frac{(d-3)\sqrt{H}}{\sqrt{T}}$ and suppose $\bm v$ and $\bm v'$ only differ in one coordinate. Then
\begin{align*}
    D_{KL}\left(\Prob\left(\sum_{h=1}^{H}r_h^{l}|a_1^l, \bm{v}\right)||\Prob\left(\sum_{h=1}^{H}r_h^{l}|a_1^l, \bm{v}'\right)\right)&\leq \frac{8\Delta^2\frac{1}{(d-3)^2}}{\delta-\Delta}\leq\frac{16\Delta^2}{\delta(d-3)^2}\leq\frac{64H}{T}.
\end{align*}
Furthermore, let $E_{i, b}$ be the the following event:
\begin{align*}
   | \{l\in[K]:\text{sgn}(\bm{a}_1^l)_i\neq \text{sgn}(b)\}|\geq\frac{1}{2}K.
\end{align*}
Let $q_{i, \bm{v}}=\Prob[E_{i, v_i}|\bm{v}]$, the probability that the agent is taking sub-optimal action for the $i$-th coordinate for at least half of the episodes given that the underlying linear MDP is parameteriezed by $\bm{v}$. We can then lower bound the regret of any algorithm when running on linear MDP parameterized by $\bm{v}$ as:
\begin{align}
\label{eqn:reg_lb_v}
    \text{Reg}_{\bm{v}}(T)&\geq\sum_{i=1}^{d-3}q_{i, \bm{v}}K(H-1)\sqrt{\frac{H}{T}}\nonumber\\
                          &\geq \left(\sqrt{TH}-\sqrt{K}\right)\sum_{i=1}^{d-3}q_{i, \bm{v}},
\end{align}
since whenever the learner takes a sub-optimal action that differs from the optimal action by one coordinate, it will incur $2\sqrt{\frac{H}{T}}(H-1)$ expected regret. Next we take the average over $2^{d-3}$ linear MDP instances to show that on average it incurs $\Omega(d\sqrt{HT})$ regret, thus there exists at least one instance incurring $\Omega(d\sqrt{HT})$ regret. Before that, we need to bound the summation of bad events under two close linear MDP instances. Denote the vector which is only different from $\bm{v}$ in $i$-th coordinate as $\bm{v}^{\oplus i}$. Then we have 
\begin{align}
\label{eqn:prob_sum_lb}
    q_{i, \bm{v}}+q_{i, \bm{v}^{\oplus i}}&=\Prob[E_{i, \bm{v}_i}|\bm{v}]+\Prob[E_{i, \bm{v}^{\oplus i}_i}|\bm{v}^{\oplus i}]\nonumber\\
        &=\Prob[E_{i, \bm{v}_i}|\bm{v}]+\Prob[\bar{E}_{i, \bm{v}_i}|\bm{v}^{\oplus i}]\nonumber\\
        &\geq\tfrac{1}{2}\exp(-D_{KL}(P_{T, \bm{v}}||P_{T, \bm{v}^{\oplus i}}))\nonumber\\
        &\geq \tfrac{1}{2}\exp(-64),
\end{align}
where the inequality is due to Bretagnolle-Huber inequality~\citep{bretagnolle1979estimation}. Now we are ready to lower bound the average regret over all linear MDP instances.
\begin{align*}
    \frac{1}{2^{d-3}}\sum_{\bm{v}}\text{Reg}_{\bm{v}}(T)&\geq\frac{\sqrt{HT}-\sqrt{K}}{2^{d-3}}\sum_{\bm{v}}\sum_{i=1}^{d-3}q_{i, \bm{v}}\\
    &\geq \frac{\sqrt{HT}-\sqrt{K}}{2^{d-3}}\sum_{i=1}^{d-3}\sum_{\bm{v}}\frac{q_{i,\bm{v}}+q_{i, \bm{v}^{\oplus i}}}{2}\\
    &\geq\frac{\sqrt{HT}-\sqrt{K}}{2^{d-3}} 2^{d-3}\frac{1}{4}e^{-64}(d-3)\\
    &\gtrsim\Omega(d\sqrt{HT})
\end{align*}
where the first inequality is due to \eqref{eqn:reg_lb_v}, and the third inequality is due to \eqref{eqn:prob_sum_lb}.
\end{proof}
Based on Theorem~\ref{thm:reg_lb_stationary}, we can  derive the minimax dynamic regret for nonstationary linear MDP.
\begin{proof}[Proof of Theorem~\ref{thm:reg_lb_b_mu}]
We construct the hard instance as follows: We first divide the whole time horizon $T$ into $\lceil \frac{K}{N}\rceil$ intervals, where each interval has $\lceil \frac{K}{N}\rceil$ episodes (the  last interval might be shorter if $K$ is not a multiple of $N$). For each interval, the linear MDP is fixed and parameterized by a $\bm{v}\in\{\pm\frac{\sqrt{(d-3)}}{\sqrt{N}}\}^{d-3}$ which we define when constructing the hard instances in Theorem~\ref{thm:reg_lb_stationary}. Note that different intervals are completely decoupled, thus information is not passed across intervals. For each interval, it incurs regret at least $\Omega(d\sqrt{H^2N})$ by Theorem~\ref{thm:reg_lb_stationary}. Thus the total regret is at least
\begin{align}
\label{eqn:reg_lb_mu}
    \text{Dyn-Reg}(T)&\gtrsim (\lceil\frac{K}{N}\rceil-1)\Omega(d\sqrt{H^2N}\nonumber)\\
    &\gtrsim\Omega(d\sqrt{H^2K^2}N^{-1/2}).
\end{align}
Intuitively, we would like $N$ to be as small as possible to obtain a tight lower bound. However, due to our construction, the total variation for two consecutive blocks is upper-bounded by 
\begin{align*}
    \sqrt{\sum_{i=1}^{d-3}\frac{4(d-3)}{N}}=\frac{2(d-3)}{\sqrt{N}} \mbox{.}
\end{align*}
Note that the total time variation for the whole time horizon is $B$ and by definition $B\geq \frac{2(d-3)}{\sqrt{N}}(\lfloor\frac{K}{N}\rfloor-1)$, which implies $N\gtrsim\Omega(B^{-2/3}d^{2/3}K^{2/3})$. Substituting the lower bound of $N$ into \eqref{eqn:reg_lb_mu}, we have
\begin{align*}
   \text{Dyn-Reg}(T)\gtrsim\Omega (B^{1/3}d^{2/3}K^{2/3}H)\gtrsim\Omega(B^{1/3}d^{2/3}H^{1/3}T^{2/3})
\end{align*}
which concludes the proof. 
\end{proof}

Finally, we provide the formal proof for Theorem~\ref{thm:reg_lb_het}.
\begin{proof}[Proof of Theorem~\ref{thm:reg_lb_het}]
We construct the hard instance as follows. We first divide the whole time horizon $T$ into $\lceil \frac{K}{N}\rceil$ intervals, where each interval has $\lceil \frac{K}{N}\rceil$ episodes (the  last interval might be shorter if $K$ is not a multiple of $N$). For each interval, the linear MDP is fixed and parameterized by a $\bm{v}\in\{\pm\frac{1}{4\sqrt{2}}\sqrt{\frac{1}{NH}}\}^{d-1}$, defined in Lemma E.1 in~\citet{hu2022nearly}. Note that different intervals are completely decoupled, thus information is not passed across intervals. For each interval, it incurs regret at least $\Omega(d\sqrt{H^3N})$ by Lemma E.1 in~\citet{hu2022nearly}. Thus the total regret is at least
\begin{align}
\label{eqn:reg_lb_het}
    \text{Dyn-Reg}(T)&\gtrsim (\lceil\frac{K}{N}\rceil-1)\Omega(d\sqrt{H^3N}\nonumber)\\
    &\gtrsim\Omega(d\sqrt{H^3K^2}N^{-1/2}).
\end{align}
Intuitively, we would like $N$ to be as small as possible to obtain a tight lower bound. However, due to our construction, the total variation for two consecutive blocks is upper-bounded by 
\begin{align*}
    \sqrt{\sum_{i=1}^{d-1}\frac{1}{32NH}}=\frac{\sqrt{d-1}}{4\sqrt{2}\sqrt{NH}} \mbox{.}
\end{align*}
Note that the total time variation for the whole time horizon is $B$ and by definition $B\gtrsim \Omega(d^{1/2}KN^{-3/2}H^{-1/2})$, which implies $N\gtrsim\Omega(B^{-2/3}d^{1/3}K^{2/3}H^{-1/3})$. Substituting the lower bound of $N$ into Eq. \eqref{eqn:reg_lb_het}, we have
\begin{align*}
   \text{Dyn-Reg}(T)\gtrsim\Omega (B^{1/3}d^{5/6}HT^{2/3})
\end{align*}
which concludes the proof. 
\end{proof}

\section{Proofs in Section~\ref{sec:alg1}}
\label{sec:proof_alg1}
Here we provide the proofs in Section~\ref{sec:alg1}. We first want to comment that our algorithm builds on \texttt{LSVI-UCB}. \texttt{LSVI} can be seen as a specialization of the regression-based Fitted Q-Iteration algorithm \citep{ernst2005tree,munos2008finite,chen2019information} to the linear case, and \texttt{LSVI-UCB} \citep{jin2020provably} further adds the bonus term on top of that to handle exploration.

Now, we introduce some notations we use throughout the proof. We let $\bm{w}^k_h$, $\Lambda^k_h$ and $Q^k_h$ as the parameters and action-value function estimate in episode $k$ for step $h$. Denote value function estimate as $V^k_h(s)=\max_{a}Q^k_h(s, a)$. For any policy $\pi$, we let $\bm{w}^{\pi}_{h, k}$, $Q^\pi_{h, k}$ be the ground-truth parameter and action-value function for that policy in episode $k$ for step $h$. We also abbreviate $\bm \phi(s_h^l, a_h^l)$ as $\bm \phi_h^l$ for notational simplicity.

We first work on the case when local variation is known and then consider the case when local variation is unknown.
\subsection{Case 1: Known Local Variation}
Before we prove the regret upper bound within one epoch (Theorem~\ref{thm:reg_epoch}), we need some additional lemmas. \\
The first lemma is used to control the fluctuations in least-squares value iteration, when performed on the value function estimate $V_h^k(\cdot)$ maintained in Algorithm~\ref{alg:lsvi_ucb_restart}. 
\begin{proof}[Proof of Lemma~\ref{lemma:error_prop}]
The lemma is slightly different than \citet[Lemma B.3]{jin2020provably}, since they assume $\Prob_h$ is fixed for different episodes. It can be verified that the proof for stationary case still holds in our case without any modifications since the results in~\citet{jin2020provably} holds for least-squares value iteration for arbitrary function in the function class of our interest, i.e., $\{V|V=\{\bm{\phi}(\cdot, \cdot), \bm w\}, \bm w\in\mathbb{R}^d\}$. Let us first restate Lemma B.3 in \citet{jin2020provably} to compare against our Lemma 1.
\begin{lemma}
\label{lemma:jin1}
(Lemma B.3 in~\citet{jin2020provably}) There exists an absolute constant $C$ that is independent of $c_\beta$ such that for any fixed $p\in[0, 1]$, if we let the event $\mathrm{E}$ be the following,
\begin{align*}
    \forall(k, h)\in[K]\times[H],\quad \left\lVert\sum_{l=1}^{k-1}\bm\phi_h^l[V_{h+1}^{k}(s^l_{h+1})-\mathbb{P}_hV_{h+1}^{k}(s_h^l, a_h^l)]\right\rVert_{(\Lambda_h^k)^{-1}} \leq CdH\sqrt{\log[2(c_{\beta}+1)dW/p]}, 
\end{align*}
then event $\mathrm{E}$ happens with probability at least $1-p/2$.
\end{lemma}
To prove Lemma~\ref{lemma:error_prop}, we need the following technical lemmas. 
\begin{lemma}
\label{lemma:jin2}
(Lemma D.4 in~\citet{jin2020provably}) Let $\{x_\tau\}_{\tau=1}^{\infty}$ be a stochastic process on state space $\mathcal{S}$ with corresponding filtration $\{\mathcal{F}_\tau\}_{\tau=1}^{\infty}$. Let $\{\phi_\tau\}_{\tau=1}^{\infty}$ be a $\mathbb{R}^d$-valued stochastic process where $\phi_\tau\in\mathcal{F}_{\tau-1}$, and $\left\lVert\phi_\tau\right\rVert\leq 1$. Let $\Lambda_k=\lambda I+\sum_{\tau=1}^{k}\phi_\tau\phi_\tau^\top$. Then for any $\delta>0$, with probability at least $1-\delta$, for all $k>0$, and $V\in\mathcal{V}$ so that $\sup_x|V(x)|\leq H$, we have
\begin{align*}
    \left\lVert\sum_{\tau=1}^{k}\bm\phi_\tau\{V(x_\tau)-\mathbb{E}[V(x_\tau)|\mathcal{F}_{\tau-1}]\}\right\rVert^2_{\Lambda^{-1}_k}\leq 4H^2[\frac{d}{2}\log(\frac{k+\lambda}{\lambda})+\log\frac{\mathcal{N}_\epsilon}{\delta}]+\frac{8k^2\epsilon^2}{\lambda},
\end{align*}
where $\mathcal{N}_\epsilon$ is the covering number for $\mathcal{V}$.
\end{lemma}
\begin{lemma}
\label{lemma:jin3}

(Lemma D.6 in~\citet{jin2020provably}) For function class with the following form, 
\begin{align*}
    \mathcal{V}(\cdot)=\min\{\max_a \bm{w}^\top\bm\phi(\cdot, a)+\beta\sqrt{\bm\phi(\cdot, a)^\top\Lambda^{-1}\bm\phi(\cdot, a)}, H\}
\end{align*} satisfying the constraints given by~\cite{jin2020provably}, the log covering number $\log\mathcal{N}_\epsilon$ for this function class is upper bounded by $d\log(1+8H\sqrt{d}/\epsilon)+d^2\log[1+8d^{1/2}\beta^2/(\lambda\epsilon^2)]$.
\end{lemma}
By applying Lemma~\ref{lemma:jin2} and Lemma~\ref{lemma:jin3}, and setting $\lambda=1, \epsilon=dH/k, \beta=CdH\log(2dT/p)$, we prove Lemma~\ref{lemma:error_prop}.
\end{proof}
We then proceed to derive the error bound for the action-value function estimate maintained in the algorithm for any policy. 
\begin{proof}[Proof of Lemma~\ref{lemma:ls_error_2}]
Note that $Q^{\pi}_{h, k}(s, a)=\langle\phi(s, a), \bm{w}^{\pi}_{h, k}\rangle$. First we can decompose $\bm{w}_h^k-\bm{w}^{\pi}_{h, k}$ as 
\begin{align*}
    \bm{w}_h^k-\bm{w}^{\pi}_{h, k}
    &=(\Lambda^k_h)^{-1}\sum_{l=\tau}^{k-1}\bm\phi^l_h[r^l_h(s^l_h, a^l_h)+V_{h+1}^{k}(s_{h+1}^l)]-\bm{w}^{\pi}_{h, k}\\
    &=(\Lambda^k_h)^{-1}\{-\bm{w}^{\pi}_{h, k}+\sum_{l=\tau}^{k-1}\bm\phi^l_h[V^k_{h+1}(s^l_{h+1})-\mathbb{P}_h^k V^{\pi}_{h+1, k}(s^l_h, a^l_h)]+\sum_{l=\tau}^{k-1}\bm\phi_h^l[r_h^l(s_h^l, a_h^l)-r_h^k(s_h^l, a_h^l)]\}\\
    &=\underbrace{-(\Lambda^k_h)^{-1}\bm{w}^{\pi}_{h, k}}_{\circled{1}}+\underbrace{(\Lambda_h^k)^{-1}\sum_{l=\tau}^{k-1}\bm\phi_h^l[V^k_{h+1}(s_{h+1}^l)-\mathbb{P}_h^lV^k_{h+1}(s_h^l, a_h^l)]}_{\circled{2}}\\
    &\quad+\underbrace{(\Lambda_h^k)^{-1}\sum_{l=\tau}^{k-1}\bm\phi_h^l[(\mathbb{P}_h^l-\mathbb{P}_h^k)V^k_{h+1}(s_h^l, a_h^l)]}_{\circled{3}} +\underbrace{(\Lambda^k_h)^{-1}\sum_{l=\tau}^{k-1}\bm\phi_h^l\mathbb{P}_h^k(V^k_{h+1}-V^{\pi}_{h+1, k})(s^l_h, a^l_h)}_{\circled{4}}\\
    &\quad+\underbrace{(\Lambda^k_h)^{-1}\sum_{l=\tau}^{k-1}\bm\phi_h^l[r_h^l(s_h^l, a_h^l)-r_h^k(s_h^l, a_h^l)]}_{\circled{5}}.
\end{align*}
We bound the individual terms on right side one by one. For the first term, 
\begin{align*}
|\langle\bm\phi(s, a), \circled{1}\rangle|&=|\langle\bm\phi(s, a), (\Lambda^k_h)^{-1}\bm{w}^{\pi}_{h, k}\rangle|\\
&\leq\left\lVert \bm{w}^{\pi}_{h, k}\right\rVert\left\lVert\bm\phi(s, a)\right\rVert_{(\Lambda^k_h)^{-1}}\\
&\leq 2H\sqrt{d}\left\lVert\bm\phi(s, a)\right\rVert_{(\Lambda^k_h)^{-1}},
\end{align*}
where the last inequality is due to Lemma~\ref{lemma:weight_bound}. For the second term, we know that under event $E$ defined in Lemma~\ref{lemma:error_prop}, 
\begin{align*}
    |\langle\bm\phi(s, a), \circled{2}\rangle|\leq CdH\sqrt{\log[2(c_{\beta}+1)dW/p]}\left\lVert\bm\phi(s,a)\right\rVert_{(\Lambda^k_h)^{-1}}.
\end{align*}
For the third term,
\begin{align*}
\langle\bm\phi(s, a), \circled{3}\rangle
=&\langle\bm\phi(s, a), (\Lambda_h^k)^{-1}\sum_{l=\tau}^{k-1}\bm\phi_h^l[(\mathbb{P}_h^l-\mathbb{P}_h^k)V^k_{h+1}(s_h^l, a_h^l)]\rangle\\
\leq&\sum_{l=\tau}^{k-1}|\bm\phi(s, a)^\top(\Lambda^k_h)^{-1}\bm\phi_h^l|[(\mathbb{P}_h^l-\mathbb{P}^k_h)V_{h+1}^{k}(s^l_h, a^l_h)]\\
\leq & B_{\bm{\mu}, \mathcal{E}}H\sum_{l=\tau}^{k-1}|\bm\phi(s, a)^\top(\Lambda^k_h)^{-1}\bm\phi_h^l|\\
\leq& B_{\bm{\mu}, \mathcal{E}}H\sqrt{\sum_{l=\tau}^{k-1}\left\lVert\bm\phi(s, a)\right\rVert^2_{(\Lambda_h^k)^{-1}}}\sqrt{\sum_{l=\tau}^{k-1}(\bm\phi_h^l)^\top(\Lambda_h^k)^{-1}\bm\phi_h^l}\\
\leq& \sqrt{d(k-\tau)}B_{\bm{\mu}, \mathcal{E}}H\left\lVert\bm\phi(s, a)\right\rVert_{(\Lambda_h^k)^{-1}},
\end{align*}
where the first three inequalities are due to Cauchy-Schwarz inequality and boundedness of $\mathbb{P}_h^l-\mathbb{P}_h^k$ and $V^k_{h+1}$,  and the last inequality is due to Lemma~\ref{lemma:norm_bound}.

For the fourth term,
\begin{align*}
    \langle\bm\phi(s, a),\circled{4}\rangle &=\langle\bm\phi(s, a), (\Lambda^k_h)^{-1}\sum_{l=\tau}^{k-1}\bm\phi_h^l\mathbb{P}_h^k(V^k_{h+1}-V^{\pi}_{h+1, k})(s^l_h, a^l_h)\rangle\\
    &=\langle\bm\phi(s,a),(\Lambda^k_h)^{-1}\sum_{l=\tau}^{k-1}\bm\phi_h^l(\bm\phi_h^l)^\top\int (V_{h+1}^k(s')-V^{\pi}_{h+1,k}(s'))d\bm{\mu}_{h,k}(s')\rangle\\
    &=\underbrace{\langle\bm\phi(s,a),\int(V^k_{h+1}-V^\pi_{h+1,k})(s')d\bm{\mu}_{h,k}(s')\rangle}_{\circled{6}}\rangle-\underbrace{\langle\bm\phi(s,a),(\Lambda^k_h)^{-1}\int(V^k_{h+1}-V^{\pi}_{h+1,k})(s')d\bm{\mu}_{h,k}(s')\rangle}_{\circled{7}},
\end{align*}
where $\circled{6}=[\mathbb{P}_h^k(V^k_{h+1}-V^{\pi}_{h+1, k})](s,a)$ and $\circled{7}\leq 2H\sqrt{d}\left\lVert\bm\phi(s,a)\right\rVert_{(\Lambda^k_h)^{-1}}$ due to Cauchy-Schwarz inequality.

For the fifth term, 
\begin{align*}
    \langle\bm\phi(s, a), \circled{5}\rangle &=\langle\bm\phi(s, a),(\Lambda^k_h)^{-1}\sum_{l=\tau}^{k-1}\bm\phi_h^l[r_h^l(s_h^l, a_h^l)-r_h^k(s_h^l, a_h^l)]\rangle\\
    &\leq\sum_{l=\tau}^{k-1}|\bm\phi(s, a)^\top(\Lambda_h^k)^{-1}\bm\phi^l_h||r^l_h(s_h^l, a_h^l)-r^k_h(s_h^l, a_h^l)|\\
    &\leq\sqrt{d(k-\tau)}B_{\bm{\theta}, \mathcal{E}}\left\lVert\bm\phi(s, a)\right\rVert_{(\Lambda^k_h)^{-1}},
\end{align*}
where the inequalities are derived similarly as bounding the third term. 
After combining all the upper bounds for these individual terms, we have
\begin{align*}
     &|\langle\bm\phi(s, a), \bm{w}_h^k\rangle-Q^{\pi}_{h, k}(s, a)-\mathbb{P}_h^k\left(V^k_{h+1}-V^{\pi}_{h+1, k}\right)(s, a)|\\
     \leq&4H\sqrt{d}\left\lVert\bm\phi(s, a)\right\rVert_{(\Lambda^k_h)^{-1}}+CdH\sqrt{\log[2(c_{\beta}+1)dW/p]}\left\lVert\bm\phi(s, a)\right\rVert_{(\Lambda^k_h)^{-1}}\\
     &\quad+B_{\bm{\theta}, \mathcal{E}}\sqrt{d(k-\tau)}\left\lVert\bm\phi(s, a)\right\rVert_{(\Lambda^k_h)^{-1}} +B_{\bm{\mu}, \mathcal{E}}H\sqrt{d(k-\tau)}\left\lVert\bm\phi(s, a)\right\rVert_{(\Lambda^k_h)^{-1}}\\
     \leq& C_0dH\sqrt{\log[2dW/p]}\left\lVert\bm\phi(s, a)\right\rVert_{(\Lambda^k_h)^{-1}}+B_{\bm{\theta}, \mathcal{E}}\sqrt{d(k-\tau)}\left\lVert\bm\phi(s, a)\right\rVert_{(\Lambda^k_h)^{-1}}\\
     &\quad+B_{\bm{\mu}, \mathcal{E}}H\sqrt{d(k-\tau)}\left\lVert\bm\phi(s, a)\right\rVert_{(\Lambda^k_h)^{-1}}.
\end{align*}
The second inequality holds if we choose a sufficiently large absolute constant $C_0$.
\end{proof}

Lemma~\ref{lemma:ls_error_2} implies that the action-value function estimate we maintained in Algorithm~\ref{alg:lsvi_ucb_restart} is always an optimistic upper bound of the optimal action-value function with high confidence, if we know the local variation. 
\begin{proof}[Proof of Lemma~\ref{lemma:upper_bound}]
We prove this by induction. First prove the base case when $h=H$. According to Lemma~\ref{lemma:ls_error_2}, we have 
\begin{align*}
|\langle\bm\phi(s, a), \bm{w}^k_H\rangle-Q^{*}_{H, k}(s, a)|\leq\beta_k\left\lVert\bm\phi(s, a)\right\rVert_{(\Lambda^k_H)^{-1}},
\end{align*}
which implies
\begin{align*}
Q_H^k(s, a)&=\min\{\langle\bm{w}^k_H, \bm\phi(s, a)\rangle+\beta_k\left\lVert\bm\phi(s, a)\right\rVert_{(\Lambda^k_H)^{-1}},H\} \geq Q^*_{H, k}(s, a).
\end{align*}
Now suppose the statement holds true at step $h+1$, then for step $h$, due to Lemma~\ref{lemma:ls_error_2}, we have 
\begin{align*}
    &|\langle\bm\phi(s, a), \bm{w}_h^k\rangle-Q^{\pi}_{h, k}(s, a)-\mathbb{P}_h^k(V^k_{h+1}-V^{*}_{h+1, k})(s, a)| \leq\beta_k\left\lVert\bm\phi(s, a)\right\rVert_{(\Lambda^k_h)^{-1}}.
\end{align*}
By the induction hypothesis, we have $\mathbb{P}_h^k(V^k_{h+1}-V^{*}_{h+1, k})(s, a)\geq 0$, thus 
\begin{align*}
    Q_h^k(s, a)&=\min\{\langle\bm{w}^k_h, \bm\phi(s, a)\rangle+\beta_k\left\lVert\bm\phi(s, a)\right\rVert_{(\Lambda^k_H)^{-1}},H\}\geq Q^*_{h, k}(s, a).
\end{align*}
\end{proof}
Next we derive the bound for the gap between the value function estimate and the ground-truth value function for the executing policy $\pi^k$, $\delta^k_h=V^k_h(s^k_h)-V^{\pi^k}_{h, k}(s^k_h)$, in a recursive manner.  
\begin{lemma}
\label{lemma:recurive2}
Let $\delta^k_h=V^k_h(s^k_h)-V^{\pi^k}_{h, k}(s^k_h)$, $\zeta^k_{h+1}=\E[\delta^k_{h+1}|s^k_h, a^k_h]-\delta^k_{h+1}$. Under event $E$ defined in Lemma~\ref{lemma:error_prop}, we have for all $(k, h)\in\mathcal{E}\times[H]$, 
\begin{align*}
    \delta_h^k\leq\delta^k_{h+1}+\zeta_{h+1}^{k}+2\beta_k\left\lVert\bm\phi_h^k\right\rVert_{(\Lambda^k_h)^{-1}}.
\end{align*}
\end{lemma}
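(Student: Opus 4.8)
The plan is to establish the recursive inequality for $\delta_h^k$ by relating the value function estimate $V_h^k$ to the true value function $V_{h,k}^{\pi^k}$ of the executed policy, using the error bound from Lemma~\ref{lemma:ls_error_2} with $\pi = \pi^k$ together with the greedy choice of actions in Alg.~\ref{alg:lsvi_ucb_restart}. First I would recall that $\pi^k$ is greedy with respect to $Q_h^k$, so that $V_h^k(s_h^k) = \max_a Q_h^k(s_h^k, a) = Q_h^k(s_h^k, a_h^k)$, while $V_{h,k}^{\pi^k}(s_h^k) = Q_{h,k}^{\pi^k}(s_h^k, a_h^k)$ by definition of the policy value function. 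This reduces $\delta_h^k$ to the difference $Q_h^k(s_h^k, a_h^k) - Q_{h,k}^{\pi^k}(s_h^k, a_h^k)$ evaluated at the visited state-action pair.

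Next I would apply Lemma~\ref{lemma:ls_error_2} with the policy taken to be $\pi^k$, at the point $(s, a) = (s_h^k, a_h^k)$, which gives
\begin{align*}
Q_h^k(s_h^k, a_h^k) &\leq \langle \bm\phi_h^k, \bm w_h^k\rangle + \beta_k \left\lVert \bm\phi_h^k\right\rVert_{(\Lambda_h^k)^{-1}} \\
&\leq Q_{h,k}^{\pi^k}(s_h^k, a_h^k) + \mathbb{P}_h^k(V_{h+1}^k - V_{h+1,k}^{\pi^k})(s_h^k, a_h^k) + 2\beta_k \left\lVert\bm\phi_h^k\right\rVert_{(\Lambda_h^k)^{-1}},
\end{align*}
where the first inequality uses that $Q_h^k$ is the minimum of the linear-plus-bonus term and $H$, and the second uses the two-sided bound of Lemma~\ref{lemma:ls_error_2}. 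Subtracting $Q_{h,k}^{\pi^k}(s_h^k, a_h^k)$ and substituting the reduction from the previous paragraph yields
\begin{align*}
\delta_h^k \leq \mathbb{P}_h^k(V_{h+1}^k - V_{h+1,k}^{\pi^k})(s_h^k, a_h^k) + 2\beta_k \left\lVert\bm\phi_h^k\right\rVert_{(\Lambda_h^k)^{-1}}.
\end{align*}
Finally I would rewrite the transition term: by definition $\mathbb{P}_h^k(V_{h+1}^k - V_{h+1,k}^{\pi^k})(s_h^k, a_h^k) = \E[\delta_{h+1}^k \mid s_h^k, a_h^k] = \delta_{h+1}^k + \zeta_{h+1}^k$ using the definition $\zeta_{h+1}^k = \E[\delta_{h+1}^k \mid s_h^k, a_h^k] - \delta_{h+1}^k$, which gives exactly $\delta_h^k \leq \delta_{h+1}^k + \zeta_{h+1}^k + 2\beta_k\left\lVert\bm\phi_h^k\right\rVert_{(\Lambda_h^k)^{-1}}$.

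The main subtlety — rather than a genuine obstacle — is making sure Lemma~\ref{lemma:ls_error_2} is invoked with the \emph{executed} policy $\pi^k$ (so that the next-step value appearing in the inequality is $V_{h+1,k}^{\pi^k}$, matching the definition of $\delta_{h+1}^k$), and correctly handling the truncation at $H$ in the definition of $Q_h^k$: one needs that clipping $\langle\bm\phi_h^k,\bm w_h^k\rangle + \beta_k\|\bm\phi_h^k\|_{(\Lambda_h^k)^{-1}}$ at $H$ can only decrease it, so the upper bound on $Q_h^k(s_h^k,a_h^k)$ is preserved (a lower bound would require more care, but only the upper bound is needed here). All the remaining manipulations are definitional, so the proof is short; the recursion it produces is then what gets unrolled over $h = 1, \dots, H$ in the proof of Theorem~\ref{theorem:reg_epoch1}.
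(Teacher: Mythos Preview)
Your proposal is correct and follows essentially the same route as the paper: apply Lemma~\ref{lemma:ls_error_2} with $\pi=\pi^k$ at $(s_h^k,a_h^k)$, use that $a_h^k$ is greedy for $Q_h^k$ so $V_h^k(s_h^k)=Q_h^k(s_h^k,a_h^k)$ and $V_{h,k}^{\pi^k}(s_h^k)=Q_{h,k}^{\pi^k}(s_h^k,a_h^k)$, and then rewrite $\mathbb{P}_h^k(V_{h+1}^k-V_{h+1,k}^{\pi^k})(s_h^k,a_h^k)=\delta_{h+1}^k+\zeta_{h+1}^k$. Your explicit handling of the truncation at $H$ is a welcome clarification that the paper omits.
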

\begin{proof}
By Lemma~\ref{lemma:ls_error_2}, for any $(s,a,h,k)\in\mathcal{S}\times\mathcal{A}\times[H]\times\mathcal{E}$,
\begin{align*}
    Q_h^k(s, a)-Q_h^{\pi^k}(s, a)&\leq\mathbb{P}_h^k(V^k_{h+1}-V^{\pi^k}_{h+1, k})(s, a)+2\beta_k\left\lVert\bm\phi(s, a)\right\rVert_{(\Lambda^k_H)^{-1}}.
\end{align*}
Note that $Q_h^k(s_h^k, a_h^k)=\max_aQ_h^k(s_h^k, a)=V_h^k(s_h^k)$ according to Algorithm~\ref{alg:lsvi_ucb_restart}, and $Q_{h, k}^{\pi^k}(s_h^k, a_h^k)=V^{\pi^k}_{h,k}(s_h^k)$ by the definition. Thus,
\begin{align*}
    \delta_h^k\leq\delta^k_{h+1}+\zeta_{h+1}^{k}+2\beta_k\left\lVert\bm\phi_h^k\right\rVert_{(\Lambda^k_h)^{-1}}.
\end{align*}
\end{proof}
Now we are ready to derive the regret bound within one epoch.
\begin{proof}[Proof of Theorem~\ref{theorem:reg_epoch1}]
We denote the dynamic regret within that epoch as $\text{Dyn-Reg}(\mathcal{E})$. We define $\delta^k_h=V^k_h(s^k_h)-V^{\pi^k}_{h, k}(s^k_h)$ and $\zeta^k_{h+1}=\E[\delta^k_{h+1}|s^k_h, a^k_h]-\delta^k_{h+1}$ as in Lemma~\ref{lemma:recurive_formula}. We derive the dynamic regret within a epoch $\mathcal{E}$ (the length of this epoch is $W$ which is equivalent to $\frac{W}{H}$ episodes) conditioned on the event $E$ defined in Lemma~\ref{lemma:error_prop} which happens with probability at least $1-p/2$.
\begin{align}
\label{eqn:reg_bound1}
    \text{Dyn-Reg}(\mathcal{E})&=\sum_{k\in\mathcal{E}}\left[V^*_{1, k}(s^k_1)-V^{\pi^k}_{1, k}\right]\nonumber\\
    &\leq\sum_{k\in\mathcal{E}}\left[V^k_1(s^k_1)-V^{\pi^k}_{1, k}\right]\nonumber\\
    &\leq\sum_{k\in\mathcal{E}}\delta^k_1\nonumber\\
    &\leq \sum_{k\in\mathcal{E}}\sum_{h=1}^{H}\zeta^k_h+2\sum_{k\in\mathcal{K}}\beta_k\sum_{h=1}^{H}\left\lVert\bm\phi^k_h\right\rVert_{(\Lambda^k_h)^{-1}},
\end{align}
where the first inequality is due to Lemma~\ref{lemma:upper_bound}, the third inequality is due to Lemma~\ref{lemma:recurive2}.
For the first term in the right side, since $V_h^k$ is independent of the new observation $s_h^k$, $\{\zeta_h^k\}$ is a martingale difference sequence. Applying the Azuma-Hoeffding inequlity, we have for any $t>0$,
\begin{align*}
    \Prob\left(\sum_{k\in\mathcal{E}}\sum_{h=1}^{H}\zeta_h^k\geq t\right)\geq \exp(-t^2/(2WH^2)).
\end{align*}
Hence with probability at least $1-p/2$, we have
\begin{align}
\label{eqn:zeta_bound}
    \sum_{k\in\mathcal{E}}\sum_{h=1}^{H}\zeta_h^k\leq 2H\sqrt{W\log(2dW/p)}.
\end{align}
For the second term, we bound via Cauchy-Schwarz inequality:
\begin{align}
\label{eqn:bound_second_term}
2\sum_{k\in\mathcal{E}}\beta_k\sum_{h=1}^{H}\left\lVert\bm\phi^k_h\right\rVert_{(\Lambda^k_h)^{-1}}&=2C_0dH\sqrt{\log2(dW/p)}\sum_{k\in\mathcal{E}}\sum_{h=1}^{H}\left\lVert\bm\phi_h^k\right\rVert_{(\Lambda_h^k)^{-1}}+2\sum_{k\in\mathcal{E}}B_{\theta,\mathcal{E}}\sqrt{d(k-\tau)}\sum_{h=1}^{H}\left\lVert\bm\phi_h^k\right\rVert_{(\Lambda_h^k)^{-1}}\nonumber\\
&\quad + 2\sum_{k\in\mathcal{E}}B_{\bm{\mu}}H\sqrt{d(k-\tau)}\sum_{h=1}^{H}\left\lVert\bm\phi_h^k\right\rVert_{(\Lambda_h^k)^{-1}}\nonumber\\
&\leq 2C_0dH\sqrt{\log2(dW/p)}\sum_{h=1}^{H}\sqrt{W/H}(\sum_{k\in\mathcal{E}}\left\lVert\bm\phi_h^k\right\rVert^2_{(\Lambda_h^k)^{-1}})^{1/2}\nonumber\\
&\quad+2\sum_{h=1}^{H}(\sum_{k\in\mathcal{E}}B_{\bm\theta, \mathcal{E}}\sqrt{d(k-\tau)})^{1/2}(\sum_{k\in\mathcal{E}}\left\lVert\bm\phi_h^k\right\rVert^2_{(\Lambda_h^k)^{-1}})^{1/2}\nonumber\\
&\quad+2\sum_{h=1}^{H}(\sum_{k\in\mathcal{E}}B_{\bm\mu, \mathcal{E}}H\sqrt{d(k-\tau)})^{1/2}(\sum_{k\in\mathcal{E}}\left\lVert\bm\phi_h^k\right\rVert^2_{(\Lambda_h^k)^{-1}})^{1/2}\nonumber\\
&\leq 2C_0dH\sqrt{\log2(dW/p)}\sum_{h=1}^{H}\sqrt{W/H}(\sum_{k\in\mathcal{E}}\left\lVert\bm\phi_h^k\right\rVert^2_{(\Lambda_h^k)^{-1}})^{1/2}\nonumber\\
&\quad+2\sum_{h=1}^{H}B_{\bm\theta, \mathcal{E}}\sqrt{d}\frac{W}{H}(\sum_{k\in\mathcal{E}}\left\lVert\bm\phi_h^k\right\rVert^2_{(\Lambda_h^k)^{-1}})^{1/2}\nonumber\\
&\quad+2\sum_{h=1}^{H}B_{\bm\theta, \mathcal{E}}\sqrt{d}W(\sum_{k\in\mathcal{E}}\left\lVert\bm\phi_h^k\right\rVert^2_{(\Lambda_h^k)^{-1}})^{1/2}
\end{align}

By Lemma~\ref{lemma:gram_bound}, we have
\begin{align}
\label{eqn:bound_gram}
    (\sum_{k\in\mathcal{E}}\left\lVert\bm\phi_h^k\right\rVert^2_{(\Lambda_h^k)^{-1}})^{1/2}\leq \sqrt{d\log\left(\frac{W}{H}+1\right)}.
\end{align}
Finally, by combining Eq.~\ref{eqn:reg_bound1}--\ref{eqn:bound_gram}, we obtain the regret bound within the epoch $\mathcal{E}$ as:
\begin{align*}
    \text{Dyn-Reg}(\mathcal{E})\lesssim \tilde{O}(H^{3/2}d^{3/2}W^{1/2}+B_{\bm{\theta}, \mathcal{E}}dW+B_{\bm{\mu}, \mathcal{E}}dHW).
\end{align*}
\end{proof}
By summing over all epochs and applying a union bound, we obtain the regret bound for the whole time horizon. 
\begin{theorem}
If we set $\beta=\beta_k=cdH\sqrt{\log(2dT/p)}+B_{\bm{\theta}, \mathcal{E}}\sqrt{d(k-\tau)}+B_{\bm{\mu},\mathcal{E}}H\sqrt{d(k-\tau)}$, the dynamic regret of \texttt{LSVI-UCB-Restart} is $\tilde{O}(H^{3/2}d^{3/2}TW^{-1/2}+B_{\bm{\theta}}dW+B_{\bm{\mu}}dHW)$, with probability at least $1-p$.
\end{theorem}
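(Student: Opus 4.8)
The plan is to derive the whole-horizon bound from the single-epoch bound of Theorem~\ref{theorem:reg_epoch1} by summing over epochs and taking a union bound. First I would partition the $T$ time steps (equivalently $K=T/H$ episodes) into $\Gamma:=\lceil T/W\rceil$ consecutive epochs $\mathcal{E}_1,\dots,\mathcal{E}_\Gamma$, each containing $W/H$ episodes except possibly the last. Since the restart step in Alg.~\ref{alg:lsvi_ucb_restart} zeroes out the Gram matrix and the historical sums $\sum_{l=\tau}^{k-1}(\cdot)$ at the beginning of each epoch, the run within epoch $\mathcal{E}_j$ depends only on data collected inside $\mathcal{E}_j$, so Theorem~\ref{theorem:reg_epoch1} applies verbatim to each epoch, with $B_{\bm\theta,\mathcal{E}_j},B_{\bm\mu,\mathcal{E}_j}$ its local variations. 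The dynamic regret of \texttt{LSVI-UCB-Restart} over the whole horizon is exactly $\sum_{j=1}^\Gamma \mathrm{Dyn\text{-}Reg}(\mathcal{E}_j)$, so it suffices to bound each summand and add.

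Next I would fix the failure budget. Running the proof of Theorem~\ref{theorem:reg_epoch1} on $\mathcal{E}_j$ with the confidence parameter $p/\Gamma$ in place of $p$ shows that, on an event $E_j$ of probability at least $1-p/\Gamma$, the regret incurred during $\mathcal{E}_j$ is $\tilde O\big(H^{3/2}d^{3/2}W^{1/2}+B_{\bm\theta,\mathcal{E}_j}dW+B_{\bm\mu,\mathcal{E}_j}dHW\big)$, as long as $\beta_k$ in that epoch dominates $cdH\sqrt{\log(2d\Gamma W/p)}+B_{\bm\theta,\mathcal{E}_j}\sqrt{d(k-\tau)}+B_{\bm\mu,\mathcal{E}_j}H\sqrt{d(k-\tau)}$. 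Because $\Gamma W\le T+W\le 2T$, we have $\log(2d\Gamma W/p)\le\log(4dT/p)$, so the stated choice $\beta_k=cdH\sqrt{\log(2dT/p)}+B_{\bm\theta,\mathcal{E}}\sqrt{d(k-\tau)}+B_{\bm\mu,\mathcal{E}}H\sqrt{d(k-\tau)}$ meets this requirement after enlarging the absolute constant $c$, and the extra $\log\Gamma$ is swallowed by $\tilde O(\cdot)$; in particular the optimism property of Lemma~\ref{lemma:upper_bound} still holds inside every epoch. A union bound over $j\in[\Gamma]$ then makes all $\Gamma$ epoch-level bounds hold simultaneously with probability at least $1-p$.

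On that global event I would just add the three pieces. The first term contributes $\Gamma\cdot\tilde O(H^{3/2}d^{3/2}W^{1/2})=\lceil T/W\rceil\,\tilde O(H^{3/2}d^{3/2}W^{1/2})=\tilde O(H^{3/2}d^{3/2}TW^{-1/2})$. For the second term, $\sum_{j=1}^\Gamma B_{\bm\theta,\mathcal{E}_j}dW=dW\sum_{j=1}^\Gamma B_{\bm\theta,\mathcal{E}_j}$, and since $B_{\bm\theta,\mathcal{E}_j}=\sum_{k\in\mathcal{E}_j}\sum_{h=1}^H\lVert\bm\theta_{h,k}-\bm\theta_{h,k-1}\rVert_2$ with the epochs tiling the episode index set, summing over $j$ counts each consecutive pair $(k-1,k)$ once, so $\sum_j B_{\bm\theta,\mathcal{E}_j}\le B_{\bm\theta}$ and the second term is $\tilde O(B_{\bm\theta}dW)$. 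The third term is identical with $B_{\bm\mu}=\sum_k\sum_h\lVert\bm\mu_{h,k}-\bm\mu_{h,k-1}\rVert_{\mathrm F}$, giving $\tilde O(B_{\bm\mu}dHW)$. Adding the three contributions yields the claimed bound $\tilde O(H^{3/2}d^{3/2}TW^{-1/2}+B_{\bm\theta}dW+B_{\bm\mu}dHW)$ with probability at least $1-p$.

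The argument is essentially bookkeeping on top of Theorem~\ref{theorem:reg_epoch1}; the only two points that need care are (i) checking that rescaling each epoch's confidence level to $p/\Gamma$ perturbs $\beta_k$ and the per-epoch regret only through logarithmic factors, so the clean choice $\beta_k=cdH\sqrt{\log(2dT/p)}+\cdots$ still certifies optimism in every epoch, and (ii) verifying that $\sum_j B_{\bm\theta,\mathcal{E}_j}\le B_{\bm\theta}$ and $\sum_j B_{\bm\mu,\mathcal{E}_j}\le B_{\bm\mu}$, which holds because those budgets are telescoping-style sums of one-step differences and the epochs partition the episode axis (up to the convention at the first episode). No new probabilistic or optimization ideas beyond Theorem~\ref{theorem:reg_epoch1} are needed, so I expect no real obstacle here; the substantive work is all already in Theorem~\ref{theorem:reg_epoch1} and Lemmas~\ref{lemma:error_prop}--\ref{lemma:upper_bound}.
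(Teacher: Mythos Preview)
Your proposal is correct and follows essentially the same route as the paper: apply Theorem~\ref{theorem:reg_epoch1} to each of the $\lceil T/W\rceil$ epochs with confidence $p/\lceil T/W\rceil$, union-bound, and sum, using $\sum_j B_{\bm\theta,\mathcal{E}_j}\le B_{\bm\theta}$ and $\sum_j B_{\bm\mu,\mathcal{E}_j}\le B_{\bm\mu}$. If anything, you are more careful than the paper in explicitly verifying that the stated $\beta_k$ with $\log(2dT/p)$ suffices after the confidence rescaling.
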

\begin{proof}
In total there are $N=\lceil\frac{T}{W}\rceil$ epochs. For each epoch $\mathcal{E}_i$ if we set $\delta=\frac{p}{N}$, then it will incur regret $\tilde{O}(d^{3/2}H^{3/2}W^{1/2}+B_{\bm{\theta}, \mathcal{E}_i}dW+B_{\bm{\mu}, \mathcal{E}_i}dHW)$ with probability at least $1-\frac{p}{N}$. By summing over all epochs and applying the union bound over them, we can obtain the regret upper bound for the whole time horizon. With probability at least $1-p$, 
\begin{align*}
    \text{Dyn-Reg}(T)&=\sum_{\mathcal{E}_i}\text{Dyn-Reg}(\mathcal{E}_i)\\
    &\lesssim\sum_{\mathcal{E}_i}\tilde{O}(d^{3/2}H^{3/2}W^{1/2}+B_{\bm{\theta}, \mathcal{E}_i}dW+B_{\bm{\mu}, \mathcal{E}_i}dHW)\\
    &\lesssim \tilde{O}(H^{3/2}d^{3/2}TW^{-1/2}+B_{\bm{\theta}}dW+B_{\bm{\mu}}dHW).
\end{align*}
\end{proof}
\subsection{Case 2: Unknown Local Variation}
Similar to the case of known local variation, we first derive the error bound for the action-value function estimate maintained in the algorithm for any policy, which is the following technical lemma. 
\begin{lemma}
\label{lemma:ls_error}
Under event $E$ defined in Lemma~\ref{lemma:error_prop}, we have for any policy $\pi$, $\forall s, a, h, k\in\mathcal{S}\times\mathcal{A}\times[H]\times\mathcal{E}$, 
\begin{align*}
    &|\langle\bm\phi(s, a), \bm{w}_h^k\rangle-Q^{\pi}_{h, k}(s, a)-\mathbb{P}_h^k(V^k_{h+1}-V^{\pi}_{h+1, k})(s, a)|\leq\beta\left\lVert\bm\phi(s, a)\right\rVert_{(\Lambda^k_h)^{-1}}+B_{\bm{\theta}, \mathcal{E}}\sqrt{d(k-\tau)}+B_{\bm{\mu}, \mathcal{E}}H\sqrt{d(k-\tau)},
\end{align*}
where $\beta=C_0dH\sqrt{\log(2dW/p)}$ and $\tau$ is the first episode in the current epoch. 
\end{lemma}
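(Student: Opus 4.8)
The plan is to run the argument of Lemma~\ref{lemma:ls_error_2} essentially verbatim, the sole difference being that the two drift contributions are no longer absorbed into the exploration bonus but are carried as additive error terms. Concretely, I would start from the same five-term decomposition of $\bm{w}_h^k-\bm{w}^{\pi}_{h,k}$ used there: the ridge term $\circled{1}=-(\Lambda^k_h)^{-1}\bm{w}^{\pi}_{h,k}$, the martingale/noise term $\circled{2}$, the transition-drift term $\circled{3}$ containing $(\mathbb{P}_h^l-\mathbb{P}_h^k)V^k_{h+1}$, the value-propagation term $\circled{4}$ carrying $\mathbb{P}_h^k(V^k_{h+1}-V^{\pi}_{h+1,k})$, and the reward-drift term $\circled{5}$ containing $r_h^l-r_h^k$.

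I would then bound $\langle\bm\phi(s,a),\circled{i}\rangle$ term by term. For $\circled{1}$, $\circled{2}$ and $\circled{4}$ the bounds are identical to those in Lemma~\ref{lemma:ls_error_2}: Cauchy--Schwarz with the weight bound $\left\lVert\bm{w}^{\pi}_{h,k}\right\rVert\le 2H\sqrt{d}$ (Lemma~\ref{lemma:weight_bound}) gives $2H\sqrt{d}\left\lVert\bm\phi(s,a)\right\rVert_{(\Lambda^k_h)^{-1}}$ for $\circled{1}$; event $E$ of Lemma~\ref{lemma:error_prop} gives $CdH\sqrt{\log[2(c_{\beta}+1)dW/p]}\left\lVert\bm\phi(s,a)\right\rVert_{(\Lambda^k_h)^{-1}}$ for $\circled{2}$; and $\circled{4}$ again splits into exactly $[\mathbb{P}_h^k(V^k_{h+1}-V^{\pi}_{h+1,k})](s,a)$ plus a remainder bounded by $2H\sqrt{d}\left\lVert\bm\phi(s,a)\right\rVert_{(\Lambda^k_h)^{-1}}$ via Cauchy--Schwarz. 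Summing these three contributions and choosing $C_0$ large enough produces the $\beta\left\lVert\bm\phi(s,a)\right\rVert_{(\Lambda^k_h)^{-1}}$ piece with $\beta=C_0dH\sqrt{\log(2dW/p)}$, together with the $\mathbb{P}_h^k(V^k_{h+1}-V^{\pi}_{h+1,k})$ term that appears on the left-hand side of the claim.

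The only place the argument changes is $\circled{3}$ and $\circled{5}$. As in Lemma~\ref{lemma:ls_error_2}, Cauchy--Schwarz, the bound $\left\lVert V^k_{h+1}\right\rVert_\infty\le H$, the aggregate control of the transition and reward drift over the epoch by $B_{\bm{\mu},\mathcal{E}}$ and $B_{\bm{\theta},\mathcal{E}}$, and Lemma~\ref{lemma:norm_bound} bound $\langle\bm\phi(s,a),\circled{3}\rangle$ by $\sqrt{d(k-\tau)}\,B_{\bm{\mu},\mathcal{E}}H\left\lVert\bm\phi(s,a)\right\rVert_{(\Lambda^k_h)^{-1}}$ and $\langle\bm\phi(s,a),\circled{5}\rangle$ by $\sqrt{d(k-\tau)}\,B_{\bm{\theta},\mathcal{E}}\left\lVert\bm\phi(s,a)\right\rVert_{(\Lambda^k_h)^{-1}}$. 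Now, since $\Lambda^k_h\succeq I$ and $\left\lVert\bm\phi(s,a)\right\rVert_2\le 1$ under the normalization of Def.~\ref{def:linear_mdp}, we have $\left\lVert\bm\phi(s,a)\right\rVert_{(\Lambda^k_h)^{-1}}\le 1$, so I would simply drop that factor in these two terms, leaving the additive contributions $B_{\bm{\theta},\mathcal{E}}\sqrt{d(k-\tau)}$ and $B_{\bm{\mu},\mathcal{E}}H\sqrt{d(k-\tau)}$; a triangle-inequality assembly of all five bounds then yields the claimed inequality.

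There is no genuinely hard step here --- it is a re-run of the earlier computation --- so the only point I would flag is that the relaxation $\left\lVert\bm\phi\right\rVert_{(\Lambda^k_h)^{-1}}\le 1$ is deliberate rather than wasteful: because $\beta$ no longer contains the variation terms, the bonus $\beta\left\lVert\bm\phi\right\rVert_{(\Lambda^k_h)^{-1}}$ can be much smaller than $1$, so the drift error genuinely lives outside the ellipsoidal norm and must be tracked as a standalone additive term. This is exactly why $Q_h^k$ is only an \emph{approximate} optimistic upper bound in Lemma~\ref{lemma:approx_upper_bound}, where this slack is fed through the per-step induction and accumulates the $(H-h+1)$ factor.
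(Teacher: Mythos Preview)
Your proposal is correct and matches the paper's approach: the paper's proof simply cites Lemma~\ref{lemma:ls_error_2} as a black box and then applies the relaxation $\left\lVert\bm\phi(s,a)\right\rVert_{(\Lambda^k_h)^{-1}}\le 1$ (from $\Lambda^k_h\succeq I$ and $\left\lVert\bm\phi\right\rVert\le 1$) to the two drift terms, which is exactly the key step you identify. You re-run the full five-term decomposition rather than invoking Lemma~\ref{lemma:ls_error_2} directly, but the content is identical.
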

\begin{proof}
This lemma is a looser upper bound implied by Lemma~\ref{lemma:ls_error_2}. By Lemma~\ref{lemma:ls_error_2}, we have
\begin{align*}
     &|\langle\bm\phi(s, a), \bm{w}_h^k\rangle-Q^{\pi}_{h, k}(s, a)-\mathbb{P}_h^k(V^k_{h+1}-V^{\pi}_{h+1, k})(s, a)|\\
     \leq& C_odH\sqrt{\log(2dW/p)}\left\lVert\bm\phi(s, a)\right\rVert_{(\Lambda^k_h)^{-1}}
    +B_{\bm{\theta}, \mathcal{E}}\sqrt{d(k-\tau)}\left\lVert\bm\phi(s, a)\right\rVert_{(\Lambda^k_h)^{-1}}\\
     &\quad+B_{\bm{\mu}, \mathcal{E}}H\sqrt{d(k-\tau)}\left\lVert\bm\phi(s, a)\right\rVert_{(\Lambda^k_h)^{-1}}\\
     \leq& C_odH\sqrt{\log(2dW/p)}\left\lVert\bm\phi(s, a)\right\rVert_{(\Lambda^k_h)^{-1}}+B_{\bm{\theta}, \mathcal{E}}\sqrt{d(k-\tau)}\\
     &\quad+B_{\bm{\mu}, \mathcal{E}}H\sqrt{d(k-\tau)},
\end{align*}
where the second inequality is due to $\left\lVert\bm\phi(s, a)\right\rVert\leq 1$ and $\lambda_{\min}(\Lambda^k_h)\geq 1$, thus $\left\lVert\bm\phi(s, a)\right\rVert_{(\Lambda^k_h)^{-1}}\leq 1$.
\end{proof}
Different from Lemma~\ref{lemma:upper_bound}, when the local variation is unknown, the action-value function estimate we maintained in Algorithm~\ref{alg:lsvi_ucb_restart} is no longer an optimistic upper bound of the optimal action-value function, but approximately up to some error proportional to the local variation. 

\begin{proof}[Proof of Lemma~\ref{lemma:approx_upper_bound}]
We prove this by induction. First prove the base case when $h=H$. According to Lemma~\ref{lemma:ls_error}, we have 
\begin{align*}
|\langle\bm\phi(s, a), \bm{w}^k_H\rangle-Q^{*}_{H, k}(s, a)|&\leq\beta\left\lVert\bm\phi(s, a)\right\rVert_{(\Lambda^k_H)^{-1}}+B_{\bm{\theta}, \mathcal{E}}\sqrt{d(k-\tau)}+B_{\bm{\mu}, \mathcal{E}}H\sqrt{d(k-\tau)},
\end{align*}
which implies
\begin{align*}
Q_H^k(s, a)&=\min\{\langle\bm{w}^k_H, \bm\phi(s, a)\rangle+\beta\left\lVert\bm\phi(s, a)\right\rVert_{(\Lambda^k_H)^{-1}},H\}\\
&\geq Q^*_{H, k}(s, a)-(B_{\bm{\theta}, \mathcal{E}}\sqrt{d(k-\tau)}+B_{\bm{\mu}, \mathcal{E}}H\sqrt{d(k-\tau)}).
\end{align*}
Now suppose the statement holds true at step $h+1$, then for step $h$, due to Lemma~\ref{lemma:ls_error}, we have 
\begin{align*}
    &|\langle\bm\phi(s, a), \bm{w}_h^k\rangle-Q^{\pi}_{h, k}(s, a)-\mathbb{P}_h^k(V^k_{h+1}-V^{*}_{h+1, k})(s, a)|\\
    \leq&\beta\left\lVert\bm\phi(s, a)\right\rVert_{(\Lambda^k_h)^{-1}}+B_{\bm{\theta}, \mathcal{E}}\sqrt{d(k-\tau)}+B_{\bm{\mu}, \mathcal{E}}H\sqrt{d(k-\tau)}.
\end{align*}
By the induction hypothesis, we have $[\mathbb{P}_h^k(V^k_{h+1}-V^{*}_{h+1, k})](s, a)\geq -(H-h+2)(B_{\bm{\theta}, \mathcal{E}}\sqrt{d(k-\tau)}+B_{\bm{\mu}, \mathcal{E}}H\sqrt{d(k-\tau)})$, thus 
\begin{align*}
    Q_h^k(s, a)&=\min\{\langle\bm{w}^k_h, \bm\phi(s, a)\rangle+\beta\left\lVert\bm\phi(s, a)\right\rVert_{(\Lambda^k_H)^{-1}},H\}\\
    &\geq Q^*_{h, k}(s, a)-(H-h+1)(B_{\bm{\theta}, \mathcal{E}}\sqrt{d(k-\tau)}+B_{\bm{\mu}, \mathcal{E}}H\sqrt{d(k-\tau)}).
\end{align*}
\end{proof}
Similar to Lemma~\ref{lemma:recurive2}, next we derive the bound for the gap between the value function estimate and the ground-truth value function for the executing policy $\pi^k$, $\delta^k_h=V^k_h(s^k_h)-V^{\pi^k}_{h, k}(s^k_h)$, in a recursive manner, when the local variation is unknown. 
\begin{lemma}
\label{lemma:recurive_formula}
Let $\delta^k_h=V^k_h(s^k_h)-V^{\pi^k}_{h, k}(s^k_h)$, $\zeta^k_{h+1}=\E[\delta^k_{h+1}|s^k_h, a^k_h]-\delta^k_{h+1}$. Under event $E$ defined in Lemma~\ref{lemma:error_prop}, we have for all $(k, h)\in\mathcal{E}\times[H]$, 
\begin{align*}
    \delta_h^k&\leq\delta^k_{h+1}+\zeta_{h+1}^{k}+2\beta\left\lVert\bm\phi_h^k\right\rVert_{(\Lambda^k_h)^{-1}}+B_{\bm{\theta}, \mathcal{E}}\sqrt{d(k-\tau)}+B_{\bm{\mu}, \mathcal{E}}H\sqrt{d(k-\tau)}.
\end{align*}
\end{lemma}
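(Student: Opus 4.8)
The plan is to mirror the argument used for Lemma~\ref{lemma:recurive2} in the known-variation case, replacing the tight pointwise bound of Lemma~\ref{lemma:ls_error_2} by the looser bound of Lemma~\ref{lemma:ls_error}, which carries the extra additive slack $B_{\bm{\theta},\mathcal{E}}\sqrt{d(k-\tau)}+B_{\bm{\mu},\mathcal{E}}H\sqrt{d(k-\tau)}$. First I would instantiate Lemma~\ref{lemma:ls_error} at the policy $\pi=\pi^k$: for every $(s,a)$ this gives $\langle\bm\phi(s,a),\bm{w}^k_h\rangle \le Q^{\pi^k}_{h,k}(s,a)+\mathbb{P}^k_h(V^k_{h+1}-V^{\pi^k}_{h+1,k})(s,a)+\beta\left\lVert\bm\phi(s,a)\right\rVert_{(\Lambda^k_h)^{-1}}+B_{\bm{\theta},\mathcal{E}}\sqrt{d(k-\tau)}+B_{\bm{\mu},\mathcal{E}}H\sqrt{d(k-\tau)}$.

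Next I would use the definition $Q^k_h(s,a)=\min\{\langle\bm{w}^k_h,\bm\phi(s,a)\rangle+\beta\left\lVert\bm\phi(s,a)\right\rVert_{(\Lambda^k_h)^{-1}},H\}$, so that dropping the truncation yields $Q^k_h(s,a)\le \langle\bm{w}^k_h,\bm\phi(s,a)\rangle+\beta\left\lVert\bm\phi(s,a)\right\rVert_{(\Lambda^k_h)^{-1}}$; combining with the previous display produces $Q^k_h(s,a)-Q^{\pi^k}_{h,k}(s,a)\le \mathbb{P}^k_h(V^k_{h+1}-V^{\pi^k}_{h+1,k})(s,a)+2\beta\left\lVert\bm\phi(s,a)\right\rVert_{(\Lambda^k_h)^{-1}}+B_{\bm{\theta},\mathcal{E}}\sqrt{d(k-\tau)}+B_{\bm{\mu},\mathcal{E}}H\sqrt{d(k-\tau)}$. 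I would then evaluate this inequality at the realized pair $(s^k_h,a^k_h)$. Since $\pi^k$ acts greedily with respect to $Q^k_h$, we have $Q^k_h(s^k_h,a^k_h)=\max_a Q^k_h(s^k_h,a)=V^k_h(s^k_h)$, and by definition of the policy value function $Q^{\pi^k}_{h,k}(s^k_h,a^k_h)=V^{\pi^k}_{h,k}(s^k_h)$, so the left side becomes exactly $\delta^k_h$.

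Finally I would rewrite the transition term: because $s^k_{h+1}\sim\mathbb{P}^k_h(\cdot\mid s^k_h,a^k_h)$, we have $\mathbb{P}^k_h(V^k_{h+1}-V^{\pi^k}_{h+1,k})(s^k_h,a^k_h)=\E[\delta^k_{h+1}\mid s^k_h,a^k_h]=\delta^k_{h+1}+\zeta^k_{h+1}$ by the definition of $\zeta^k_{h+1}$, and substituting $\left\lVert\bm\phi(s^k_h,a^k_h)\right\rVert_{(\Lambda^k_h)^{-1}}=\left\lVert\bm\phi^k_h\right\rVert_{(\Lambda^k_h)^{-1}}$ gives precisely the claimed recursion. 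All of this holds deterministically on the event $E$, on which Lemma~\ref{lemma:ls_error} is valid.

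This lemma is a routine adaptation rather than a step with a genuine obstacle; the only points requiring care are the bookkeeping around the truncation $\min\{\cdot,H\}$ (harmless, since it can only decrease $Q^k_h$ and we need an upper bound) and the identification of the conditional expectation $\E[\delta^k_{h+1}\mid s^k_h,a^k_h]$ with $\mathbb{P}^k_h(V^k_{h+1}-V^{\pi^k}_{h+1,k})(s^k_h,a^k_h)$, which must use the \emph{episode-$k$} transition kernel $\mathbb{P}^k_h$ consistently on both sides of Lemma~\ref{lemma:ls_error}.
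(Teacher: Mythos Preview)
Your proposal is correct and follows exactly the same route as the paper's proof: invoke Lemma~\ref{lemma:ls_error} with $\pi=\pi^k$ to bound $Q_h^k(s,a)-Q^{\pi^k}_{h,k}(s,a)$, then specialize to $(s_h^k,a_h^k)$ using greediness of $\pi^k$ and the definition of $\zeta_{h+1}^k$. If anything, you spell out the truncation and conditional-expectation steps more carefully than the paper does.
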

\begin{proof}
By Lemma~\ref{lemma:ls_error}, for any $(s,a,h,k)\in\mathcal{S}\times\mathcal{A}\times[H]\times\mathcal{E}$,
\begin{align*}
    Q_h^k(s, a)-Q_h^{\pi^k}(s, a)&\leq\mathbb{P}_h^k(V^k_{h+1}-V^{\pi^k}_{h+1, k})(s, a)+2\beta\left\lVert\bm\phi(s, a)\right\rVert_{(\Lambda^k_H)^{-1}}+B_{\bm{\theta}, \mathcal{E}}\sqrt{d(k-\tau)}+B_{\bm{\mu}, \mathcal{E}}H\sqrt{d(k-\tau)}.
\end{align*}
Note that $Q_h^k(s_h^k, a_h^k)=\max_aQ_h^k(s_h^k, a)=V_h^k(s_h^k)$ according to Algorithm~\ref{alg:lsvi_ucb_restart}, and $Q_{h, k}^{\pi^k}(s_h^k, a_h^k)=V^{\pi^k}_{h,k}(s_h^k)$ by the definition. Thus,
\begin{align*}
    \delta_h^k&\leq\delta^k_{h+1}+\zeta_{h+1}^{k}+2\beta\left\lVert\bm\phi_h^k\right\rVert_{(\Lambda^k_h)^{-1}}+B_{\bm{\theta}, \mathcal{E}}\sqrt{d(k-\tau)}+B_{\bm{\mu}, \mathcal{E}}H\sqrt{d(k-\tau)}.
\end{align*}
\end{proof}
Now we are ready to prove Theorem~\ref{thm:reg_epoch}, which is the regret upper bound within one epoch.
\begin{proof}[Proof of Theorem~\ref{thm:reg_epoch}]
We denote the dynamic regret within an epoch as $\text{Dyn-Reg}(\mathcal{E})$. We define $\delta^k_h=V^k_h(s^k_h)-V^{\pi^k}_{h, k}(s^k_h)$ and $\zeta^k_{h+1}=\E[\delta^k_{h+1}|s^k_h, a^k_h]-\delta^k_{h+1}$ as in Lemma~\ref{lemma:recurive_formula}. We derive the dynamic regret within a epoch $\mathcal{E}$ (the length of this epoch is $W$ which is equivalent to $\frac{W}{H}$ episodes) conditioned on the event $E$ defined in Lemma~\ref{lemma:error_prop} which happens with probability at least $1-p/2$.
\begin{align}
\label{eqn:dyn_reg_case2}
    &\text{Dyn-Reg}(\mathcal{E})\nonumber\\
    =&\sum_{k\in\mathcal{E}}\left[V^*_{1, k}(s^k_1)-V^{\pi^k}_{1, k}(s_1^k)\right]\nonumber\\
    \leq&\sum_{k\in\mathcal{E}}[V_1^k(s^k_1)+B_{\bm{\theta}, \mathcal{E}}H\sqrt{d(k-\tau)}+B_{\bm{\mu},\mathcal{E}}H^2\sqrt{d(k-\tau)}-V^{\pi^k}_{1, k}(s_1^k)]\nonumber\\
\leq&\sum_{k\in\mathcal{E}}[\delta_1^k+B_{\bm{\theta}, \mathcal{E}}H\sqrt{d(k-\tau)}\nonumber+B_{\bm{\mu},\mathcal{E}}H^2\sqrt{d(k-\tau)}]\nonumber\\
    \leq&\sum_{k\in\mathcal{E}}\sum_{h=1}^{H}\zeta_h^k+2\beta\sum_{k\in\mathcal{E}}\sum_{h=1}^{H}\left\lVert\bm\phi_h^k\right\rVert_{(\Lambda^k_h)^{-1}}+2\sum_{k\in\mathcal{E}}B_{\bm{\theta}, \mathcal{E}}H\sqrt{d(k-\tau)}+2\sum_{k\in\mathcal{E}}B_{\bm{\mu}, \mathcal{E}}H^2\sqrt{d(k-\tau)}\nonumber\\
    \leq& \sum_{k\in\mathcal{E}}\sum_{h=1}^{H}\zeta_h^k+2\beta\sum_{k\in\mathcal{E}}\sum_{h=1}^{H}\left\lVert\bm\phi_h^k\right\rVert_{(\Lambda^k_h)^{-1}}+B_{\bm{\theta}, \mathcal{E}}W\sqrt{2d(W/H+1)}+B_{\bm{\mu},\mathcal{E}}W\sqrt{2d(WH+H)}
\end{align}
where the first inequality is due to Lemma~\ref{lemma:approx_upper_bound}, the third inequality is due to Lemma~\ref{lemma:recurive_formula}, and the last inequality is due to Jensen's inequality.
Now we need to bound the first two terms in the right side. Note that $\{\zeta_h^k\}$ is a martingale difference sequence satisfying $|\zeta_h^k|\leq 2H$ for all $(k, h)$. By Azuma-Hoeffding inequality we have for any $t>0$,
\begin{align*}
    \Prob\left(\sum_{k\in\mathcal{E}}\sum_{h=1}^{H}\zeta_h^k\geq t\right)\geq \exp(-t^2/(2WH^2)).
\end{align*}
Hence with probability at least $1-p/2$, we have
\begin{align}
\label{eqn:zeta_bound2}
    \sum_{k\in\mathcal{E}}\sum_{h=1}^{H}\zeta_h^k\leq 2H\sqrt{W\log(2dW/p)}.
\end{align}
For the second term, note that by Lemma~\ref{lemma:gram_bound} for any $h\in[H]$, we have 
\begin{align*}
    \sum_{k\in\mathcal{E}}(\bm\phi_h^k)^\top(\Lambda^k_h)^{-1}\bm\phi^k_h&\leq 2\log\left[\frac{\det(\Lambda_h^{k+1})}{\det(\Lambda_h^1)}\right]\leq 2d\log\left(\frac{W}{H}+1\right).
\end{align*}
By Cauchy-Schwarz inequality, we have 
\begin{align}
\label{eqn:gram_bound2}
    \sum_{k\in\mathcal{E}}\sum_{h=1}^{H}\left\lVert\bm\phi_h^k\right\rVert_{(\Lambda^k_h)^{-1}}&\leq\sum_{h=1}^{H}\sqrt{W/H}\left[\sum_{k\in\mathcal{E}}(\bm\phi^k_h)^\top(\Lambda^k_h)^{-1}\bm\phi^k_h\right]^{1/2}\nonumber\\
    &\leq H\sqrt{2d\frac{W}{H}\log\left(\frac{W}{H}+1\right)}\nonumber\\
    &\leq H\sqrt{2d\frac{W}{H}\log\left[2dW/p\right]}.
\end{align}
Finally, combining Eq. \ref{eqn:dyn_reg_case2}--\ref{eqn:gram_bound2}, we have with probability at least $1-p$,
\begin{align*}
    \text{Dyn-Reg}(\mathcal{E})
    &\leq 2H\sqrt{W\log(2dW/p)}+C_0dH^2\sqrt{\log(2dW)/p}\sqrt{2d\frac{W}{H}\log[2dW/p]}\\
    &\quad +B_{\bm{\theta}, \mathcal{E}}W\sqrt{2d(W/H+1)}+B_{\bm{\mu},\mathcal{E}}W\sqrt{2d(WH+H)}\\
    &\lesssim\tilde{O}(\sqrt{d^3H^3W}+B_{\bm{\theta}, \mathcal{E}}\sqrt{d/H}W^{3/2}+B_{\bm{\mu},\mathcal{E}}\sqrt{dH}W^{3/2}).
\end{align*}
\end{proof}
Now we can derive the regret bound for the whole time horizon by summing over all epochs and applying a union bound. We restate the regret upper bound and provide its detailed proof. 
\begin{theorem}
If we set $\beta=cdH\sqrt{\log(2dT/p)}$, the dynamic regret of \texttt{LSVI-UCB-Restart} algorithm is $\tilde{O}(W^{-1/2}Td^{3/2}H^{3/2}+B_{\bm{\theta}}d^{1/2}H^{-1/2}W^{3/2}+B_{\bm{\mu}}d^{1/2}H^{1/2}W^{3/2})$, with probability at least $1-p$.
\end{theorem}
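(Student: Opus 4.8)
The plan is to lift the single-epoch bound of Theorem~\ref{thm:reg_epoch} to the full horizon by decomposing the dynamic regret epoch-by-epoch and summing. Because the restart mechanism uses a fixed epoch size $W$, there are exactly $N=\lceil T/W\rceil$ epochs $\mathcal{E}_1,\ldots,\mathcal{E}_N$, and within each epoch the algorithm runs a fresh regularized least-squares fit that ignores all data from before the epoch's first episode, so the per-epoch analysis underlying Theorem~\ref{thm:reg_epoch} applies verbatim to each $\mathcal{E}_i$. First I would invoke Theorem~\ref{thm:reg_epoch} on each $\mathcal{E}_i$ with the per-epoch failure probability set to $p/N$ instead of $p$; since $WN\le T+W$, this replaces $\log(2dW/p)$ by $\log(2dWN/p)=\tilde{O}(\log(2dT/p))$, which is precisely why the bonus in the statement is tuned with $\log(2dT/p)$. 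Each epoch then contributes $\tilde{O}\big(\sqrt{d^3H^3W}+B_{\bm{\theta},\mathcal{E}_i}\sqrt{d/H}\,W^{3/2}+B_{\bm{\mu},\mathcal{E}_i}\sqrt{dH}\,W^{3/2}\big)$ with probability at least $1-p/N$.

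Next, a union bound over the $N$ epochs guarantees that all $N$ per-epoch bounds hold simultaneously with probability at least $1-p$. On this event I would sum the per-epoch regrets. The leading term gives $N\cdot\tilde{O}(\sqrt{d^3H^3W})=\tilde{O}(T W^{-1/2} d^{3/2}H^{3/2})$ since $N=\lceil T/W\rceil$. For the two variation-dependent terms, the key observation is that the epochs partition the set of episode indices, so each consecutive-episode difference $\|\bm{\theta}_{h,k}-\bm{\theta}_{h,k-1}\|_2$ (resp.\ $\|\bm{\mu}_{h,k}-\bm{\mu}_{h,k-1}\|_{\mathrm F}$) is charged to at most one epoch; hence $\sum_{i=1}^N B_{\bm{\theta},\mathcal{E}_i}\le B_{\bm{\theta}}$ and $\sum_{i=1}^N B_{\bm{\mu},\mathcal{E}_i}\le B_{\bm{\mu}}$. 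This collapses those sums to $B_{\bm{\theta}}d^{1/2}H^{-1/2}W^{3/2}$ and $B_{\bm{\mu}}d^{1/2}H^{1/2}W^{3/2}$ respectively, which together with the leading term yields the claimed bound.

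There is no deep difficulty here; the argument is essentially bookkeeping on top of Theorem~\ref{thm:reg_epoch}. The one place needing a little care is the probability/logarithm accounting: one must check that shrinking the per-epoch failure probability to $p/N$ does not inflate $\beta$ or the resulting $\tilde{O}$ constants (it does not, because $N\le T$ and the dependence is only logarithmic), and that Theorem~\ref{thm:reg_epoch} was genuinely stated with a per-epoch bonus so that this rescaling is legitimate. A secondary subtlety worth spelling out is that the ``within-epoch'' variation in Lemma~\ref{lemma:ls_error} and Theorem~\ref{thm:reg_epoch} is measured over consecutive episodes whose later endpoint lies in the epoch, so the decomposition $\sum_i B_{\bm{\theta},\mathcal{E}_i}\le B_{\bm{\theta}}$ holds exactly and the $N-1$ epoch-boundary differences are already counted once in $B_{\bm{\theta}}$; I would state this explicitly to make the summation airtight.
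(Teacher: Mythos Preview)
Your proposal is correct and follows essentially the same approach as the paper: invoke the per-epoch bound of Theorem~\ref{thm:reg_epoch} with failure probability $p/N$, union-bound over the $N=\lceil T/W\rceil$ epochs, and sum, using $\sum_i B_{\bm\theta,\mathcal{E}_i}\le B_{\bm\theta}$ and $\sum_i B_{\bm\mu,\mathcal{E}_i}\le B_{\bm\mu}$. Your remarks on the logarithmic accounting and the epoch-boundary summation are more explicit than the paper's own write-up, which simply states the union bound and the resulting sum.
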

\begin{proof}
In total there are $N=\lceil\frac{T}{W}\rceil$ epochs. For each epoch $\mathcal{E}_i$ if we set $\delta=\frac{p}{N}$, then it will incur regret $\tilde{O}(\sqrt{d^3H^3W}+B_{\bm{\theta}, \mathcal{E}_i}\sqrt{d/H}W^{3/2}+B_{\bm{\mu}, \mathcal{E}_i}\sqrt{dH}W^{3/2})$ with probability at least $1-\frac{p}{N}$. By summing over all epochs and applying a union bound over them, we can obtain the regret upper bound for the whole time horizon. With probability at least $1-p$, 
\begin{align*}
    \text{Dyn-Reg}(T)&=\sum_{\mathcal{E}_i}\text{Dyn-Reg}(\mathcal{E}_i)\lesssim\sum_{\mathcal{E}_i}\tilde{O}(\sqrt{d^3H^3W}+B_{\bm{\theta}, \mathcal{E}_i}\sqrt{d/H}W^{3/2}+B_{\bm{\mu}, \mathcal{E}_i}\sqrt{dH}W^{3/2})\\
    &\lesssim \tilde{O}(d^{3/2}H^{3/2}TW^{-1/2}+B_{\bm{\theta}}d^{1/2}H^{-1/2}W^{3/2}+B_{\bm{\mu}}d^{1/2}H^{1/2}W^{3/2}).
\end{align*}
\end{proof}
\section{Proofs in Section~\ref{sec:alg2}}
\label{sec:proof_ada}
In this section, we derive the regret bound for \texttt{Ada-LSVI-UCB-Restart} algorithm. 
\begin{proof}[Proof of Theorem~\ref{thm:reg_ada_lsvi}]
Let $R_i(W, s_1^{(i-1)H})$ be the totol reward recieved in $i$-th block by running proposed \texttt{LSVI-UCB-Restart} with window size $W$ starting at state $s_1^{(i-1)H}$, we can first decompose the regret as follows:
\begin{align*}
    &\text{Dyn-Reg}(T)=\underbrace{\sum_{k=1}^{K}V^*_{1, k}(s^1_k)-\sum_{i=1}^{\lceil T/MH\rceil}R_i(W^\dagger,s_1^{(i-1)H})}_{\circled{1}}+\underbrace{\sum_{i=1}^{\lceil T/MH\rceil}(R_i(W^\dagger,s_1^{(i-1)H})-R_i(W_i,s_1^{(i-1)H})}_{\circled{2}}, 
\end{align*}
where term $\circled{1}$ is the regret incurred by always selecting the best epoch size for restart in the feasible set $J_W$, and term $\circled{2}$ is the regret incurred by adaptively tuning epoch size by \texttt{EXP3-P}. We denote the optimal epoch size in this case as $ W^*=\lceil(B_{\bm\theta}+B_{\bm\mu}+1)^{-1/2}d^{1/2}H^{1/2}T^{1/2}\rceil H$. It is straightforward to verify that $1\leq W^*\leq MH$, thus there exists a $W^\dagger\in J_W$ such that $W^{\dagger}\leq W^*\leq 2W^\dagger$, which well-approximates the optimal epoch size up to constant factors. Denote the total variation of $\bm{\theta}$ and $\bm{\mu}$ in block $i$ as $B_{\bm{\theta}, i}$ and $B_{\bm{\mu}, i}$ respectively. Now we can bound the regret. For the first term, we have
\begin{align*}
    \circled{1}&\lesssim\sum_{i=1}^{\lceil T/MH\rceil}\tilde{O}(d^{3/2}H^{3/2}MH(W^{\dagger})^{-1/2}+B_{\bm{\theta}, i}d^{1/2}H^{-1/2}(W^\dagger)^{3/2}+B_{\bm{\mu},i}d^{1/2}H^{1/2}(W^{\dagger})^{3/2})\\
    &\lesssim\tilde{O}(d^{3/2}H^{3/2}T(W^{\dagger})^{-1/2}+B_{\bm{\theta}}d^{1/2}H^{-1/2}(W^\dagger )^{3/2}+B_{\bm{\mu}}d^{1/2}H^{1/2}(W^{\dagger})^{3/2})\\
    &\lesssim \tilde{O}(d^{3/2}H^{3/2}T(W^{*})^{-1/2}+B_{\bm{\theta}}d^{1/2}H^{-1/2}(W^*)^{3/2}+B_{\bm{\mu}}d^{1/2}H^{1/2}(W^{*})^{3/2})\\
    &\lesssim \tilde{O}((B_{\bm{\theta}}+B_{\bm{\mu}}+1)^{1/4}d^{5/4}H^{5/4}T^{3/4}),
\end{align*}
where the first inequality is due to Theorem~\ref{thm:reg_bound2}, and the third inequality is due to $W^\dagger$ differs from $W^*$ up to constant factor. For the second term, we can directly apply the regret bound of \texttt{EXP3-P} algorithm~\citep{bubeck2012regret}. In this case there are $\Delta=\ln M +1$ arms, number of equivalent time steps is $\lceil\frac{T}{MH}\rceil$, and loss per equivalent time step is bounded within $[0, MH]$. Thus we have
\begin{align*}
    \circled{2}\lesssim\tilde{O}(MH\sqrt{\Delta T/MH})\leq\tilde{O}(d^{1/4}H^{3/4}T^{3/4}).
\end{align*}
Combining the bound of $\circled{1}$ and $\circled{2}$ yields the regret bound of \texttt{Ada-LSVI-UCB-Restart},
\begin{align*}
    \text{Dyn-Reg}(T)\lesssim\tilde{O}((B_{\bm\theta}+B_{\bm\mu}+1)^{1/4}d^{5/4}H^{5/4}T^{3/4}).
\end{align*}
\end{proof}

\section{Auxiliary Lemmas}
\label{sec:proof_lemma}
In this section, we present some useful auxiliary lemmas.
\begin{lemma}
\label{lemma:weight_bound}
For any fixed policy $\pi$, let $\{w^\pi_{h, k}\}_{h\in[H], k\in[K]}$ be the corresponding weights such that $Q^\pi_{h,k}(s, a)=\langle\bm\phi(s, a),\bm  w^\pi_{h, k}\rangle$ for all $(s, a, h, k)\in\mathcal{S}\times\mathcal{A}\times[H]\times[K]$. Then we have
\begin{align*}
\forall (k, h)\in[K]\times[H], \quad\left\lVert \bm{w}^\pi_{h, k}\right\rVert\leq 2H\sqrt{d}.
\end{align*}
\end{lemma}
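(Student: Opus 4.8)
The plan is to write $\bm{w}^\pi_{h,k}$ out explicitly in terms of the linear-MDP parameters via the Bellman consistency equation for $\pi$, and then bound it term by term using the normalization built into Def.~\ref{def:linear_mdp} together with the crude value bound $0 \le V^\pi_{h+1,k} \le H$. Concretely, I would start from $Q^\pi_{h,k}(s,a) = r_{h,k}(s,a) + [\Prob_h^k V^\pi_{h+1,k}](s,a)$ and substitute the linear forms $r_{h,k}(s,a) = \langle \bm\phi(s,a), \bm\theta_{h,k}\rangle$ and $\Prob_h^k(\cdot\mid s,a) = \langle \bm\phi(s,a), \bm\mu_{h,k}(\cdot)\rangle$ to obtain $Q^\pi_{h,k}(s,a) = \big\langle \bm\phi(s,a),\ \bm\theta_{h,k} + \int_{\mathcal S} V^\pi_{h+1,k}(s')\,d\bm\mu_{h,k}(s') \big\rangle$, so that the weight of interest is $\bm{w}^\pi_{h,k} = \bm\theta_{h,k} + \int_{\mathcal S} V^\pi_{h+1,k}(s')\,d\bm\mu_{h,k}(s')$.

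Next I would bound the two pieces. The normalization in Def.~\ref{def:linear_mdp} gives $\|\bm\theta_{h,k}\| \le \sqrt{d}$ and $\big\| |\bm\mu_{h,k}|(\mathcal S) \big\| \le \sqrt{d}$. Since the per-step reward lies in $[0,1]$ and at most $H$ steps remain, $\|V^\pi_{h+1,k}\|_\infty \le H$, so the triangle inequality for the (vector-valued) integral yields $\big\| \int_{\mathcal S} V^\pi_{h+1,k}(s')\,d\bm\mu_{h,k}(s') \big\| \le H \big\| |\bm\mu_{h,k}|(\mathcal S)\big\| \le H\sqrt{d}$. Combining, $\|\bm{w}^\pi_{h,k}\| \le \sqrt{d} + H\sqrt{d} \le 2H\sqrt{d}$ for $H \ge 1$, and the argument is uniform in $(h,k)$ because the parameter bounds are.

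I do not expect a genuine obstacle here: the lemma is essentially bookkeeping lifted from the stationary linear-MDP setting. The only mild points of care are (i) using the loose bound $V^\pi \le H$ rather than the tighter $H-h$ (the slack is harmless and keeps the constant clean) and (ii) justifying the integral-versus-norm step by the total-variation normalization of $\bm\mu_{h,k}$ rather than, say, a pointwise bound. Both are routine once the linear structure is invoked.
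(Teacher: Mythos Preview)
Your proposal is correct and essentially identical to the paper's own proof: both expand $Q^\pi_{h,k}$ via the Bellman equation, read off $\bm{w}^\pi_{h,k} = \bm\theta_{h,k} + \int V^\pi_{h+1,k}(s')\,d\bm\mu_{h,k}(s')$, and bound the two pieces by $\sqrt{d}$ and $H\sqrt{d}$ using the normalization in Def.~\ref{def:linear_mdp} and $\|V^\pi_{h+1,k}\|_\infty \le H$.
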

\begin{proof}
By the Bellman equation, we know that for any $(h, k)\in[H]\times[K]$,
\begin{align*}
    Q^{\pi}_{h, k}(s, a)&=(r_h^k+\mathbb{P}_h^kV^{\pi}_{h+1,k})(s, a)\\
                        &=\langle\bm{\theta}_{h, k}+\int V^{\pi}_{h+1, k}d\bm{\mu}_{h,k}(s'), \bm\phi(s, a)\rangle\\
                        &=\langle\bm{w}^\pi_{h, k}, \bm\phi(s,a)\rangle,
\end{align*}
where the second equality holds due to the linear MDP assumption. Under the normalization assumption in Definition~\ref{def:linear_mdp}, we have $\left\lVert\bm{\theta}_{h, k}\right\rVert\leq\sqrt{d}$, $V^\pi_{h+1, k}\leq H$ and $\left\lVert\bm{\mu}_{h, k}(s')\right\rVert\leq\sqrt{d}$. Thus,
\begin{align*}
    \bm{w}^\pi_{h, k}\leq\sqrt{d}+H\sqrt{d}\leq2H\sqrt{d}.
\end{align*}
\end{proof}

\begin{lemma}
\label{lemma:norm_bound}
Let $\Lambda_t=I+\sum_{i=1}^t\bm\phi_i^\top\bm\phi_i$, where $\bm\phi_t\in\mathbb{R}^d$, then
\begin{align*}
    \sum_{i=1}^{t}\bm\phi_i^\top(\Lambda_t)^{-1}\bm\phi_i\leq d.
\end{align*}
\end{lemma}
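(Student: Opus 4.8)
The plan is to reduce the sum of quadratic forms to a single trace identity. First I would rewrite each scalar $\bm\phi_i^\top(\Lambda_t)^{-1}\bm\phi_i$ as $\Tr\bigl((\Lambda_t)^{-1}\bm\phi_i\bm\phi_i^\top\bigr)$ using the cyclic property of the trace; here I read the $\bm\phi_i^\top\bm\phi_i$ appearing in the definition of $\Lambda_t$ as the rank-one matrix $\bm\phi_i\bm\phi_i^\top$, which is the only reading consistent with $\Lambda_t$ being a $d\times d$ matrix. Summing over $i$ and pulling the (finite) sum inside the trace then yields $\sum_{i=1}^t\bm\phi_i^\top(\Lambda_t)^{-1}\bm\phi_i=\Tr\bigl((\Lambda_t)^{-1}\sum_{i=1}^t\bm\phi_i\bm\phi_i^\top\bigr)$.

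Next I would substitute the defining relation $\sum_{i=1}^t\bm\phi_i\bm\phi_i^\top=\Lambda_t-I$, which gives $\Tr\bigl((\Lambda_t)^{-1}(\Lambda_t-I)\bigr)=\Tr\bigl(I_d-(\Lambda_t)^{-1}\bigr)=d-\Tr\bigl((\Lambda_t)^{-1}\bigr)$. Finally, since $\Lambda_t\succeq I\succ 0$, the matrix $(\Lambda_t)^{-1}$ is symmetric positive definite, so $\Tr\bigl((\Lambda_t)^{-1}\bigr)>0$, and therefore $\sum_{i=1}^t\bm\phi_i^\top(\Lambda_t)^{-1}\bm\phi_i=d-\Tr\bigl((\Lambda_t)^{-1}\bigr)\le d$, as claimed.

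I do not anticipate any genuine obstacle: the argument is a short, exact computation rather than an estimate, and the only points requiring a moment's care are (i) fixing the outer-product interpretation of the notation, and (ii) checking that every matrix involved is symmetric positive definite, so that the trace manipulations are legitimate and the discarded term $\Tr\bigl((\Lambda_t)^{-1}\bigr)$ is indeed nonnegative. (One can also note in passing that equality would force $\Lambda_t$ to be unbounded, i.e.\ the bound is never tight, though this refinement is not needed downstream.)
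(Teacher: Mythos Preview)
Your proposal is correct and follows essentially the same route as the paper: both convert the sum to $\Tr\bigl((\Lambda_t)^{-1}\sum_{i=1}^t\bm\phi_i\bm\phi_i^\top\bigr)$ via the trace trick, and then bound this by $d$. The only cosmetic difference is that the paper diagonalizes $\sum_i\bm\phi_i\bm\phi_i^\top$ and writes the trace as $\sum_{j=1}^d \lambda_j/(\lambda_j+1)\le d$, whereas you substitute $\sum_i\bm\phi_i\bm\phi_i^\top=\Lambda_t-I$ to obtain $d-\Tr\bigl((\Lambda_t)^{-1}\bigr)$; these are the same computation, and your version is arguably a touch cleaner since it avoids the explicit eigendecomposition.
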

\begin{proof}
We have $ \sum_{i=1}^{t}\bm\phi_i^\top(\Lambda_t)^{-1}\bm\phi_i=\sum_{i=1}^{t}\Tr(\bm\phi_i^\top(\Lambda_t)^{-1}\bm\phi_i)=\Tr((\Lambda_t)^{-1}\sum_{i=1}^{t}\bm\phi_i\bm\phi_i^{\top})$. After apply eigenvalue decomposition, we have $\sum_{i=1}^{t}\bm\phi_i\bm\phi_i^\top =\mathbf{U}\text{diag}(\lambda_1, \ldots, \lambda_d)$ and $\Lambda_t=\mathbf{U}\text{diag}(\lambda_1+1, \ldots, \lambda_d+1)$. Thus $\sum_{i=1}^{t}\bm\phi_i^\top(\Lambda_t)^{-1}\bm\phi_i=\sum_{i=1}^{d}\frac{\lambda_i}{\lambda_i}\leq d$.

\end{proof}

\begin{lemma}
\label{lemma:gram_bound}
\citep{abbasi2011improved} Let $\{\bm\phi_t\}_{t\geq 0}$ be a bounded sequence in $\mathbb{R}^d$ satisfying $\sup_{t\geq 0}\left\lVert \bm\phi_t\right\rVert\leq 1$. Let $\Lambda_0\in\mathbb{R}^{d\times d}$ be a positive definite matrix. For any $t\geq 0$, we define $\Lambda_t=\Lambda_0+\sum_{j=1}^{t}\bm\phi_j^\top \bm\phi_j$. Then if the smallest eigenvalue of $\Lambda_0$ satisfies $\lambda_{\min}(\Lambda_0)\geq 1$, we have
\begin{align*}
    \log\left[\frac{\det(\Lambda_t)}{\det(\Lambda_0)}\right]\leq\sum_{j=1}^{t}\bm\phi_j^\top\Lambda^{-1}_{j-1}\bm\phi_j\leq 2\log\left[\frac{\det(\Lambda_t)}{\det(\Lambda_0)}\right].
\end{align*}
\end{lemma}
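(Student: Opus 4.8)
The plan is to telescope the determinant ratio via the rank-one determinant update lemma, convert it to a sum of logarithms, and then compare term by term with the target quantity using two elementary scalar inequalities for $\log(1+x)$.

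First I would write $\Lambda_j=\Lambda_{j-1}+\bm\phi_j\bm\phi_j^\top$ for $j\geq 1$ and apply the matrix determinant lemma
\begin{align*}
\det(\Lambda_{j-1}+\bm\phi_j\bm\phi_j^\top)=\det(\Lambda_{j-1})\bigl(1+\bm\phi_j^\top\Lambda_{j-1}^{-1}\bm\phi_j\bigr),
\end{align*}
which is valid since each $\Lambda_{j-1}$ is positive definite (it dominates $\Lambda_0\succeq I$). Telescoping over $j=1,\ldots,t$ gives $\det(\Lambda_t)/\det(\Lambda_0)=\prod_{j=1}^{t}(1+x_j)$ with $x_j:=\bm\phi_j^\top\Lambda_{j-1}^{-1}\bm\phi_j\geq 0$, hence $\log[\det(\Lambda_t)/\det(\Lambda_0)]=\sum_{j=1}^{t}\log(1+x_j)$.

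For the left inequality I would use $\log(1+x)\leq x$ for all $x\geq 0$, which yields $\log[\det(\Lambda_t)/\det(\Lambda_0)]=\sum_{j}\log(1+x_j)\leq\sum_{j}x_j=\sum_{j}\bm\phi_j^\top\Lambda_{j-1}^{-1}\bm\phi_j$. For the right inequality, the key step --- and the only place the hypothesis $\lambda_{\min}(\Lambda_0)\geq 1$ enters --- is the per-step bound $x_j\leq 1$: since $\Lambda_{j-1}=\Lambda_0+\sum_{i=1}^{j-1}\bm\phi_i\bm\phi_i^\top\succeq\Lambda_0\succeq I$, we have $\Lambda_{j-1}^{-1}\preceq I$, so $x_j\leq\left\lVert\bm\phi_j\right\rVert^2\leq 1$. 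On $[0,1]$ one has the elementary bound $x\leq 2\log(1+x)$ (both sides vanish at $0$, and $\tfrac{d}{dx}[2\log(1+x)-x]=\tfrac{2}{1+x}-1\geq 0$ for $x\leq 1$), so applying it term by term gives $\sum_j x_j\leq 2\sum_j\log(1+x_j)=2\log[\det(\Lambda_t)/\det(\Lambda_0)]$, completing the proof.

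I expect no genuine obstacle: this is the standard \emph{elliptical potential lemma}. The only point worth flagging is that the normalization $\sup_{t}\left\lVert\bm\phi_t\right\rVert\leq 1$ together with $\lambda_{\min}(\Lambda_0)\geq 1$ is precisely what forces $x_j\leq 1$, which in turn is what licenses the crude inequality $x\leq 2\log(1+x)$; without these hypotheses the ratio between $\sum_j x_j$ and the log-determinant need not be bounded by a constant.
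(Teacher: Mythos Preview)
Your argument is correct and is exactly the standard elliptical potential argument from \cite{abbasi2011improved}: the matrix determinant lemma plus telescoping reduces the claim to the scalar bounds $\log(1+x)\leq x$ and $x\leq 2\log(1+x)$ on $[0,1]$, with the hypotheses $\lambda_{\min}(\Lambda_0)\geq 1$ and $\sup_t\lVert\bm\phi_t\rVert\leq 1$ used precisely to ensure $x_j\in[0,1]$. The paper does not supply its own proof of this lemma---it simply quotes it as an auxiliary result from \cite{abbasi2011improved}---so your write-up is in fact more detailed than what appears here, and there is nothing to compare against beyond the cited source, whose proof your proposal faithfully reproduces.
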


\section{Details of the Experiments}\label{apx:experioment}
\subsection{Synthetic Linear MDP Construction}
The MDP has $S=15$ states, $A=7$ actions, $H=10$, $d=10$, and $T=20000$, and 5 special chains. The states are denoted $s_1,\ldots,s_S$, and the actions are denoted $a_1,\ldots,a_A$. We first construct the known feature $\bm \phi$. Intuitively, $\bm \phi$ represents the transition from $\mathcal S\times\mathcal A$ space to $d$-dim space. We let the special chains have the correct transition to the $d$-dim space while other parts have random transition. This special transition will later be connected with $\bm \mu_{h,k}$, (i.e., the transition from $d$-dim space to $\mathcal S$ space) to form the chain. A special property of the construction is at any episode $k$, the transition function only has one connected chain and such chain is similar to combination lock. The agent must find this unique chain to achieve good behavior.

We let feature $\bm \phi$ have ``one-hot'' form (each $(s,a)$ deterministically transits to a latent state in $d$-dim space), and satisfy the following:
\begin{enumerate}
\item For special chain $i=1,\ldots,5$, we have $\bm \phi(s_i,a_i)[i] = 1$ and $\bm \phi(s_i,a_i)[n] = 0, n\neq i$. For $i=1,\ldots,5$ and $j\in[A],j\neq i$, we have $\bm \phi(s_i,a_j)[n] = 1$ and $\bm \phi(s_i,a_j)[l] = 0, l\in[S],l\neq n$, where $k$ is uniformly drawn from $1,\ldots,i-1,i+1,\ldots,d$. 

\item For normal chain $i=6,\ldots,S$, and $j\in[A]$  we have $\bm \phi(s_i,a_j)[n] = 1$, and $\bm \phi(s_i,a_j)[l] = 0, l\in[S], l\neq n$,  where $n$ is uniformly drawn from $1,\ldots,d$.
\end{enumerate}
Now consider designing $\bm \mu_{h,k}$. As mentioned in the main text, we have a set of 5 different MDPs, and they have unique but different good chains. The \textit{abruptly-changing} environment abruptly switches the good chain (or linear MDP) periodically every 100 episodes, whereas the \textit{gradually-changing} environment switches the good chain (or linear MDP) continuously from one to another within every 100 episodes. In the following construction, we only design those 5 different MDPs for the \textit{abruptly-changing} environment since we only need to take convex combination of different MDPs in the  \textit{gradually-changing} case. For each of those 5 linear MDPs, we only let one special chain be connected, and it becomes the good chain. Other special chains are broken in the $\bm \mu_{h,k}$ part (i.e., the part transits from $d$-dim space to $\mathcal S$ space). When the good chain is $g\in[5]$ in episode $k$, we let $\bm \mu_{h,k}$ satisfy the following: 
\begin{enumerate}
    \item For good chain $g$, $\forall h\in[H]$, we have $\bm \mu_{h,k}(s_g)[g] = 0.99$, $\bm \mu_{h,k}(s_{g+1})[g] = 0.01$, and $\bm \mu_{h,k}(s_{n})[g] = 0, n\neq g,g+1$. 
\item For other special but not good chain $\check{g}\in[5], \check{g}\neq g$, $\forall h\in[H]$, we have $\bm \mu_{h,k}(s_{\check{g}})[\check{g}] = 0.01$, $\bm \mu_{h,k}(s_{\check{g}+1})[\check{g}] = 0.99$, and $\bm \mu_{h,k}(s_{n})[\check{g}] = 0, n\in[S],n\neq \check{g},\check{g}+1$.

\item For all normal chain $i=6,\ldots,d$, $\forall h\in[H]$, we randomly sample two states $n_1$ and $n_2$, and let $\bm \mu_{h,k}(s_{n_1})[i] = 0.8$, $\bm \mu_{h,k}(s_{n_2})[i] = 0.2$, and $\bm \mu_{h,k}(s_{n})[i] = 0, n\in[S], n\neq n_1, n_2$.
\end{enumerate}
Finally we construct $\bm \theta_{h,k}$, which is related to the reward function. To ensure $g\in[5]$ is a good chain, we place a huge reward at the end of the chain but 0 reward for the rest of the chain. In addition, we put small intermediate rewards on sub-optimal actions. Specifically, when $g$ is the good chain at episode $k$, $\bm \theta_{h,k}$ is as follows:
\begin{enumerate}
    \item For good chain $g$, we have $\bm \theta_{h,k}[g] = 0,\forall h\in[H-1]$, and $\bm \theta_{H,k}[g] = 1$.

\item For all other chains $i\in[d], i \neq g$, $\forall h\in[H]$, we let $\bm \theta_{h,k}[i]$ uniformly sample from $[0.005, 0.008]$.
\end{enumerate}
In our construction, it is straightforward to verify that we have a valid transition function, and the transition function and reward function together satisfy the combination lock type construction. Notice that sometimes we refer to the normal chain in the $\mathcal S$ space and sometimes in the $d$-dim space. The reason is that a special chain must be connected in both parts ($\mathcal S\times\mathcal A$ space to $d$-dim space and $d$-dim space to $\mathcal S$ space), so we can break any part to make it a normal chain.
\subsection{Hardware Details}
\label{sec:hardware}
All experiments are performed on a Macbook Pro with 8 cores,  16 GB of RAM.

\end{document}